\documentclass[dvipsnames]{article}
\usepackage[nonatbib,final]{neurips_2021}

\usepackage[utf8]{inputenc} %
\usepackage[T1]{fontenc}    %
\usepackage{xcolor}
\usepackage{textcomp}
\usepackage{float}
\usepackage{gensymb}
\usepackage[colorlinks=true, linkcolor=red, urlcolor=blue, citecolor=blue, anchorcolor=blue]{hyperref}
\usepackage{booktabs}       %
\usepackage{amsfonts}       %
\usepackage{nicefrac}       %
\usepackage{microtype}      %
\usepackage{mathtools}
\usepackage{lipsum}
\usepackage{graphicx}
\usepackage{cleveref}
\usepackage[shortlabels]{enumitem}
\usepackage{tikz}
\usetikzlibrary{shapes.geometric, arrows, shadows, bayesnet}
\usepackage{wrapfig}
\usepackage{subcaption}
\usepackage[numbers, compress, sort]{natbib}
\usepackage{tabularx}

\usepackage{pdflscape}
\usepackage{bm}
\usepackage{titlesec}
\usepackage{amsmath, amsthm}
\usepackage{thmtools, thm-restate}
\usepackage{notation}
\usepackage{wrapfig}
\usepackage{colortbl}

\titlespacing*{\section}{0pt}{0.2ex plus .1ex minus .1ex}{0.1ex plus .1ex minus .1ex}
\titlespacing*{\subsection}{0pt}{0.1ex plus .1ex minus .1ex}{0.05ex plus .05ex minus .05ex}
\setlist[itemize]{leftmargin=*,itemsep=0em}
\setlist[enumerate]{leftmargin=*,itemsep=0em}
\crefname{figure}{Fig.}{Figs.}
\crefname{definition}{Defn.}{Defns.}
\crefname{corollary}{Corollary}{Corollaries}
\crefname{proposition}{Prop.}{Props.}
\crefname{theorem}{Thm.}{Thms.}
\crefname{remark}{Remark}{Remarks}
\crefname{principle}{Principle}{Principles}
\crefname{lemma}{Lemma}{Lemmas}
\crefname{claim}{Claim}{Claims}
\crefname{table}{Tab.}{Tabs.}
\crefname{section}{\S}{\S\S}
\crefname{subsection}{\S}{\S\S}
\crefname{subsubsection}{\S}{\S\S}
\crefname{assumption}{Assumption}{Assumptions}
\newcommand{\xshift}{0em}
\newcommand{\yshift}{3em}

\newcommand{\J}[1]{{\color{ForestGreen}~\textbf{JvK}: #1}}

\newcommand{\michel}[1]{{\color{red}~\textbf{Michel}: #1}}

\usepackage{pifont}
\newcommand{\cmark}{\ding{51}}%
\newcommand{\xmark}{\ding{55}}%
\usepackage{multirow}

\title{Data augmentations in self-supervised learning isolate content}
\title{Self-supervised learning with data augmentations\\ can isolate content from style}
\title{Self-Supervised Learning with Data Augmentations\\ Provably Isolates Content from Style}

\author{
Julius von K\"ugelgen\thanks{Joint first author. $^\dagger$Joint senior author. Correspondence to: \texttt{jvk@tue.mpg.de}}\, $^{1,2}$
\And
 Yash Sharma\footnotemark[1]\, $^{3,4}$
 \And
 Luigi Gresele\footnotemark[1]\, $^{1}$
\AND
\textbf{Wieland Brendel} $^3$
\quad 
\textbf{Bernhard Sch\"olkopf}\footnotemark[2]\, $^1$
\quad
\textbf{Michel Besserve}\footnotemark[2]\, $^1$
\quad 
\textbf{Francesco Locatello}\footnotemark[2]\, $^{5}$
\\[1em]
$^1$ Max Planck Institute for Intelligent Systems T\"ubingen
\quad \quad  
$^2$ University of Cambridge\\[.5em]
$^{3}$ Tübingen AI Center, University of T\"ubingen
\quad    $^{4}$ IMPRS for Intelligent Systems
\quad    $^{5}$ Amazon %
}

\begin{document}

\maketitle

\begin{abstract}
\let\thefootnote\relax\footnotetext{Code available at: \href{https://www.github.com/ysharma1126/ssl_identifiability}{https://www.github.com/ysharma1126/ssl\_identifiability}}
Self-supervised representation learning has shown remarkable success in a number of domains.
A common practice is to perform data augmentation via hand-crafted transformations intended to leave the semantics of the data invariant. 
We seek to understand the empirical success of this approach from a theoretical perspective.
We formulate the augmentation process as a latent variable model by postulating a partition of the latent representation into a \textit{content} component, which is assumed invariant to augmentation, and a \textit{style} component, which is allowed to change.
Unlike prior work on disentanglement and independent component analysis, we allow for both nontrivial statistical and causal dependencies in the latent space.
We study the identifiability of the latent representation based on pairs of views of the observations
and prove sufficient conditions that allow us to identify the invariant content partition up to an invertible mapping in both generative and discriminative settings.
We find numerical simulations with dependent latent variables are consistent with our theory.
Lastly, we
introduce \textit{Causal3DIdent}, a dataset of high-dimensional, visually complex images with rich causal dependencies, 
which we use to study the effect of data augmentations performed in practice.

\end{abstract}

\setcounter{footnote}{0}

\section{Introduction}
\label{sec:introduction}
Learning good representations of high-dimensional observations
from large amounts of unlabelled data
is widely recognised as an important step for
more capable and data-efficient learning systems~\cite{bengio2013representation,lake2017building}.
Over the last decade, \textit{self-supervised learning} (SSL)
has emerged as the dominant paradigm for such unsupervised representation learning~\cite{wu2018unsupervised,oord2018representation,henaff2020data,tian2019contrastive,he2019momentum,chen2020simple, grill2020byol, chen2020exploring,agrawal2015learning,doersch2015unsupervised,wang2015unsupervised,vincent2008extracting,noroozi2016unsupervised}.
The main idea behind SSL is to extract a supervisory signal from unlabelled observations by leveraging known structure of the data, which allows for the application of supervised learning techniques. 
A common approach
is to directly predict some
part of the observation from 
another part (e.g., future from past, or original from corruption), thus forcing the model to learn a meaningful representation in the process.
While this technique has shown remarkable success in natural language processing~\cite{collobert2011natural,mikolov2013distributed,pennington2014glove,logeswaran2018efficient,devlin2019bert,radford2018improving,liu2019roberta,GPT3}
and speech recognition~\cite{baevski2020wav2vec,Baevski2020vqwav2vec,ravanelli2020multi,schneider2019wav2vec}, where
a finite dictionary allows one to output a distribution over the missing part, such \textit{predictive} SSL methods are not easily applied to continuous or high-dimensional domains such as vision.
Here, 
a common approach is to learn a \textit{joint embedding} of similar observations or \textit{views} such that their representation is close~\cite{becker1992self,hadsell2006dimensionality,chopra2005learning,bromley1993signature}.
Different views can come, for example, from different modalities (text \& speech; video \& audio) or time points.
As still images lack such multi-modality or temporal structure, recent advances in representation learning have relied on generating similar views by means of \textit{data augmentation}. 
In order to be useful, data augmentation is thought to require the transformations applied to generate additional views to be generally chosen to \textit{preserve the semantic characteristics} of an observation, while changing other ``nuisance'' aspects.
While this intuitively makes sense and has shown remarkable empirical results, the success of data augmentation techniques in practice is still not very well understood from a theoretical perspective---despite some efforts~\cite{ChaSch02,dao2019kernel,chen2020group}.
In the present work, we seek to better understand the empirical success of SSL with data augmentation by formulating the generative process
as a latent variable model (LVM) and studying \textit{identifiability} of the representation, i.e., under which conditions the ground truth latent factors can provably be inferred from the data~\cite{lehmann2006theory}. 

\textbf{Related work and its relation to the current.}
Prior work on unsupervised representation learning from an LVM perspective often postulates \textit{mutually independent latent factors}: this independence assumption is, for example, at the heart of  independent component analysis (ICA)~\cite{comon1994independent,hyvarinen2000independent} and disentanglement~\cite{bengio2013representation,higgins2017beta,kim2018disentangling,burgess2018understanding,kumar2018variational,chen2018isolating}.
Since it is 
impossible to
identify the true latent factors without any supervisory signal in the general nonlinear case~\cite{hyvarinen1999nonlinear,locatello2019challenging}, recent work has turned to weakly- or self-supervised approaches which leverage additional information in the form of multiple views~\cite{gresele2019incomplete,locatello2020weakly,shu2019weakly,zimmermann2021contrastive}, auxiliary variables~\cite{hyvarinen2019nonlinear,khemakhem2020variational}, or temporal structure~\cite{halva2020hidden,hyvarinen2016unsupervised,hyvarinen2017nonlinear,klindt2020slowvae}.
To identify or disentangle the individual independent latent factors, it is typically assumed
that there is a chance
that \textit{each factor changes} across views,
environments, or
time points.
\newcommand{\dependencecolor}{Plum}
\newcommand{\decodingcolor}{Orange}
\newcommand{\stylechangecolor}{ForestGreen}
\newcommand{\originalcolor}{Blue}
\newcommand{\augmentationcolor}{Maroon}
\begin{wrapfigure}[]{r}{0.375\textwidth}
\centering
    \vspace{-1.5em}
    \begin{tikzpicture}
        \centering
        \node (c) [latent] {$\cb$};
        \node (s) [latent, left=of c] {$\sb$};
        \node (s') [latent, right=of c] {$\sbt$};
        \node (x) [obs, below=of c, xshift=-2.75em] {$\xb$};
        \node (x') [obs, below=of c, xshift=2.75em] {$\xbt$};
        \edge[color=\dependencecolor,thick]{c}{s};
        \path[->, color=\stylechangecolor,thick] (s) edge[bend right=-40] node[yshift=.5em] {\textbf{style change}} (s');
        \edge[color=\decodingcolor,thick]{c}{x,x'};
        \edge[color=\decodingcolor,thick]{s}{x};
        \edge[color=\decodingcolor,thick]{s'}{x'};
        \plate[inner sep=0.3em,yshift=0.2em,dashed,color=\augmentationcolor,thick] {augmented}{(c) (s') (x')}{\textcolor{\augmentationcolor}{\textbf{augmented}}}; 
        \tikzset{plate caption/.style={caption, node distance=0, inner sep=0pt, below left=5pt and 0pt of #1.south,text height=1.2em,text depth=0.3em}}
        \plate[inner sep=0.2em,yshift=0.1em,dashed,color=\originalcolor,thick] {original}{(c) (s) (x)}{\textcolor{\originalcolor}{\textbf{original}}};
    \end{tikzpicture}
    \caption{
    \small 
    \textbf{Overview of our problem formulation.} We partition the latent variable $\zb$ into content~$\cb$ and style~$\sb$, and allow for \textcolor{\dependencecolor}{statistical and causal dependence of style on content}. We assume that \textcolor{\stylechangecolor}{only style changes between} \textcolor{\originalcolor}{the original view} $\xb$ and \textcolor{\augmentationcolor}{the augmented view} $\xbt$, i.e., they are obtained by \textcolor{\decodingcolor}{applying the same deterministic function} $\fb$ to $\zb=(\cb,\sb)$ and $\zbt=(\cb,\sbt)$.}
    \label{fig:our_assumption}
    \vspace{-.75em}
\end{wrapfigure}
Our work---being directly motivated by common practices in SSL with data augmentation---differs from these works in the following two key aspects (see~\cref{fig:our_assumption} for an overview).
First, we do not assume independence and instead \textit{allow for both nontrivial statistical and causal relations between latent variables}.
This is in line with a recently proposed~\cite{scholkopf2019causality} shift towards causal representation learning~\cite{scholkopf2021toward,yang2020causalvae,suter2019robustly,shen2020disentangled,mitrovic2020representation,von2020towards,leeb2020structural,lu2021nonlinear,gresele2021independent}, motivated by the fact that many underlying variables of interest may not be independent but causally related to each other.\footnote{E.g.,~\cite{klindt2020slowvae}, Fig.~11 where dependence between latents was demonstrated for multiple natural video data sets.}
Second, instead of a scenario wherein all latent factors may change as a result of augmentation, we assume a \textit{partition of the latent space} into two
blocks: a \textit{content} block which is shared or \textit{invariant} across different augmented views, and a \textit{style} block that \textit{may change}. %
\looseness-1 This is aligned with the notion that  augmentations leave certain semantic aspects (i.e., content) intact and only affect style, and is thus a more appropriate assumption for studying SSL.
In line with earlier work~\cite{locatello2020weakly,gresele2019incomplete,zimmermann2021contrastive,klindt2020slowvae,locatello2019challenging,khemakhem2020variational,hyvarinen1999nonlinear,hyvarinen2019nonlinear,hyvarinen2016unsupervised}, we focus on the setting of continuous ground-truth latents, though we believe our results to hold more broadly.

\textbf{Structure and contributions.}
Following a review of SSL with data augmentation
and identifiability theory~(\cref{sec:background_ICA}), we formalise the process of data generation and augmentation as an LVM with content and style variables~(\cref{sec:problem_formulation}).
We then establish identifiability results of the invariant content partition~(\cref{sec:theory}),
validate our theoretical insights experimentally~(\cref{sec:experiments}), and discuss our findings and their limitations in the broader context of SSL with data augmentation~(\cref{sec:discussion}).
We highlight the following contributions:%
\begin{itemize}[itemsep=0pt, topsep=0pt]
    \item we prove that SSL with data augmentations identifies the invariant content partition of the representation in generative~(\Cref{thm:main}) and discriminative learning with invertible~(\Cref{thm:CL}) and non-invertible encoders with entropy regularisation~(\Cref{thm:CL_MaxEnt});
    in particular, \Cref{thm:CL_MaxEnt} provides a theoretical justification for the empirically observed effectiveness of contrastive SSL methods  that use data augmentation and InfoNCE~\cite{oord2018representation} as an objective, such as \texttt{SimCLR}~\cite{chen2020simple};
    \item we show that our theory is consistent with results in simulating statistical dependencies within blocks of content and style variables, as well as with style causally dependent on content~(\cref{sec:experiment_1_numerical_simulation}); 
    \item we introduce \textit{Causal3DIdent}, a 
    dataset of 3D objects 
    which allows for the study of identifiability in a causal representation learning setting, and use it to perform a systematic study of data augmentations used in practice, %
    yielding novel insights on what particular data augmentations are truly isolating as invariant content and discarding as varying style when applied~(\cref{sec:experiment_2_causal3dident}). 
\end{itemize}

\section{Preliminaries and background}
\label{sec:background}

\textbf{Self-supervised representation learning with data augmentation.}
\label{sec:background_augmentation}
Given an unlabelled dataset of observations (e.g., images) $\xb$, data augmentation techniques proceed as follows.
First, a set of observation-level transformations $\tb\in\Tcal$ are specified together with a distribution $p_\tb$ over $\Tcal$. 
Both $\Tcal$ and $p_\tb$ are typically designed using human intelligence and domain knowledge with the intention of \textit{not changing the semantic characteristics} of the data (which arguably constitutes a form of weak supervision).\footnote{Note that recent work has investigated automatically discovering good augmentations~\cite{cubuk2019autoaugment,cubuk2020randaugment}.}
For images, for example, a common choice for $\Tcal$ are combinations of random crops~\citep{szegedy2015going}, horizontal or vertical flips, blurring, colour distortion~\citep{howard2013improvements,szegedy2015going}, or cutouts~\citep{devries2017improved}; and $p_\tb$ is a distribution over the parameterisation of these transformations, e.g., the centre and size of a crop~\cite{chen2020simple,devries2017improved}.
For each observation $\xb$, a pair of transformations $\tb,\tb'
\sim
p_\tb$ is sampled and applied separately to $\xb$ to generate a pair of augmented views $(\xbt,\xbt')=(\tb(\xb),\tb'(\xb))$.

The joint-embedding approach to SSL then uses a pair of encoder functions $(\gb,\gb')$, i.e. deep nets, to map the pair $(\xbt,\xbt')$ to a typically lower-dimensional representation $(\zbt,\zbt')=(\gb(\xbt),\gb'(\xbt'))$.
Often, the two encoders are either identical, $\gb=\gb'$, or directly related (e.g., via shared parameters or asynchronous updates). 
Then, the encoder(s) $(\gb,\gb')$ are trained  such that the representations $(\zbt,\zbt')$ are ``close'', i.e., such that $\text{sim}(\zbt,\zbt')$ is large for some  similarity metric $\text{sim}(\cdot)$, e.g., the cosine similarity~\citep{chen2020simple, zimmermann2021contrastive}, or negative L2 norm~\citep{zimmermann2021contrastive}.
The advantage of 
directly optimising for similarity in representation space over generative alternatives is that reconstruction can be very challenging for high-dimensional data.
The disadvantage
is the problem of \textit{collapsed representations}.\footnote{If the only goal is to make representations of augmented views similar, a degenerate solution which simply maps any observation to the origin trivially achieves this goal.}
To avoid 
collapsed representations and force the encoder(s) to learn a meaningful representation, two main
families of 
approaches have been used: (i) \textit{contrastive learning} (CL)~\citep{oord2018representation,he2019momentum,chen2020simple,wu2018unsupervised,henaff2020data,tian2019contrastive}; and (ii) \textit{regularisation-based} SSL~\cite{grill2020byol,chen2020exploring,zbontar2021barlow}.

The idea behind CL is to not only learn similar representations for augmented views $(\xbt_i, \xbt'_i)$ of the same $\xb_i$, or \textit{positive pairs}, but to also use other observations $\xb_j$ $(j\neq i)$ to contrast with, i.e., to enforce a dissimilar representation across \textit{negative pairs} $(\xbt_i, \xbt'_j)$.
In other words, CL pulls representations of positive pairs together, and pushes those of negative pairs apart.
Since both aims cannot be achieved simultaneously with a constant representation, collapse is avoided.
A popular CL objective function (used, e.g., in \texttt{SimCLR}~\cite{chen2020simple}%
)
is InfoNCE~\cite{oord2018representation}
(based on noise-contrastive estimation
~\cite{gutmann2010noise,Gutmann12JMLR}):%
\begin{equation}
\label{eq:InfoNCE_objective}
\textstyle
    \Lcal_{\text{InfoNCE}}
    (\gb;\tau,K)
    =
    \EE_{\{\xb_i\}_{i=1}^K \sim p_\xb}
    \Big[
    -
    \sum_{i=1}^K
    \log 
    \frac{
    \exp\{\text{sim}(\zbt_i,\zbt'_i)/\tau\}
    }
    {
    \sum_{j=1}^K
    \exp\{\text{sim}(\zbt_i,\zbt'_j)/\tau\}
    }
    \Big]
\end{equation}
where $\zbt=\EE_{\tb\sim p_\tb}[\gb(\tb(\xb))]$, $\tau$ is a temperature, and $K-1$ is the number of negative pairs.
InfoNCE~\eqref{eq:InfoNCE_objective} has an interpretation as multi-class logistic regression, and
lower bounds the mutual information across similar views $(\zbt,\zbt')$---a common representation learning objective~\cite{tschannen2019mutual,poole2019variational,hjelm2018learning,bachman2019learning,linsker1988self,linsker1989application,cardoso1997infomax,lee1999independent,bell1995information}.
Moreover,~\eqref{eq:InfoNCE_objective} can be interpreted as \textit{alignment} (numerator) and \textit{uniformity} (denominator) terms, the latter constituting a nonparametric entropy estimator of the representation as $K\rightarrow\infty$~\cite{wang2020understanding}.
{CL with InfoNCE can thus be seen as alignment of positive pairs with (approximate) entropy regularisation.}

Instead of using negative pairs, as in CL, a set of recent SSL methods only optimise for alignment and avoid collapsed representations through different forms of regularisation. 
For example, \texttt{BYOL}~\cite{grill2020byol} and \texttt{SimSiam}~\cite{chen2020exploring} rely on ``architectural regularisation'' in the form of moving-average updates for a separate ``target'' net $\gb'$ (\texttt{BYOL} only) or
a stop-gradient operation (both).
\texttt{BarlowTwins}~\cite{zbontar2021barlow}, on the other hand, optimises the cross correlation  between $(\zbt,\zbt')$ to be close to the identity matrix, thus enforcing redundancy reduction (zero off-diagonals) in addition to  alignment (ones on the diagonal).

\textbf{Identifiability of learned representations.}
\label{sec:background_ICA}
In this work, we address the question of whether SSL with data augmentation 
can reveal or uncover properties of the underlying data generating process.
Whether a representation learned from observations
can be expected to match the true underlying latent factors---up to acceptable ambiguities and
subject to suitable assumptions on the generative process and inference model---is captured by the notion of identifiability~\cite{lehmann2006theory}.

Within 
representation learning, identifiability has mainly been studied in the framework of (nonlinear) ICA which assumes a model of the form $\xb=\fb(\zb)$ and aims to recover the independent latents, or \textit{sources}, $\zb$, typically up to permutation or element-wise transformation.
A crucial negative
result states that, with i.i.d.\ data and without further assumptions,
this is fundamentally impossible~\cite{hyvarinen1999nonlinear}.
However, recent breakthroughs have shown that
identifiability can be achieved if
an auxiliary variable (e.g., a time stamp or environment index) renders the sources \textit{conditionally} independent~\cite{hyvarinen2016unsupervised, hyvarinen2017nonlinear, hyvarinen2019nonlinear, halva2020hidden}.
These methods rely on constructing positive and negative pairs using the auxiliary variable and learning a representation with CL.
\looseness-1 This development has sparked a renewed interest in identifiability in the context of deep representation learning~\cite{khemakhem2020variational,khemakhem2020ice,klindt2020slowvae, Sorrenson2020Disentanglement, zimmermann2021contrastive,locatello2020weakly,roeder2020linear,shu2019weakly}.

Most closely related to SSL with data augmentation are works which study identifiability when 
given 
a second view $\xbt$ of an observation $\xb$, resulting from a modified version $\zbt$ of the underlying latents or sources $\zb$~\cite{gresele2019incomplete,richard2020modeling,locatello2020weakly,shu2019weakly,zimmermann2021contrastive,klindt2020slowvae}.
Here, $\zbt$ is either an element-wise  corruption of $\zb$~\cite{gresele2019incomplete,richard2020modeling,zimmermann2021contrastive,klindt2020slowvae} or may share a random subset of its components~\cite{locatello2020weakly,shu2019weakly}.
Crucially, all previously mentioned works assume that \textit{any} of the independent latents (are allowed to) change, and aim to identify the individual factors.
However, in the context of SSL with data augmentation, where the semantic (content) part of the representation is intended to be shared between views, this assumption does not hold.

\section{Problem formulation}
\label{sec:problem_formulation}

We specify our problem setting by formalising the processes of data generation and augmentation.
We take a latent-variable model perspective and
assume that observations $\xb$ (e.g., images) are generated by a \emph{mixing} function $\fb$ which takes a latent code $\zb$ as input.
Importantly, we describe the augmentation process through changes in this latent space as captured by a conditional distribution $p_{\zbt|\zb}$, as opposed to traditionally describing the transformations $\tb$ as acting directly at the observation level.

Formally, let $\zb$ be a continuous r.v.\ taking values in an open, simply-connected $n$-dim.\ \textit{representation space} $\Zcal\subseteq\RR^n$ with associated probability 
density $p_\zb$.
Moreover, let $\fb:\Zcal\rightarrow\Xcal$ be a \textit{smooth and invertible} mapping to an \textit{observation space} $\Xcal\subseteq \RR^d$
and let $\xb$ be the continuous r.v.\ defined as $\xb=\fb(\zb)$.\footnote{While $\xb$ may be high-dimensional $n\ll d$, invertibility of $\fb$ implies that $\Xcal$ is an $n$-dim.\ sub-manifold of~$\RR^d$.}
The generative process for the dataset of original observations of $\xb$ is thus given by:%
\begin{align}
\textstyle
\label{eq:generative_process_original}
    \zb \sim p_\zb,
    \quad \quad \quad \quad 
    \xb =\fb(\zb).
\end{align}%
Next, we formalise the data augmentation process.
As stated above, we take a representation-centric view, i.e., we assume that an augmentation $\xbt$ of the original $\xb$ is obtained by applying the same mixing or rendering function $\fb$ to a modified representation $\zbt$ which is (stochastically) related to the original representation $\zb$ of 
$\xb$.
Specifying the effect of data augmentation thus corresponds to specifying a conditional distribution $p_{\zbt|\zb}$ which captures the relation between $\zb$ and $\zbt$.

In terms of the transformation-centric view presented in~\cref{sec:background_augmentation}, we can view the modified representation $\zbt\in\Zcal$ as obtained by applying $\fbinv$ to a transformed observation $\xbt=\tb(\xb)\in\Xcal$ where $\tb\sim p_\tb$, i.e., $\zbt=\fbinv(\xbt)$.
The conditional distribution $p_{\zbt|\zb}$ in the representation space can thus be viewed as being induced by the distribution $p_\tb$ over transformations applied at the observation level.\footnote{We investigate this correspondence between changes in observation and latent space empirically in~\cref{sec:experiments}.}

We now encode the notion that the set of transformations $\Tcal$ used for augmentation is typically chosen such that any transformation $\tb\in\Tcal$ leaves certain aspects of the data invariant.
To this end, we assume that \textit{the representation $\zb$ can be uniquely partitioned into two disjoint parts}:%
\begin{enumerate}[label=(\roman*), topsep=-3pt,itemsep=0pt]
    \item an \textit{invariant} part $\cb$ which will \textit{always be shared} across $(\zb,\zbt)$, and which we refer to as \textit{content};
    \item a \textit{varying} part $\sb$ which \textit{may
    change} across $(\zb,\zbt)$, and which we refer to as \textit{style}.
\end{enumerate}%
We assume that $\cb$ and $\sb$ take values in content and style subspaces $\Ccal\subseteq \RR^{n_c}$ and 
$\Scal\subseteq\RR^{n_s}$, respectively, i.e., $n=n_c+n_s$ and $\Zcal=\Ccal\times\Scal$. 
W.l.o.g., we let $\cb$ corresponds to the first $n_c$ dimensions of~$\zb$:
\begin{equation*}
\textstyle
\label{eq:def_content_style}
    \zb = (\cb, \sb),
    \quad \quad \quad \quad 
    \cb := \zb_{1:n_c},
    \quad \quad \quad \quad 
    \sb := \zb_{(n_c+1):n},
\end{equation*}%
We formalise the process of data augmentation with content-preserving transformations by defining the conditional $p_{\zbt|\zb}$ such that only a (random) subset of the style variables change at a time.

\begin{assumption}[Content-invariance]
\label{ass:content_invariance}
The conditional density $p_{\zbt|\zb}$ over $\Zcal\times\Zcal$ takes the form
\begin{equation*}
\textstyle
    p_{\zbt|\zb}(\zbt|\zb)=\delta(\cbt-\cb)p_{\sbt|\sb}(\sbt|\sb)
\end{equation*}
for some continuous density $p_{\sbt|\sb}$ on $\Scal\times\Scal$, where $\delta(\cdot)$ is the Dirac delta function, i.e., $\cbt=\cb$ a.e.
\end{assumption}

\begin{assumption}[Style changes]
\label{ass:style_changes}
Let $\Acal$ be the set of subsets of style variables $A\subseteq\{1, ..., n_s\}$ and let $p_A$ be a distribution on $\Acal$.
Then, the style conditional $p_{\sbt|\sb}$ is obtained via%
\begin{equation*}
\textstyle
    A\sim p_A,
    \quad \quad \quad \quad 
    p_{\sbt|\sb,A}(\sbt|\sb,A) = \delta(\sbt_{\Ac} - \sb_{\Ac}) p_{\sbt_\A|\sb_\A}(\sbt_\A|\sb_\A)\, ,
\end{equation*}%
where $p_{\sbt_\A|\sb_\A}$ is a continuous density on $\Scal_A\times\Scal_A$, $\Scal_A\subseteq\Scal$ denotes the subspace of changing style variables specified by $A$, and $\Ac
=\{1,...,n_s\}\setminus A
$ denotes the complement of $A$.
\end{assumption}%
Note that Assumption~\ref{ass:style_changes} is less restrictive than assuming that all style variables need to change, since it also allows for only a (possibly different) subset of style variables to change for any given observation.
This is in line with the intuition that not all transformations affect all changeable (i.e., style) properties of the data: e.g., a colour distortion should not affect positional information, and, in the same vein, a (horizontal or vertical) flip should not affect the colour spectrum.

The generative process of an augmentation or transformed observation $\xbt$ is thus given by
\begin{equation}
\label{eq:generative_process_augmentation}
    \textstyle
    A\sim p_A,
    \quad \quad \quad 
    \zbt|\zb, A \sim p_{\zbt|\zb,A},
    \quad \quad \quad
    \xbt = \fb(\zbt).
\end{equation}
Our setting for modelling data augmentation differs from that commonly assumed in (multi-view) disentanglement and ICA in that \textit{we do not assume that the latent factors $\zb=(\cb,\sb)$ are mutually (or conditionally) independent}, i.e., we allow for \textit{arbitrary} (non-factorised) marginals $p_\zb$
in~\eqref{eq:generative_process_original}.\footnote{The recently proposed Independently \textit{Modulated} Component Analysis (IMCA)~\cite{khemakhem2020ice} extension of ICA is a notable exception, but only allows for trivial dependencies across $\zb$ in the form of a shared base measure.
}

\textbf{Causal interpretation: data augmentation as counterfactuals under soft style intervention.}
We now provide a causal account of the above data generating process by describing the (allowed) causal dependencies among latent variables using a 
structural causal model (SCM)~\cite{pearl2009causality}. 
As we will see, this leads to an interpretation of data augmentations as counterfactuals in the underlying latent SCM.
The assumption that $\cb$ stays invariant as $\sb$ changes is consistent with the view that content may causally influence style, $\cb\rightarrow\sb$, but not vice versa, see~\cref{fig:our_assumption}. \looseness-1 We therefore formalise their relation as:%
\begin{align*}
\textstyle
    \cb := \fb_\cb(\ub_\cb), \quad \quad \quad \quad
    \sb := \fb_\sb(\cb, \ub_\sb),
    \quad \quad\quad \quad
    (\ub_\cb, \ub_\sb)\sim p_{\ub_\cb}\times p_{\ub_\sb}
\end{align*}%
where $\ub_\cb, \ub_\sb$ are independent exogenous variables, and $\fb_\cb,\fb_\sb$ are deterministic functions.
The latent causal variables $(\cb, \sb)$ are subsequently decoded into observations $\xb=\fb(\cb, \sb)$.
Given a factual observation $\xbf=
\fb(\cbf, \sbf)$ which resulted from $(\ub_\cb^\texttt{F}, \ub_\sb^\texttt{F})$, 
we may ask the counterfactual question: 
    ``\textit{what would have happened if the style variables had been
    (randomly)
    perturbed, all else being equal%
    ?}''.
\looseness-1 Consider, e.g.,
a \textit{soft intervention}~\cite{eberhardt2007interventions} on~$\sb$, i.e., an intervention that changes the mechanism~$\fb_\sb$ to%
\begin{equation*}
\textstyle
   do(\sb:=\Tilde{\fb}_\sb(\cb, \ub_\sb, \ub_\A)), 
\end{equation*}%
where $\ub_\A$ is an additional source of stochasticity accounting for the randomness of the augmentation process ($p_A\times  p_{\sbt|\sb,A}$).
The resulting distribution over counterfactual observations $\xbcf
=\fb(\cbf, \sbcf)$ can be computed from the modified SCM by fixing the exogenous variables to their factual values and performing the soft intervention:
\begin{align*}
\textstyle
    \cb^\texttt{CF} := \cbf, \quad \quad \quad \quad
    \sb^\texttt{CF} := \Tilde{\fb}_\sb(\cbf, \ub_\sb^\texttt{F}, \ub_\A),
    \quad    \quad     \quad    \quad 
    \ub_\A\sim p_{\ub_\A}.
\end{align*}%
This aligns with our intuition and assumed problem setting of data augmentations as style corruptions.
We note that the notion of augmentation as (hard) style interventions is also at the heart of \texttt{ReLIC}~\cite{mitrovic2020representation}, a recently proposed, causally-inspired SSL regularisation term for instance-discrimination~\cite{hadsell2006dimensionality,wu2018unsupervised}.
However, \texttt{ReLIC} assumes independence between content and style and does not address identifiability.
{For another causal perspective on data augmentation in the context of domain generalisation, c.f.~\cite{ilse2021selecting}.}

\section{Theory: block-identifiability of the invariant content partition}
\label{sec:theory}
Our goal is to prove that we can identify the invariant content partition $\cb$ under a distinct, weaker set of assumptions, compared to existing results in
disentanglement and nonlinear ICA~\cite{zimmermann2021contrastive,gresele2019incomplete,locatello2020weakly,shu2019weakly,klindt2020slowvae}.
We stress again that our primary interest is not to identify or disentangle individual (and independent) latent factors $z_j$, but instead to separate content from style,
such that the content variables can be subsequently used for 
downstream tasks.
We first define this distinct notion of \textit{block-identifiability}.%
\begin{definition}[Block-identifiability]
\label{def:block-identifiability}
We say that the true content partition $\cb=\fbinv(\xb)_{1:n_c}$ is \emph{block-identified} by a function $\gb:\Xcal\rightarrow\Zcal$ if the inferred content partition $\cbh=\gb(\xb)_{1:n_c}$ contains \emph{all} and \emph{only} information about $\cb$, i.e., if there exists an \textit{invertible} function $\hb:\RR^{n_c}\rightarrow \RR^{n_c}$ s.t.\ $\cbh=\hb(\cb)$.%
\end{definition}%
\Cref{def:block-identifiability} is related to independent subspace analysis~\cite{hyvarinen2000emergence, le2011learning,theis2006towards,casey2000separation}, which also aims to identify blocks of random variables
as opposed to individual factors, though under an \textit{independence assumption across blocks}, and typically not within a multi-view setting as studied in the present work.

\subsection{Generative self-supervised representation learning}
First, we consider \textit{generative} SSL, i.e., fitting a generative model to pairs $(\xb,\xbt)$ of original and augmented views.\footnote{For notational simplicity, we present our theory for pairs $(\xb,
\xbt)$ rather than for two augmented views $(\xbt,\xbt')$, as typically used in practice but it also holds for the latter, see~\cref{sec:discussion} for further discussion.}
We show that under our specified data generation and augmentation process~(\cref{sec:problem_formulation}), as well as suitable additional assumptions (stated and discussed in more detail below), it is possible to isolate (i.e., block-identify) the invariant content partition.
Full proofs are included in~\Cref{app:proofs}.

\begin{restatable}[Identifying content with a generative model]
{theorem}{generative}
\label{thm:main}
Consider the data generating process described in~\cref{sec:problem_formulation}, i.e., the pairs $(\xb,\xbt)$ of original and augmented views are generated according to~\eqref{eq:generative_process_original} and~\eqref{eq:generative_process_augmentation} with $p_{\zbt|\zb}$ as defined in Assumptions~\ref{ass:content_invariance} and~\ref{ass:style_changes}.
Assume further that%
\begin{enumerate}[label=(\roman*), itemsep=0pt, topsep=0pt]
    \item
    $\fb:\Zcal\rightarrow \Xcal$ is smooth and invertible with smooth inverse (i.e., a diffeomorphism);
    \item $p_\zb$ is a smooth, continuous density on $\Zcal$ with $p_\zb(\zb)>0$ almost everywhere;
    \item for any $l\in\{1, ..., n_s\}$, $\exists A\subseteq\{1, ..., n_s\}$ s.t.\ $l\in A$; $p_A(A)>0$; $p_{\sbt_\A|\sb_\A}$ is smooth w.r.t.\ both $\sb_A$ and $\sbt_A$; and for any $\sb_A$,  $p_{\sbt_\A|\sb_\A}(\cdot|\sb_A)>0$ in some open, non-empty subset containing $\sb_A$.
\end{enumerate}%
If, for a given $n_s$ ($1\leq n_s<n$), a generative model $(\ph_\zb, \ph_A, \ph_{\sbt|\sb, A}, \fbh )$ assumes the same generative process~(\cref{sec:problem_formulation}), satisfies the above assumptions \textit{(i)-(iii)}, and matches the data likelihood,
\begin{equation*}
    \label{eq:match_likelihood}
    p_{\xb,\xbt}(\xb,\xbt)=\ph_{\xb,\xbt}(\xb,\xbt)
    \quad \quad \quad
    \forall (\xb,\xbt)\in \Xcal\times\Xcal,
\end{equation*}
then it block-identifies the true content variables via $\gb=\fbh^{-1}$ in the sense of~\cref{def:block-identifiability}.
\end{restatable}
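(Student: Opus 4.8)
The plan is to show that the learned inverse $\gb = \fbh^{-1}$ composed with the true mixing $\fb$ gives a map $\hb := \gb \circ \fb : \Zcal \to \Zcal$ that does not let the inferred content $\cbh$ depend on the true style $\sb$; combined with invertibility of $\hb$, this yields $\cbh = \hb_{1:n_c}(\cb)$ with $\hb_{1:n_c}$ invertible, which is exactly Definition~\ref{def:block-identifiability}. First I would observe that matching the data likelihood $p_{\xb,\xbt} = \ph_{\xb,\xbt}$ can be pushed through the diffeomorphisms $\fb,\fbh$: since both models posit $\xb = \fb(\zb)$, $\xbt = \fb(\zbt)$ (resp.\ with $\fbh$), the change-of-variables formula turns the observation-space likelihood equality into an equality of densities of $(\zb,\zbt)$ under the true model and of $(\hb(\zb),\hb(\zbt))$ (equivalently, a pushforward statement) under the learned model. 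The key structural fact to exploit is that Assumption~\ref{ass:content_invariance} forces the support of $p_{\zb,\zbt}$ to lie on the set $\{(\zb,\zbt) : \cbt = \cb\}$ — a measure-zero ``diagonal-in-content'' subset — and the learned model, assuming the same generative process, forces its own content coordinates (in $\fbh$-space) to be equal across the pair on the support. So on the support of the true joint, we must have $\hb(\zb)_{1:n_c} = \hb(\zbt)_{1:n_c}$ whenever $\cb = \cbt$.

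Next I would leverage assumption \textit{(iii)} — the ``sufficient variability of style'' condition — to turn this into a statement about partial derivatives. Fix a content value $\cb$ and a style value $\sb$; by \textit{(iii)}, for each style coordinate $l$ there is a subset $A \ni l$ with $p_A(A) > 0$ and $p_{\sbt_\A \mid \sb_\A}(\cdot \mid \sb_\A) > 0$ on an open neighborhood of $\sb_A$ within $\Scal_A$. Because $p_\zb > 0$ a.e.\ and the augmentation kernel has this local positivity, the support of $p_{\zb,\zbt}$ contains pairs $((\cb,\sb),(\cb,\sbt))$ with $\sbt$ ranging over an open neighborhood of $\sb$ in the $A$-directions (and equal elsewhere). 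On all such pairs $\hb(\cb,\sb)_{1:n_c} = \hb(\cb,\sbt)_{1:n_c}$. Letting $\sbt \to \sb$ along coordinate $l$, this forces $\partial \hb_{k}/\partial s_l = 0$ at $(\cb,\sb)$ for every $k \in \{1,\dots,n_c\}$; ranging over all $l$ and using that $\cb,\sb$ were (a.e.) arbitrary and $\hb$ is smooth (a diffeomorphism, as a composite of diffeomorphisms), the Jacobian block $\partial \cbh / \partial \sb$ vanishes identically. Hence $\cbh = \gb(\xb)_{1:n_c}$ is a function of $\cb$ alone; write $\cbh = \hb_{1:n_c}(\cb)$.

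Finally I would argue that $\hb_{1:n_c} : \RR^{n_c} \to \RR^{n_c}$ is invertible. Since $\hb$ is a diffeomorphism of $\Zcal = \Ccal \times \Scal$ whose Jacobian is block-triangular (the $\partial\cbh/\partial\sb$ block is zero), its determinant factors as $\det(\partial\cbh/\partial\cb)\cdot\det(\partial\sbh/\partial\sb)$, which is nonzero everywhere; in particular $\partial\cbh/\partial\cb$ is everywhere nonsingular, so $\hb_{1:n_c}$ is a local diffeomorphism, and because it is actually the first component of the global diffeomorphism $\hb$ acting on a product — and it does not depend on $\sb$ — it is a bijection from $\Ccal$ onto the image content space $\hat\Ccal$. (A clean way: $\hb$ being a bijection and $\cbh$ being $\sb$-independent means $\cb \mapsto \cbh$ must be injective, else two distinct $\cb$ would collapse; surjectivity onto $\hat\Ccal$ is immediate.) This gives the invertible $\hb : \RR^{n_c}\to\RR^{n_c}$ with $\cbh = \hb(\cb)$ required by Definition~\ref{def:block-identifiability}, completing the proof.

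The main obstacle I anticipate is the measure-theoretic bookkeeping in the middle step: carefully characterizing the support of $p_{\zb,\zbt}$ given the Dirac-$\delta$ structure of Assumptions~\ref{ass:content_invariance}--\ref{ass:style_changes}, and rigorously passing from ``$\hb_{1:n_c}$ is constant in $\sb$ on a dense/full-measure set of pairs'' to ``$\partial\cbh/\partial\sb \equiv 0$ everywhere'' — this needs the smoothness hypotheses in \textit{(i)}--\textit{(iii)} and a continuity argument to upgrade an a.e.\ statement to an everywhere statement, plus care that the open neighborhoods supplied by \textit{(iii)} genuinely let every style coordinate be perturbed independently enough to kill the full $\partial\cbh/\partial\sb$ block.
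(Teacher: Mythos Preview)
Your proposal is correct and follows essentially the same two-step architecture as the paper's proof: (1) use matching likelihoods and the model's content-invariance to conclude $\hb(\zb)_{1:n_c}=\hb(\zbt)_{1:n_c}$ a.s.\ for $\hb=\gb\circ\fb$; (2) use the style-variability assumption \textit{(iii)} together with smoothness to conclude that $\hb_{1:n_c}$ cannot depend on $\sb$. The only stylistic difference is that the paper phrases Step~2 as a proof by contradiction (assume $\partial \hb_c/\partial s_l(\cb^*,\sb^*)\neq 0$, then construct an open set of positive probability on which invariance is violated), whereas you argue directly that the invariance on local neighborhoods forces every style-partial to vanish. These are two sides of the same coin; your anticipated ``main obstacle'' (upgrading an a.e.\ invariance to an everywhere statement via continuity and Fubini) is exactly the technical content the paper's open-set/topological-subspace argument is handling.

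One place where your write-up is actually more explicit than the paper's is the final invertibility step: the paper simply asserts that ``smoothness and invertibility of $\hb_c$ follow from smoothness and invertibility of $\hb$'', whereas your block-triangular Jacobian argument (nonvanishing $\det J_\hb$ forces $\det(\partial\cbh/\partial\cb)\neq 0$, hence local diffeomorphism; global injectivity from $\hb$ bijective and $\cbh$ being $\sb$-independent) supplies the missing justification.
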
%
\textbf{Proof sketch.}
First, show (using \textit{(i)} and the matching likelihoods) that the representation $\zbh=\gb(\xb)$ extracted by $\gb$ is related to the true $\zb$ by a smooth invertible mapping $\hb=\gb\circ\fb$ such that $\cbh=\hb(\zb)_{1:n_c}$ is invariant across $(\zb,\zbt)$ almost surely w.r.t.~$p_{\zb,\zbt}$.\footnote{This step is partially inspired by~\cite{locatello2020weakly}; the technique used to prove the second \textit{main} step is entirely novel.}
Second, show by contradiction (using \textit{(ii)}, \textit{(iii)}) that $\hb(\cdot)_{1:n_c}$ can, in fact, only depend on the true content~$\cb$ and not on style~$\sb$, \looseness-1 for otherwise the invariance from step 1 would be violated in a region of the style (sub)space of measure greater than zero.

\textbf{Intuition. 
}
\cref{thm:main} assumes that the number of content ($n_c$) and style ($n_s$) variables is known,
and that there is a positive probability that each \textit{style} variable may change, though not necessarily on its own, according to \textit{(iii)}.
In this case, training a generative model of the form specified in~\cref{sec:problem_formulation} 
(i.e., with an invariant content partition and subsets of changing style variables)
by maximum likelihood on pairs $(\xb,\xbt)$ will asymptotically (in the limit of infinite data) recover the true invariant content partition up to an invertible function, i.e., it isolates, or unmixes, content from style.

\textbf{Discussion.} The identifiability result of~\cref{thm:main} for generative SSL is of potential relevance for existing variational autoencoder (VAE)~\cite{kingma2013auto} variants such as the \texttt{GroupVAE}~\cite{hosoya2019group},\footnote{which also uses a fixed content-style partition for multi-view data, but assumes that all latent factors are mutually independent, and that all style variables change between views, independent of the original style;} or its adaptive version \texttt{AdaGVAE}~\cite{locatello2020weakly}.
Since, contrary to existing results, \cref{thm:main} does not assume independent latents, it may also provide a principled basis for generative causal representation learning algorithms~\cite{leeb2020structural,yang2020causalvae,shen2020disentangled}.
However, an important limitation to its practical applicability is that generative modelling does not tend to scale very well to complex high-dimensional observations,
such as images. 

\subsection{Discriminative self-supervised representation learning}
\label{sec:theory_discriminative}
We therefore next turn to a discriminative approach, i.e., directly learning an encoder function $\gb$ which leads to a similar embedding across $(\xb,\xbt)$. 
As discussed in~\cref{sec:background_augmentation}, this is much more common for SSL with data augmentations.
First, we show that if an invertible encoder $\gb$ is used, then learning a representation which is aligned in the first $n_c$ dimensions is sufficient to block-identify content.

\begin{restatable}[Identifying content with
an invertible encoder]{theorem}{discriminative}
\label{thm:CL}
Assume the same data generating process~(\cref{sec:problem_formulation}) and conditions \textit{(i)}-\textit{(iv)} as in~\Cref{thm:main}. 
Let $\gb:\Xcal \rightarrow \Zcal$ be any smooth and \emph{invertible} function 
which minimises the following functional:%
\begin{equation}
\label{eq:CL_MSE_objective}
\textstyle
\Lcal_\mathrm{Align}(\gb) := \EE_{(\xb,\xbt)\sim p_{\xb, \xbt}}
\left[
\bignorm{
\gb(\xb)_{1:n_c}-\gb(\xbt)_{1:n_c}
}^2_2
\right]
\end{equation}%
Then $\gb$ block-identifies the true content variables in the sense of Definition~\ref{def:block-identifiability}.%
\end{restatable}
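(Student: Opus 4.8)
The strategy is to first turn the optimality of $\gb$ into an almost-sure content-alignment statement, and then reuse the second (main) step of the proof of \Cref{thm:main} essentially unchanged. \emph{Step 1 (reduce to content invariance of $\hb:=\gb\circ\fb$).} I would first observe that $\Lcal_\mathrm{Align}$ is minimised, with value $0$, by the ground-truth inverse $\fbinv$: by \Cref{ass:content_invariance} we have $\cbt=\cb$ almost everywhere, so $\fbinv(\xb)_{1:n_c}=\cb=\cbt=\fbinv(\xbt)_{1:n_c}$ for $p_{\xb,\xbt}$-almost every $(\xb,\xbt)$, hence $\Lcal_\mathrm{Align}(\fbinv)=0$; since the integrand is non-negative this is the global minimum over all encoders. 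Therefore any smooth invertible minimiser $\gb$ also attains $0$, forcing $\gb(\xb)_{1:n_c}=\gb(\xbt)_{1:n_c}$ for $p_{\xb,\xbt}$-almost every $(\xb,\xbt)$. Writing $\hb:=\gb\circ\fb:\Zcal\to\Zcal$, a diffeomorphism (composition of the diffeomorphism $\fb$ with the smooth invertible $\gb$), and substituting $\xb=\fb(\zb)$, $\xbt=\fb(\zbt)$, this becomes: $\hb_c(\zb)=\hb_c(\zbt)$ for $p_{\zb,\zbt}$-almost every $(\zb,\zbt)$, where $\hb_c:=\hb(\cdot)_{1:n_c}$. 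This is exactly the conclusion of step~1 of the proof of \Cref{thm:main}, obtained here directly from the alignment objective rather than from matching likelihoods.

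\emph{Step 2 ($\hb_c$ cannot depend on style).} Next I would show by contradiction that $\hb_c(\cb,\sb)$ is constant in $\sb$. Suppose $\partial\hb_{c,i}/\partial s_l\neq 0$ at some $\zb^0=(\cb^0,\sb^0)$ for some $i\le n_c$ and style index $l$; by smoothness this persists, with fixed sign, on an open neighbourhood of $\zb^0$. Condition \textit{(iii)} provides a set $A\ni l$ with $p_A(A)>0$ whose conditional $p_{\sbt_\A\mid\sb_\A}(\cdot\mid\sb_\A)$ is smooth and strictly positive on an open neighbourhood of $\sb_\A$; combined with $p_\zb>0$ a.e.\ (condition \textit{(ii)}) and the Dirac factors in \Cref{ass:content_invariance,ass:style_changes}, this makes $p_{\zb,\zbt}$ strictly positive on a full-dimensional family of pairs $(\zb,\zbt)$ near $(\zb^0,\zb^0)$ with $\cbt=\cb$, $\sbt_{\Ac}=\sb_{\Ac}$, and $\sbt_\A$ differing from $\sb_\A$ only in coordinate $l$. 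Along such pairs the fundamental theorem of calculus and the non-vanishing, sign-definite partial derivative force $\hb_{c,i}(\zb)\neq\hb_{c,i}(\zbt)$ on a set of positive $p_{\zb,\zbt}$-measure, contradicting Step~1. Hence $\hb_c$ factors through the content coordinates, and $\cbh=\gb(\xb)_{1:n_c}=\hb_c(\cb)$.

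\emph{Step 3 ($\hb_c$ is invertible).} Finally I would check that $\cb\mapsto\hb_c(\cb)$ is invertible, completing the proof per \Cref{def:block-identifiability}. Since $\hb_c$ does not depend on $\sb$, the Jacobian of the diffeomorphism $\hb$ is block-triangular with diagonal blocks $\partial\hb_c/\partial\cb$ and $\partial\hb_s/\partial\sb$ (with $\hb_s:=\hb(\cdot)_{(n_c+1):n}$); nonsingularity of $J\hb$ then forces $\partial\hb_c/\partial\cb$ to be everywhere nonsingular, so $\hb_c$ is a local diffeomorphism. For injectivity, fix $\cbh$ in its image and let $I=\{\cb\in\Ccal:\hb_c(\cb)=\cbh\}$; because $\hb$ is a bijection of $\Zcal=\Ccal\times\Scal$ and $\hb_c$ depends on $\cb$ alone, the sets $\hb(\{\cb\}\times\Scal)=\{\cbh\}\times\hb_s(\cb,\Scal)$, $\cb\in I$, are pairwise disjoint, open (by invariance of domain), and cover $\{\cbh\}\times\Scal$; since $\Scal$ is connected (a factor of the connected space $\Zcal$), $I$ is a singleton. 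Thus $\hb_c$ is an injective local diffeomorphism, hence a diffeomorphism onto an open subset of $\RR^{n_c}$, which is the invertible map required in \Cref{def:block-identifiability}.

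\textbf{Main obstacle.} Steps~1 and~3 are routine. The hard part is Step~2: promoting the infinitesimal statement ``$\hb_c$ varies with some $s_l$'' into a positive-$p_{\zb,\zbt}$-measure violation of the a.s.\ invariance, which requires carefully combining the smoothness and the open-neighbourhood positivity in condition \textit{(iii)} so that the offending pairs genuinely carry mass. This is precisely the technically novel argument already needed for \Cref{thm:main}, so in the discriminative setting it can be imported verbatim; the only genuinely new ingredient here is the short Step~1 reduction from ``minimiser of $\Lcal_\mathrm{Align}$'' to ``content-aligned almost surely''.
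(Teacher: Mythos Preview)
Your proposal is correct and follows essentially the same approach as the paper: the paper's proof also reduces Step~1 to the observation that $\fbinv$ attains the global minimum $0$ of $\Lcal_\mathrm{Align}$, hence any minimiser satisfies the same a.s.\ invariance~\eqref{eq:contradicted_expression}, and then reuses Step~2 of the proof of \Cref{thm:main} verbatim. Your Step~3, establishing invertibility of $\hb_c$ via the block-triangular Jacobian and a connectedness argument on $\Scal$, is in fact more careful than the paper, which simply asserts that smoothness and invertibility of $\hb_c$ ``follow from smoothness and invertibility of $\hb$''.
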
%
\textbf{Proof sketch.}
First, we show that the global minimum of~\eqref{eq:CL_MSE_objective} is reached by the smooth invertible function~$\fbinv$. Thus, any other minimiser $\gb$ must satisfy the same invariance across $(\xb,\xbt)$ used in step 1 of the proof of~\cref{thm:main}. The second step uses the same argument by contradiction as in~\cref{thm:main}.

\textbf{Intuition.}
\cref{thm:CL} states that if---under the same assumptions on the generative process as in~\cref{thm:main}---we directly learn a representation with an \textit{invertible} encoder, then enforcing alignment between the first $n_c$ latents is sufficient to isolate the invariant content partition. Intuitively, invertibility guarantees that all information is preserved, thus avoiding a collapsed representation.

\textbf{Discussion.}
According to~\cref{thm:CL}, content can be isolated if, e.g.,
a flow-based architecture%
~\cite{dinh2016density,papamakarios2017masked,dinh2014nice,kingma2018glow, papamakarios2021normalizing} is used, or  invertibility is enforced otherwise during training~\cite{jacobsen2018revnet,behrmann2019invertible}.
However, the applicability of this approach is limited as it \textit{places strong constraints on the encoder architecture which makes it hard to scale these methods up to high-dimensional settings}.
As discussed in~\cref{sec:background_augmentation}, state-of-the-art SSL methods such as \texttt{SimCLR}~\cite{chen2020simple},
\texttt{BYOL}~\cite{grill2020byol},
\texttt{SimSiam}~\cite{chen2020exploring}, or
\texttt{BarlowTwins}~\cite{zbontar2021barlow} do not use invertible encoders, but instead avoid collapsed representations---which would result from naively optimising~\eqref{eq:CL_MSE_objective} for arbitrary, non-invertible $\gb$---using different forms of regularisation.

To close this gap between theory and practice, finally, we investigate how to block-identify content without assuming an invertible encoder.
We show that, if we add a regularisation term to~\eqref{eq:CL_MSE_objective} that encourages maximum entropy of the learnt representation, the invertibility assumption can be dropped.%
\begin{restatable}[Identifying content with discriminative learning and a non-invertible encoder]{theorem}{discriminativeMaxEnt}
\label{thm:CL_MaxEnt}
Assume the same data generating process~(\cref{sec:problem_formulation}) and conditions \textit{(i)}-\textit{(iv)} as in~\Cref{thm:main}. 
Let $\gb:\Xcal \rightarrow (0,1)^{n_c}$ be any smooth function which minimises the following functional:%
\begin{equation}
\label{eq:CL_MSE_MaxEnt_objective}
\textstyle
\Lcal_\mathrm{AlignMaxEnt}(\gb)
:= 
\EE_{(\xb,\xbt)\sim p_{\xb, \xbt}}
\left[
\bignorm{
\gb(\xb)-\gb(\xbt)
}^2_2
\right] - H\left(\gb(\xb)\right)
\end{equation}%
where $H(\cdot)$ denotes the differential entropy of the random variable $\gb(\xb)$ taking values in $(0,1)^{n_c}$.
Then $\gb$ block-identifies the true content variables in the sense of~\cref{def:block-identifiability}.%
\end{restatable}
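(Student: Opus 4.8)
The plan is to follow the two-part template used in the proofs of \Cref{thm:main} and \Cref{thm:CL} --- first pin $\gb$ down to a function of content, then rule out any residual loss of content information --- the new ingredient being that invertibility of the encoder must now be \emph{extracted from the entropy term} rather than assumed. First I would show that $\Lcal_{\mathrm{AlignMaxEnt}}$ is bounded below by $0$, with the bound attained by a content encoder. The alignment term is trivially non-negative, and since $\gb(\xb)$ takes values in the bounded set $(0,1)^{n_c}$, the uniform law uniquely maximises differential entropy there, with value $H(\mathrm{Unif}((0,1)^{n_c}))=0$, so $-H(\gb(\xb))\ge 0$ always (and $-\infty<H(\gb(\xb))<0$ strictly whenever $\gb(\xb)$ has a non-uniform density, $-H=+\infty$ when it has no density). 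To attain $0$ I use assumption~\textit{(ii)}: the content marginal $p_\cb$ (integrate $p_\zb$ over style) is smooth and strictly positive on $\Ccal$, so the Knothe--Rosenblatt / iterated-conditional-CDF transport $\bm{d}\colon\Ccal\to(0,1)^{n_c}$ is a smooth diffeomorphism pushing $p_\cb$ forward to $\mathrm{Unif}((0,1)^{n_c})$; then $\gb^{\star}(\xb):=\bm{d}\bigl(\fbinv(\xb)_{1:n_c}\bigr)$ is smooth, depends only on content (so its alignment term vanishes by \Cref{ass:content_invariance}), and has $\gb^{\star}(\xb)\sim\mathrm{Unif}((0,1)^{n_c})$, so $\Lcal_{\mathrm{AlignMaxEnt}}(\gb^{\star})=0$. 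Hence any minimiser $\gb$ must make \emph{both} terms vanish: $\gb(\xb)=\gb(\xbt)$ $p_{\xb,\xbt}$-almost surely, and $\gb(\xb)\sim\mathrm{Unif}((0,1)^{n_c})$.

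Next I would show that such a $\gb$ depends only on content. Put $\hb:=\gb\circ\fb\colon\Zcal\to(0,1)^{n_c}$, which is smooth since $\fb$ is a diffeomorphism by~\textit{(i)}; the invariance above reads $\hb(\zb)=\hb(\zbt)$ a.s.\ under $p_{\zb,\zbt}$. I would then re-run, essentially verbatim, the contradiction argument from the proof of \Cref{thm:main}: were $\hb$ to depend on some style coordinate $s_l$, then by continuity its derivative along $s_l$ is nonzero on an open set, while by assumption~\textit{(iii)} there is positive probability that $\zbt$ differs from $\zb$ only (and by an arbitrarily small amount) in $s_l$, which would force $\hb(\zb)\neq\hb(\zbt)$ on a set of positive measure and contradict the invariance. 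This step uses only smoothness, never invertibility of $\hb$, so it carries over unchanged; hence $\hb(\cb,\sb)=\bar\hb(\cb)$ for some smooth $\bar\hb\colon\Ccal\to(0,1)^{n_c}$, and combined with the first step, $\gb(\xb)=\bar\hb(\cb)$ with $\bar\hb$ pushing $p_\cb$ forward to $\mathrm{Unif}((0,1)^{n_c})$. This already establishes that $\cbh=\gb(\xb)$ carries \emph{only} information about $\cb$.

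The remaining step, showing that $\bar\hb$ is invertible, is the one where assumption~\textit{(ii)} does work that was unnecessary in \Cref{thm:main} and \Cref{thm:CL}, which simply assumed invertibility. The density of $\bar\hb(\cb)$ at any point equals the sum over its $\bar\hb$-preimages of $p_\cb/|\det D\bar\hb|$; using positivity and continuity of $p_\cb$ together with Sard's theorem to control the preimage set, this factor is forced to blow up near the image of any critical point of $\bar\hb$, which is incompatible with the \emph{bounded} uniform density from the first step. Hence $\det D\bar\hb$ never vanishes, i.e.\ $\bar\hb$ is a local diffeomorphism; its image is then open, and since the pushforward density is the everywhere-positive constant $1$ on $(0,1)^{n_c}$, $\bar\hb$ is surjective onto it. The last and most delicate point is \emph{global injectivity}: I would argue that a surjective, nowhere-critical smooth map between the simply-connected domain $\Ccal$ and the simply-connected cube $(0,1)^{n_c}$ which transports a full-support density to a bounded one cannot be multi-sheeted --- this is immediate in one dimension (a nowhere-critical smooth map of an interval is strictly monotone, hence injective) and in general follows from a covering-space / degree argument once escape of mass to the boundary has been ruled out using the bounded density. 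Invariance of domain then upgrades $\bar\hb$ to a diffeomorphism of $\Ccal$ onto $(0,1)^{n_c}$; extending it to an invertible $\hb\colon\RR^{n_c}\to\RR^{n_c}$ gives $\cbh=\hb(\cb)$ with $\hb$ invertible, i.e.\ block-identifiability in the sense of \cref{def:block-identifiability}.

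I expect this last step to be the main obstacle. The first two steps are essentially a repackaging of \Cref{thm:main}, but turning ``the content directions of the representation are uniformly distributed'' into ``that map is a bijection'' is genuinely new: the entropy regulariser is exactly what excludes the collapsed or ``folded'' encoders that naive alignment alone would admit, and the technical heart of the argument is to make precise both that a fold must create an unbounded density --- via Sard's theorem and the change-of-variables formula, crucially exploiting $p_\zb>0$ --- and that no multi-sheeted global behaviour can coexist with the topological and measure-theoretic constraints.
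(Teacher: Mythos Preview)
Your proposal matches the paper's proof in overall structure and in Steps~1 and~2: the paper also uses the iterated-conditional-CDF map (there called the Darmois construction) to exhibit a smooth minimiser $\gb^\star$ with $\Lcal_{\mathrm{AlignMaxEnt}}(\gb^\star)=0$, deduces that any minimiser must have uniform output and almost-sure invariance across $(\xb,\xbt)$, and then reuses the contradiction argument from \Cref{thm:main} verbatim to conclude that $\hb=\gb\circ\fb$ depends only on~$\cb$.

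The difference is entirely in Step~3. Rather than arguing from scratch via Sard's theorem, the change-of-variables formula, and a covering-space/degree argument, the paper simply invokes Proposition~5 of~\cite{zimmermann2021contrastive}, which states directly that a differentiable map between simply-connected oriented $C^1$ manifolds without boundary that pushes a regular (strictly positive, finite) density to another regular density must be a bijection. Your sketch is essentially an outline of how one would \emph{prove} that proposition, and you are right to flag it as the delicate part: in particular, a surjective local diffeomorphism is a covering map only if it is proper, and extracting properness from the bounded-pushforward-density condition is precisely where the real work lies---your ``escape of mass to the boundary'' remark gestures at this but does not close it. The paper sidesteps all of this by citing the result as a black box; if you want a self-contained argument you will need to tighten that step considerably, but the quicker and cleaner route is to appeal to the cited proposition.
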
%
\textbf{Proof sketch.}
First, use the Darmois construction~\cite{darmois1951construction,hyvarinen1999nonlinear} to build a function $\db:\Ccal\rightarrow (0,1)^{n_c}$ mapping $\cb=\fbinv(\xb)_{1:n_c}$ to a uniform random variable. Then $\gb^\star=\db\circ \fb^{-1}_{1:n_c}$ attains the global minimum of~\eqref{eq:CL_MSE_MaxEnt_objective} because $\cb$ is invariant across $(\xb,\xbt)$ and  the uniform distribution is the maximum entropy distribution on $(0,1)^{n_c}$.
Thus, any other minimiser $\gb$ of~\eqref{eq:CL_MSE_MaxEnt_objective} must satisfy invariance across $(\xb,\xbt)$ and map to a uniform r.v. 
Then, use the same step 2 as in~\cref{thm:main,thm:CL} to show that $\hb=\gb\circ \fb:\Zcal\rightarrow (0,1)^{n_c}$ cannot depend on style, i.e., it is a function from $\Ccal$ to $(0,1)^{n_c}$.
Finally, we show that $\hb$ must be invertible 
since it maps 
$p_\cb$ to a uniform distribution,
using a result from~\cite{zimmermann2021contrastive}.

\textbf{Intuition.}
\cref{thm:CL_MaxEnt} states that if we do not explicitly enforce invertibility of~$\gb$ as in~\cref{thm:CL}, additionally maximising the entropy of the learnt representation (i.e., optimising alignment \textit{and} uniformity~\cite{wang2020understanding})  avoids a collapsed representation and recovers the invariant content block. Intuitively, this is because any function that only depends on $\cb$ will be invariant across $(\xb,\xbt)$, so it is beneficial to preserve all content information to maximise entropy. 

\textbf{Discussion.
}
Of our theoretical results, \cref{thm:CL_MaxEnt} requires the weakest set of assumptions, and is most closely aligned with common SSL practice.
As discussed in~\cref{sec:background_augmentation}, contrastive SSL with negative samples using InfoNCE~\eqref{eq:InfoNCE_objective} as an objective 
can asymptotically be understood as alignment with entropy regularisation~\cite{wang2020understanding}, i.e., objective~\eqref{eq:CL_MSE_MaxEnt_objective}.
\textit{\Cref{thm:CL_MaxEnt} thus provides a theoretical justification for the empirically observed effectiveness of CL with InfoNCE}:
subject to our assumptions, CL with InfoNCE asymptotically isolates content, i.e., the part of the representation that is always left invariant by augmentation.
For example, the 
strong image classification performance based on representations learned by \texttt{SimCLR}~\cite{chen2020simple},
which uses color distortion and random crops as augmentations, can be explained in that object class is a content variable in this case.
We extensively evaluate the effect of various augmentation techniques on different ground-truth latent factors in our experiments in~\cref{sec:experiments}.
There is also an interesting connection between~\cref{thm:CL_MaxEnt} and \texttt{BarlowTwins}~\cite{zbontar2021barlow}, which only uses positive pairs and combines alignment with a redundancy reduction regulariser that enforces decorrelation between the inferred latents.
Intuitively, redundancy reduction is related to increased entropy: $\gb^\star$ constructed in the proof of~\cref{thm:CL_MaxEnt}---and thus also any other minimiser of~\eqref{eq:CL_MSE_MaxEnt_objective}---attains the global optimum of the \texttt{BarlowTwins} objective, though the reverse implication may not hold.

\section{Experiments}
\label{sec:experiments}

We perform two main experiments.
First,
we numerically test our main result,~\cref{thm:CL_MaxEnt},
in a \textit{fully-controlled}, finite sample setting~(\cref{sec:experiment_1_numerical_simulation}), using CL to estimate the entropy term in~\eqref{eq:CL_MSE_MaxEnt_objective}.
Second, we seek to better understand the effect of data augmentations used \textit{in practice}~(\cref{sec:experiment_2_causal3dident}).
To this end, we introduce a new
dataset of 3D objects with 
dependencies between a number of known ground-truth factors, and use it to evaluate the effect of different augmentation techniques
on what is identified as content.
{Additional experiments are summarised in~\cref{sec:additional_experiments} and described in more detail in~\Cref{app:additional_results}.}

\subsection{Numerical data}
\label{sec:experiment_1_numerical_simulation}

\textbf{Experimental setup.}
We generate synthetic data as described in~\cref{sec:problem_formulation}. We consider $n_c=n_s=5$, with content and style latents distributed as $\cb\sim\Ncal(0,\Sigma_c)$ and $\sb|\cb\sim\Ncal(\ab+B\cb, \Sigma_s)$, thus allowing for \emph{statistical dependence} within the two blocks (via $\Sigma_c$ and $\Sigma_s$) and \emph{causal dependence} between content and style (via $B$). For $\fb$, we use a
3-layer MLP with LeakyReLU activation functions.\footnote{chosen to lead to invertibility almost surely by following the settings used by previous work~\citep{hyvarinen2016unsupervised,hyvarinen2017nonlinear}}
The distribution $p_A$ over subsets of changing style variables is obtained by independently flipping the same biased coin for each $s_i$.
The conditional style distribution is taken as $p_{\sbt_A|\sb_A}=\Ncal(\sb_A,\Sigma_A)$.
We train an encoder $\gb$ on pairs $(\xb,\xbt)$ with InfoNCE using the negative L2 loss as the similarity measure, i.e., we approximate~\eqref{eq:CL_MSE_MaxEnt_objective} using empirical averages and negative samples. 
For evaluation, we 
use kernel ridge regression~\citep{murphybook} to predict the ground truth $\cb$ and $\sb$ from the learnt representation $\cbh=\gb(\xb)$ and report the $R^2$ coefficient of determination. For a more detailed account, we refer to~\Cref{app:experimental_details}.

\begin{wraptable}{r}{0.45\textwidth}
    \vspace{-1.0em}
    \centering
    \resizebox{0.45\textwidth}{!}{
    \small
    \begin{tabular}{ccccc}
    \toprule
    \multicolumn{3}{c}{\textbf{Generative process}} & \multicolumn{2}{c}{$\bm{R^2}$ \textbf{(nonlinear)}}  \\
    \cmidrule(r){1-3}\cmidrule(r){4-5}
    \textbf{p(chg.)} & \textbf{Stat.} & \textbf{Cau.} & \textbf{Content $\cb$} & \textbf{Style $\sb$} \\
    \midrule
    1.0 & \xmark & \xmark & $\textbf{1.00} \pm 0.00$ & $\textcolor{red}{0.07} \pm 0.00$ \\
    0.75 & \xmark & \xmark & $\textbf{1.00} \pm 0.00$ & $\textcolor{red}{0.06} \pm 0.05$ \\
    0.75 & \cmark & \xmark & $\textbf{0.98} \pm 0.03$ & ${0.37} \pm 0.05$ \\
    0.75 & \cmark & \cmark & $\textbf{0.99} \pm 0.01$ & $\textbf{0.80} \pm 0.08$ \\
    \bottomrule
    \end{tabular}
    }
    \vspace{-0.5em}
\end{wraptable}
\textbf{Results.}
In the inset table, we report mean $\pm$ std.\ dev.\ over $3$ random seeds across four generative processes of increasing complexity covered by~\cref{thm:CL_MaxEnt}: ``p(chg.)'', ``Stat.'', and ``Cau.'' denote respectively the change probability for each~$s_i$,
statistical dependence within blocks ($\Sigma_c\neq I\neq \Sigma_s$), and  causal dependence of style on content ($B\neq 0$).
An $R^2$ close to one indicates that almost all variation is explained by $\cbh$, i.e., that there is a 1-1 mapping, 
as required by~\cref{def:block-identifiability}.
As can be seen, \textit{across all settings, content is block-identified}. %
Regarding style, we observe an increased score with the introduction of dependencies, which we explain in an extended discussion in~\Cref{app:numerical}.
Finally, we show {in~\Cref{app:numerical}} that a high $R^2$ score can be obtained even if we use linear regression to predict $\cb$ from $\cbh$ ($R^2=0.98 \pm 0.01$, for the last row). 

\subsection{High-dimensional images: \textit{Causal3DIdent}}
\label{sec:experiment_2_causal3dident}

\begin{figure}[t]
    \renewcommand{\xshift}{2em}
    \renewcommand{\yshift}{1.em}
    \newcommand{\nodesize}{2.5em}
    \vspace{-0.0em}
    \begin{subfigure}{0.35\textwidth}
    \centering
        \begin{tikzpicture}
            \centering
            \node (spotlight angle)
            [latent, minimum size=\nodesize]
            {\small $\text{pos}_\text{spl}$};
            \node (class)
            [latent, right=of spotlight angle, xshift=-\xshift, minimum size=\nodesize]
            {\small class};
            \node (object position)
            [latent, below=of spotlight angle, yshift=\yshift, minimum size=\nodesize]
            {\small $\text{pos}_\text{obj}$};
            \node (object rotation)
            [latent, below=of class, yshift=\yshift, minimum size=\nodesize]
            {\small $\text{rot}_\text{obj}$};
            \node (background hue) 
            [latent, right=of class, xshift=-\xshift, minimum size=\nodesize] 
            {\small $\text{hue}_\text{bg}$};
            \node (object hue)
            [latent, below=of background hue, yshift=\yshift, minimum size=\nodesize]
            {\small $\text{hue}_\text{obj}$};
            \node (spotlight hue)
            [latent, right=of background hue, xshift=-\xshift, minimum size=\nodesize]
            {\small $\text{hue}_\text{spl}$};
            \edge{class}{object hue, object position, object rotation};
            \edge[]{spotlight angle}{object position};
            \edge[]{spotlight hue, background hue}{object hue};
        \end{tikzpicture}
    \end{subfigure}%
    \begin{subfigure}{0.65\textwidth}
        \centering
        \includegraphics[width=\textwidth]{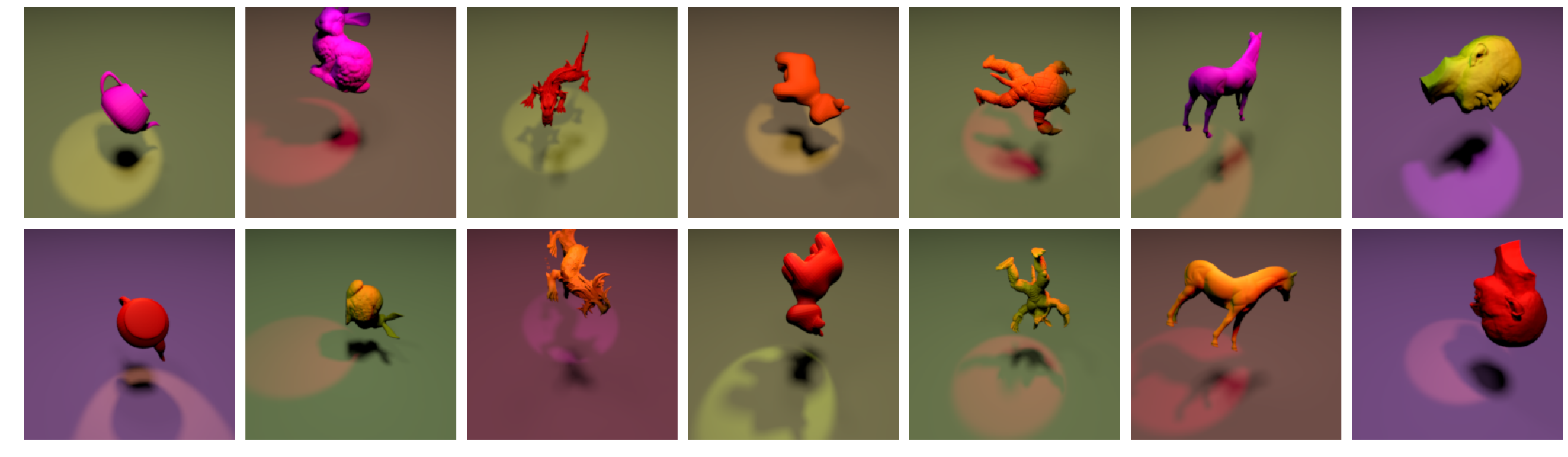}
    \end{subfigure}
    \caption{
    \small
    \textit{(Left)} Causal graph for the \textit{Causal3DIdent} dataset. 
    \textit{(Right)} Two samples from each object class.
    }
    \label{fig:3DIdent_causal_graph}
\end{figure}

\textbf{\textit{Causal3DIdent} dataset.}
\textit{3DIdent}~\citep{zimmermann2021contrastive} is a benchmark for evaluating identifiability with rendered $224\times224$ images which contains hallmarks of natural environments (e.g. shadows, different lighting conditions, a 3D object). For influence of the latent factors on the renderings, see Fig.~2 of~\citep{zimmermann2021contrastive}.
In \textit{3DIdent}, there is a single object class (Teapot~\citep{newell1975utah}), and all $10$ latents are sampled independently. For \textit{Causal3DIdent}, we introduce \textbf{six} additional classes: Hare~\citep{turk1994bunny}, Dragon~\citep{StanfordScanRep}, Cow~\citep{KeenanScanRep}, Armadillo~\citep{Krishnamurthy1996armadillo}, Horse~\citep{Praun2000horse}, and Head~\citep{SuggestContourGallery}; and impose a causal graph over the latent variables, see~\cref{fig:3DIdent_causal_graph}. 
While object class and all environment variables (spotlight position \& hue, background hue) are sampled independently, all object latents are dependent,\footnote{e.g., our causal graph entails hares blend into the environment (object hue centered about background \& spotlight hue), a form of active camouflage observed in Alaskan~\citep{Lepusothus}, Arctic~\citep{arctichare}, \& Snowshoe hares.}
see~\Cref{app:data_set_details} for  details.\footnote{We made the Causal3DIdent dataset   \href{https://zenodo.org/record/4784282}{publicly available at this URL}.}  

\begin{wrapfigure}[]{!h}{0.4\textwidth}
\centering
    \vspace{-1.5em}
    \includegraphics[width=0.4\textwidth]{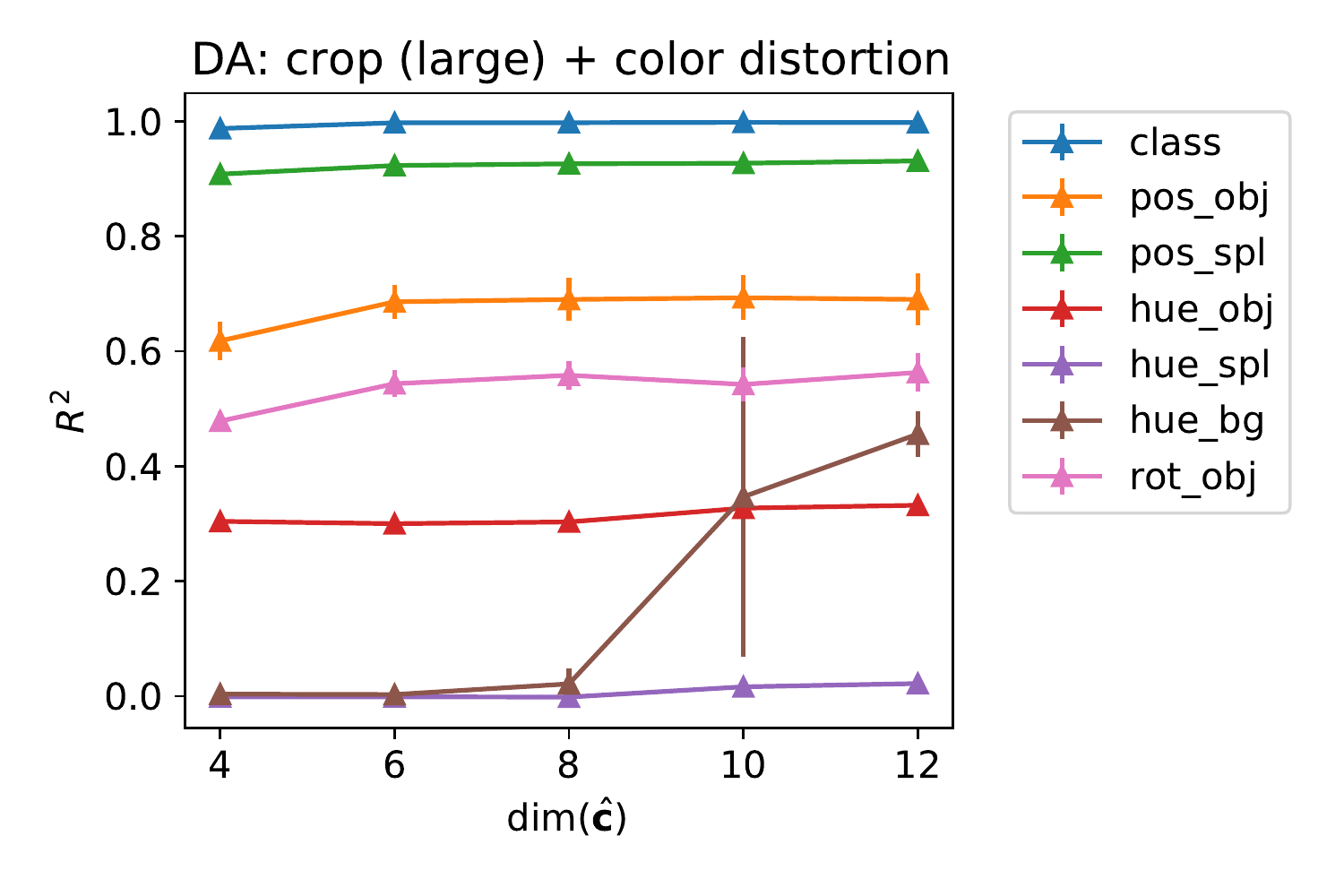}
    \vspace{-2.5em}
\end{wrapfigure}
\textbf{Experimental setup.}
For $\gb$, we train a convolutional
encoder composed of a ResNet18%
~\citep{he2015deep} and an additional fully-connected layer, with LeakyReLU activation.
As in \texttt{SimCLR}~\citep{chen2020simple}, we use
InfoNCE with cosine 
similarity, 
and train on pairs of augmented examples~$(\xbt,\xbt')$.
As $n_c$ is unknown and variable depending on the augmentation, we fix $\text{dim}(\cbh)=8$ throughout.
Note that we
find the results to be, for the most part, robust to the choice of $\text{dim}(\cbh)$,
see inset figure.
We consider the following data augmentations (DA): crop, resize \& flip; colour distortion (jitter \& drop); and rotation {\small$\in\{90\degree,180\degree,270\degree\}$}. For comparison, we also consider directly imposing a content-style partition
by
performing a latent transformation (LT) to generate views. 
\looseness-1 For evaluation, we use linear logistic regression to predict object class, and 
kernel ridge 
to predict the other latents from $\cbh$.\footnote{
See~\Cref{app:causal} for results with linear regression, as well as evaluation using a higher-dimensional intermediate layer by considering a projection head~\citep{chen2020simple}.}

\definecolor{LTcolor}{rgb}{0.95,1,1}
\definecolor{DAcolor}{rgb}{1,1,0.95}
\begin{table}[t]
\centering
\caption{
\small 
\textit{Causal3DIdent} results: $R^2$ mean $\pm$ std.\ dev.\  over $3$ random seeds. DA: data augmentation, LT: latent transformation, bold: $R^2\geq 0.5$, red: $R^2<0.25$.
Results for individual axes of object position \& rotation are aggregated, see~\Cref{app:additional_results} for the full table.
}
\vspace{0.5em}
\label{tbl:final_nonlinear_abbrv}
\resizebox{\textwidth}{!}{
\small
\begin{tabular}{lc|cc|ccc|c}
\toprule
\multirow{2}{*}{\textbf{Views generated by}} & \multirow{2}{*}{\textbf{Class}} & \multicolumn{2}{c}{\textbf{Positions}} & \multicolumn{3}{c}{\textbf{Hues}} & \multirow{2}{*}{\textbf{Rotations}} \\
\cmidrule(r){3-4}\cmidrule(r){5-7}
& & $\text{object}$ & $\text{spotlight}$ & $\text{object}$ & $\text{spotlight}$ & $\text{background}$ & 
\\
\midrule
\rowcolor{DAcolor}
DA: colour distortion  & 
$0.42 \pm 0.01$ & $\textbf{0.61} \pm 0.10$ & $\textcolor{red}{0.17} \pm 0.00$ & $\textcolor{red}{0.10} \pm 0.01$ & $\textcolor{red}{0.01} \pm 0.00$ & $\textcolor{red}{0.01} \pm 0.00$ & $0.33 \pm 0.02$ \\
\rowcolor{LTcolor}
LT: change hues & 
$\textbf{1.00} \pm 0.00$ & $\textbf{0.59} \pm 0.33$ & $\textbf{0.91} \pm 0.00$ & $0.30 \pm 0.00$ & $\textcolor{red}{0.00} \pm 0.00$ & $\textcolor{red}{0.00} \pm 0.00$ & $0.30 \pm 0.01$ \\
\midrule
\rowcolor{DAcolor}
DA: crop (large) & 
$0.28 \pm 0.04$ & $\textcolor{red}{0.09} \pm 0.08$ & $\textcolor{red}{0.21} \pm 0.13$ & $\textbf{0.87} \pm 0.00$ & $\textcolor{red}{0.09} \pm 0.02$ & $\textbf{1.00} \pm 0.00$ & $\textcolor{red}{0.02} \pm 0.02$ \\
\rowcolor{DAcolor}
DA: crop (small) & 
$\textcolor{red}{0.14} \pm 0.00$ & $\textcolor{red}{0.00} \pm 0.01$ & $\textcolor{red}{0.00} \pm 0.01$ & $\textcolor{red}{0.00} \pm 0.00$ & $\textcolor{red}{0.00} \pm 0.00$ & $\textbf{1.00} \pm 0.00$ & $\textcolor{red}{0.00} \pm 0.00$ \\
\rowcolor{LTcolor}
LT: change positions & 
$\textbf{1.00} \pm 0.00$ & $\textcolor{red}{0.16} \pm 0.23$ & $\textcolor{red}{0.00} \pm 0.01$ & $0.46 \pm 0.02$ & $\textcolor{red}{0.00} \pm 0.00$ & $\textbf{0.97} \pm 0.00$ & $0.29 \pm 0.01$ \\
\midrule
\rowcolor{DAcolor}
DA: crop (large) + colour distortion & %
$\textbf{0.97} \pm 0.00$ & $\textbf{0.59} \pm 0.07$ & $\textbf{0.59} \pm 0.05$ & $0.28 \pm 0.00$ & $\textcolor{red}{0.01} \pm 0.01$ & $\textcolor{red}{0.01} \pm 0.00$ & $\textbf{0.74} \pm 0.03$ \\
\rowcolor{DAcolor}
DA: crop (small) + colour distortion & %
$\textbf{1.00} \pm 0.00$ & $\textbf{0.69} \pm 0.04$ & $\textbf{0.93} \pm 0.00$ & $0.30 \pm 0.01$ & $\textcolor{red}{0.00} \pm 0.00$ & $\textcolor{red}{0.02} \pm 0.03$ & $\textbf{0.56} \pm 0.03$ \\
\rowcolor{LTcolor}
LT: change positions + hues & 
$\textbf{1.00} \pm 0.00$ & $\textcolor{red}{0.22} \pm 0.22$ & $\textcolor{red}{0.07} \pm 0.08$ & $0.32 \pm 0.02$ & $\textcolor{red}{0.00} \pm 0.01$ & $\textcolor{red}{0.02} \pm 0.03$ & $0.34 \pm 0.06$ \\
\midrule
\rowcolor{DAcolor}
DA: rotation &
$0.33 \pm 0.06$ & $\textcolor{red}{0.17} \pm 0.09$ & $\textcolor{red}{0.23} \pm 0.12$ & $\textbf{0.83} \pm 0.01$ & $0.30 \pm 0.12$ & $\textbf{0.99} \pm 0.00$ & $\textcolor{red}{0.05} \pm 0.03$ \\
\rowcolor{LTcolor}
LT: change rotations & 
$\textbf{1.00} \pm 0.00$ & $\textbf{0.53} \pm 0.33$ & $\textbf{0.90} \pm 0.00$ & $0.41 \pm 0.00$ & $\textcolor{red}{0.00} \pm 0.00$ & $\textbf{0.97} \pm 0.00$ & $0.28 \pm 0.00$ \\
\midrule
\rowcolor{DAcolor}
DA: rotation + colour distortion & 
$\textbf{0.59} \pm 0.01$ & $\textbf{0.58} \pm 0.06$ & $\textcolor{red}{0.21} \pm 0.01$ & $\textcolor{red}{0.12} \pm 0.02$ & $\textcolor{red}{0.01} \pm 0.00$ & $\textcolor{red}{0.01} \pm 0.00$ & $0.33 \pm 0.04$ \\
\rowcolor{LTcolor}
LT: change rotations + hues &
$\textbf{1.00} \pm 0.00$ & $\textbf{0.57} \pm 0.34$ & $\textbf{0.91} \pm 0.00$ & $0.30 \pm 0.00$ & $\textcolor{red}{0.00} \pm 0.00$ & $\textcolor{red}{0.00} \pm 0.00$ & $0.28 \pm 0.00$ \\
\bottomrule
\end{tabular}
}
\vspace{-0.5em}
\end{table}

\textbf{Results.}
The results are presented in~\cref{tbl:final_nonlinear_abbrv}.
Overall, our main findings can be summarised as:
\begin{enumerate}[label=(\roman*), topsep=0pt,itemsep=0pt,leftmargin=*]
\item it can be difficult to design image-level augmentations that leave \textit{specific} latent factors invariant;
\item augmentations \& latent transformations generally have a similar effect on groups of latents;
\item augmentations that yield good classification performance induce variation in all other latents.
\end{enumerate}
We 
observe that, similar to directly varying the hue latents, colour distortion leads to
a discarding of hue information as style, and a preservation of (object) position
as content.
Crops, similar to varying the position latents, lead to a discarding of position
as style, and a preservation of background and object hue 
as content, the latter assuming crops are sufficiently large.
\looseness-1 In contrast, image-level rotation 
affects both the object rotation and position, and thus
deviates
from only varying the rotation latents.

Whereas class is always preserved as content when generating views with latent transformations,
when using data augmentations,
we can only reliably decode class when crops \& colour distortion are used in conjunction---a result which mirrors evaluation on ImageNet~\citep{chen2020simple}. As can be seen by our evaluation of crops \& colour distortion in isolation, while colour distortion leads to a discarding of hues as style, crops lead to a discarding of position \& rotation as style. Thus, when used in conjunction, class is isolated as the sole content variable. 
{See~\Cref{app:causal} for additional analysis.}

\subsection{{Additional experiments and ablations}}
\label{sec:additional_experiments}
We also perform an ablation 
on $\text{dim}(\cbh)$ 
for the synthetic setting from~\cref{sec:experiment_1_numerical_simulation}, { see~\Cref{app:numerical} for details}.
Generally, we find that if $\text{dim}(\cbh)<n_c$,
there is insufficient capacity to encode all content, so a lower-dimensional mixture of content is learnt. 
Conversely, if $\text{dim}(\cbh)>n_c$, the excess capacity is used to encode some style information (as that increases entropy).
Further, we repeat our analysis from~\cref{sec:experiment_2_causal3dident} using \texttt{BarlowTwins}~\cite{zbontar2021barlow} (instead of \texttt{SimCLR}) which, as discussed at the end of~\cref{sec:theory_discriminative}, is also loosely related to~\cref{thm:CL_MaxEnt}. The results mostly mirror those obtained for~\texttt{SimCLR} and  presented in~\cref{tbl:final_nonlinear_abbrv}, {see~\Cref{app:causal} for details}.
Finally, we ran the same experimental setup as in~\cref{sec:experiment_2_causal3dident} also on the \textit{MPI3D-real} dataset~\cite{gondal2019transfer} containing $>1$ million \textit{real} images with ground-truth annotations of 3D objects being moved by a robotic arm.
Subject to some caveats, the results show a similar trend as those on \textit{Causal3DIdent}, {see~\Cref{app:mpi} for details}.

\section{Discussion}
\label{sec:discussion}

\textbf{Theory vs practice.}
We have made an effort to tailor our problem formulation~(\cref{sec:problem_formulation}) to the setting of data augmentation with content-preserving transformations.
However, some of our more technical assumptions, which are necessary to 
prove block-identifiability of the invariant content partition, may not hold exactly in practice.
This is apparent, e.g., from our second experiment~(\cref{sec:experiment_2_causal3dident}), where we observe that---while class should, in principle, always be invariant across views (i.e., content)---when using
\textit{only} crops, colour distortion, or rotation,  $\gb$ appears to encode \emph{shortcuts}~\citep{geirhos2020shortcuts,pezeshki2020gradient}.%
\footnote{class is distinguished by shape, a feature commonly 
unused in 
downstream tasks on natural images%
~\citep{geirhos2019imagenettrained}}
Data augmentation, unlike latent transformations, generates views~$\xbt$ which 
are not restricted to the 11-dim.\ image
manifold~$\Xcal$ 
corresponding to the generative process of \textit{Causal3DIdent}, but may introduce additional variation: e.g., colour distortion leads to a rich combination of colours, typically a 3-dim.\ feature,
whereas \textit{Causal3DIdent} only contains one degree of freedom (hue).
With additional factors, any introduced invariances may be encoded as content in place of class.
Image-level augmentations also tend to change multiple latent factors in a correlated way, which may violate assumption \textit{(iii)} of our theorems, i.e., that $p_{\sbt_\A|\sb_\A}$ is fully-supported locally.
{We also assume that $\zb$ is continuous, even though \textit{Causal3DIdent} and most disentanglement datasets also contain discrete latents. This is a very common assumption in the related literature~\cite{locatello2020weakly,gresele2019incomplete,zimmermann2021contrastive,klindt2020slowvae,locatello2019challenging,khemakhem2020variational,hyvarinen1999nonlinear,hyvarinen2019nonlinear,hyvarinen2016unsupervised} that may be relaxed in future work.}
Moreover, our theory holds asymptotically and at the global optimum, whereas in practice we solve a non-convex optimisation problem with a finite sample and need to approximate the entropy term in~\eqref{eq:CL_MSE_MaxEnt_objective}, e.g., using a finite number of negative pairs. The resulting  challenges for optimisation may be further accentuated by the higher dimensionality of $\Xcal$ induced by image-level augmentations.
Finally, we remark that while, for simplicity, we have presented our theory for pairs $(\xb,\xbt)$ of original and augmented examples, in practice, using pairs $(\xbt,\xbt')$ of two augmented views typically yields better performance. All of our assumptions (content invariance, changing style, etc) and theoretical results still apply to the latter case. We believe that using two augmented views helps because it leads to \textit{increased variability} across the pair: for if $\xbt$ and $\xbt'$ differ from $\xb$ in style subsets $A$ and $A'$, respectively, then $(\xbt,\xbt')$ differ from each other (a.s.) in the union~$A\cup A'$.

\textbf{Beyond entropy regularisation.}
We have shown a clear link between an identifiable maximum entropy approach to SSL~(\cref{thm:CL_MaxEnt}) and \texttt{SimCLR}~\cite{chen2020simple} based on the analysis of~\cite{wang2020understanding}, and have discussed an intuitive connection to the notion of redundancy reduction used in \texttt{BarlowTwins}~\cite{zbontar2021barlow}. Whether other types of regularisation such as the architectural approach pursued in \texttt{BYOL}~\cite{grill2020byol} and \texttt{SimSiam}~\cite{chen2020exploring} can also be linked to entropy maximisation, remains an open question.
Deriving similar results to~\cref{thm:CL_MaxEnt} with other regularisers is a promising direction for future research, c.f.~\cite{tian2021understanding}.

\textbf{The choice of augmentation technique implicitly defines content and style.}
As we have defined content as the part of the representation which is always left invariant across views, the choice of augmentation implicitly determines the content-style partition.
This is particularly important
to keep in mind
when applying SSL with data augmentation to safety-critical domains, such as medical imaging.
We also advise caution when using data augmentation to identify specific latent properties, since, as observed in~\cref{sec:experiment_2_causal3dident}, image-level transformations may affect the underlying ground-truth factors in unanticipated ways.
Also note that, \textit{for a given downstream task}, we may not want to discard all style information since style variables may still be correlated with the task of interest and may thus help improve predictive performance. \textit{For arbitrary downstream tasks}, however, where style may change in an adversarial way, it can be shown that only using content is optimal~\cite{rojas2018invariant}.

{\textbf{\textit{What} vs \textit{how} information is encoded.}
We focus on \textit{what} information is learnt by SSL with data augmentations by specifying a generative process and studying identifiability of the latent representation.
Orthogonal to this, a different line of work instead studies \textit{how} information is encoded by analysing the sample complexity needed to solve a \textit{given downstream task} using a \textit{linear} predictor~\cite{arora2019theoretical,tosh2020contrastive,lee2020predicting,tosh2021contrastive,tsai2020self,tian2021understanding}.
Provided that downstream tasks only involve content, we can draw some comparisons. 
Whereas our results recover content only up to arbitrary invertible nonlinear functions~(see~\cref{def:block-identifiability}), our problem setting is more general: \cite{arora2019theoretical,lee2020predicting} assume (approximate) independence of views $(\xb,\xbt)$ given the task (content), while \cite{tosh2021contrastive,tsai2020self} assume (approximate) independence between one view and the task (content) given the other view, neither of which hold in our setting. 
}

\textbf{Conclusion.}
Existing representation learning approaches typically assume mutually independent latents, though 
dependencies clearly exist in nature~\cite{scholkopf2021toward}. 
We demonstrate that
in a \textit{non-i.i.d.}~scenario, e.g., by constructing multiple views of the same example
with data augmentation,
we can learn useful representations in the presence of this neglected phenomenon. 
More specifically, the present work 
contributes, to the best of our knowledge,
the first: (i) identifiability result under
\textit{arbitrary dependence} between
latents;
and (ii) empirical study that
evaluates the effect of data augmentations 
not only on classification, but also on other \textit{continuous}  ground-truth latents.
Unlike existing identifiability results which 
rely on \textit{change} as a learning signal, our approach 
aims to identify 
what is always shared across views, i.e., also using \textit{invariance} as a learning signal.
We hope that this change in perspective
will be helpful for
applications such as optimal style transfer
or disentangling shape from pose in vision, 
{and inspire other types of \textit{counterfactual training} to recover a more fine-grained causal representation.}

\clearpage

\section*{Acknowledgements}
We thank: the anonymous reviewers for several helpful suggestions that triggered improvements in theory and additional experiments; Cian Eastwood, Ilyes Khemakem, Michael Lohaus, Osama Makansi, Ricardo Pio Monti, Roland Zimmermann, Weiyang Liu, and the MPI T\"ubingen causality group for helpful discussions and comments; Hugo Yèche for pointing out a mistake in~\S 2 of an earlier version of the manuscript; 
github user \texttt{TangTangFei} for catching a bug in the implementation of the experiments from~\cref{sec:experiment_1_numerical_simulation} (that has been corrected in this version);
and the International Max Planck Research School for Intelligent Systems (IMPRS-IS) for supporting YS.

\section*{Funding Transparency Statement}
WB acknowledges support via his Emmy Noether Research Group funded by the German Science Foundation (DFG) under grant no. BR 6382/1-1 as well as support by Open Philantropy and the Good Ventures Foundation.
This work was supported by the German Federal Ministry of Education and Research (BMBF): Tübingen AI Center, FKZ: 01IS18039A, 01IS18039B; and by the Machine Learning Cluster of Excellence, EXC number 2064/1 – Project number 390727645.

{\small
\bibliographystyle{plainnat}
\bibliography{references}
}

\clearpage
\appendix

\begin{center}
{\centering \LARGE APPENDIX}
\vspace{0.8cm}
\end{center}

\section*{Overview:}

\begin{itemize}
    \item \Cref{app:proofs} contains the full proofs for all theoretical results from the main paper.
    \item \Cref{app:data_set_details} contains additional details and plots for the \textit{Causal3DIdent} dataset.
    \item \Cref{app:additional_results} contains additional experimental results and analysis.
    \item \Cref{app:experimental_details} contains additional implementation details for our experiments.
\end{itemize}

\section{Proofs}
\label{app:proofs}
We now present the full detailed proofs of our three theorems
which were briefly sketched in the main paper. We remark that these proofs build on each other, in the sense that the (main) step 2 of the proof of~\cref{thm:main} is also used  in the proofs of~\cref{thm:CL,thm:CL_MaxEnt}.

\subsection{Proof of\texorpdfstring{~\Cref{thm:main}}{thmmain}}
\label{app:proof_generative}

\generative*

\begin{proof}
The proof consists of two main steps.

In the first step, we use assumption \textit{(i)} and the matching likelihoods to show that the representation $\zbh=\gb(\xb)$ extracted by $\gb=\hat{\fb}^{-1}$ is related to the true latent $\zb$ by a smooth invertible mapping $\hb$, and that $\zbh$ must satisfy invariance across $(\xb,\xbt)$ in the first $n_c$ (content) components almost surely (a.s.) with respect to (w.r.t.) the true generative process.

In the second step, we then use assumptions \textit{(ii)} and \textit{(iii)}
to  prove (by contradiction) that $\cbh:=\zbh_{1:n_c}=\hb(\zb)_{1:n_c}$ can, in fact, only depend on the true content $\cb$ and not on the true style $\sb$, for otherwise the invariance established in the first step would have be violated with probability greater than zero.

To provide some further intuition for the second step, the assumed generative process implies that $(\cb,\sb,\sbt)|A$ is constrained to take values (a.s.) in a subspace $\Rcal$ of $\Ccal\times\Scal\times\Scal$ of dimension $n_c+n_s+|A|$ (as opposed to dimension $n_c+2n_s$ for $\Ccal\times\Scal\times\Scal$). In this context, assumption \textit{(iii)} implies that  $(\cb,\sb,\sbt)|A$ has a density with respect to a measure on this subspace equivalent to the Lebesgue measure on $\RR^{n_c+n_s+|A|}$. This equivalence implies, in particular, that this ''subspace measure'' is strictly positive: it takes strictly positive values on open sets of $\Rcal$ seen as a topological subspace of $\Ccal\times\Scal\times\Scal$. These open sets are defined by the induced topology: they are the intersection of the open sets of $\Ccal\times\Scal\times\Scal$ with $\Rcal$. An open set $B$ of $V$ on which $p(\cb,\sb,\sbt|A) >0$ then satisfies $P(B|A)>0$. We look for such an open set to prove our result.

\paragraph{Step 1.}

From the assumed data generating process described in~\cref{sec:problem_formulation}---in particular, from the form of the model conditional $\hat{p}_{\zbt|\zb}$ described in~\cref{ass:content_invariance,ass:style_changes}---it follows that
\begin{align}
    \label{eq:invariance_constraint}
    \gb(\xb)_{1:n_c} &= \gb(\xbt)_{1:n_c}
\end{align}
a.s., i.e., with probability one, w.r.t.\ the model distribution $\hat{p}_{\xb,\xbt}$.

Due to the assumption of matching likelihoods, the invariance in~\eqref{eq:invariance_constraint} must also hold (a.s.) w.r.t.\ the true data distribution $p_{\xb,\xbt}$.

Next, since $\fb,\hat{\fb}:\Zcal\rightarrow\Xcal$ are smooth and invertible functions by assumption \textit{(i)}, there exists
a  smooth and invertible function $\hb=\gb\circ \fb:\Zcal\rightarrow\Zcal$ such that
\begin{equation}
\label{eq:relation_between_f_and_g}
\gb=\hb\circ\fbinv.
\end{equation}

Substituting~\eqref{eq:relation_between_f_and_g} into~\eqref{eq:invariance_constraint}, we obtain (a.s.\ w.r.t.\ $p$):
\begin{equation}
\label{eq:cbh}
\cbh
:=\zbh_{1:n_c}=\gb(\xb)_{1:n_c}
=\hb(\fbinv(\xb))_{1:n_c}
=\hb(\fbinv(\xbt))_{1:n_c}
\end{equation}

Substituting $\zb=\fbinv(\xb)$ and $\zbt=\fbinv(\xbt)$ into~\eqref{eq:cbh}, we obtain (a.s.\ w.r.t.\ $p$)
\begin{equation}
\label{eq:contradicted_expression}
    \cbh
    =\hb(\zb)_{1:n_c}
    =\hb(\zbt)_{1:n_c}.
\end{equation}

It remains to show that $\hb(\cdot)_{1:n_c}$ can only be a function of $\cb$, i.e., does not depend on any other (style) dimension of $\zb=(\cb,\sb)$.

\paragraph{Step 2.}
Suppose \textit{for a contradiction} that $\hb_c(\cb,\sb):=\hb(\cb,\sb)_{1:n_c}=\hb(\zb)_{1:n_c}$ depends on some component of the style variable $\sb$:
\begin{equation}
    \label{eq:contradiction_assumption}
    \exists l\in\{1, ..., n_s\}, (\cb^*,\sb^*)\in \Ccal\times \Scal,
    \quad \quad
    \text{s.t.}
    \quad \quad
    \frac{\partial \hb_c}{\partial s_{l}}(\cb^*,\sb^*)\neq 0,
\end{equation}
that is, we assume that the partial derivative of $\hb_c$ w.r.t.\ some style variable $s_l$ is non-zero at some point $\zb^*=(\cb^*,\sb^*)\in\Zcal=\Ccal\times\Scal$.

Since $\hb$ is smooth, so is $\hb_c$. Therefore, $\hb_c$ has continuous (first) partial derivatives.

By continuity of the partial derivative, $\frac{\partial \hb_c}{\partial s_{l}}$ must be non-zero in a neighbourhood of $(\cb^*,\sb^*)$, i.e.,
\begin{equation}
\label{eq:monotonicity}
\exists \eta>0 
\quad \quad \text{s.t.} \quad \quad 
s_{l} \mapsto \hb_c\big(\cb^*,(\sb^*_{-l},s_l)\big)
\quad  \text{is strictly monotonic on} \quad
(s^*_l-\eta,s^*_l+\eta),
\end{equation}
where $\sb_{-l}\in\Scal_{-l}$ denotes the vector of remaining style variables except $s_l$. 

Next, define the auxiliary function $\psi:\Ccal\times \Scal\times \Scal\rightarrow \RR_{\geq 0}$ as follows:
\begin{equation}
\label{eq:psi_def}
    \psi(\cb,\sb,\sbt):=|\hb_c(\cb,\sb)-\hb_c(\cb,\sbt)|\geq0\,.
\end{equation}

To obtain a contradiction to the invariance condition~\eqref{eq:contradicted_expression} from Step 1 under  assumption~\eqref{eq:contradiction_assumption},
it remains to show that $\psi$ from~\eqref{eq:psi_def} is \textit{strictly positive} with probability greater than zero (w.r.t.\ $p$).

First, the strict monotonicity from~\eqref{eq:monotonicity}
implies that 
\begin{equation}
\label{eq:positivity}
\psi\big(\cb^*,(\sb_{-l}^*,s_l),(\sb_{-l}^*,\tilde{s}_l)\big)
>0\,,\quad \forall (s_l,\tilde{s}_l)\in (s^*_l,s^*_l+\eta)\times(s^*_l-\eta,s^*_l) \,.
\end{equation}
Note that in order to obtain the strict inequality in~\eqref{eq:positivity}, it is important that $s_l$ and $\tilde{s}_l$ take values in \textit{disjoint} open subsets of the interval $(s^*_l-\eta,s^*_l+\eta)$ from~\eqref{eq:monotonicity}.

Since $\psi$ is a composition of continuous functions (absolute value of the difference of two continuous functions), $\psi$ is continuous.

Consider the open set $\RR_{>0}$, and recall that, under a continuous function, pre-images (or inverse images) of open sets are always \textit{open}.

Applied to the continuous function $\psi$, this pre-image corresponds to an \textit{open} set 
\begin{equation}
    \Ucal\subseteq \Ccal\times \Scal\times \Scal
\end{equation}
in the domain of $\psi$ on which $\psi$ is strictly positive. 

Moreover, due to~\eqref{eq:positivity}:
\begin{equation}
\label{eq:U_nonempty}
    \{\cb^*\}\times\left(\{\sb^*_{-l}\}\times(s^*_l,s^*_l+\eta)\right) \times \left(\{\sb^*_{-l}\}\times(s^*_l-\eta,s^*_l)\right)\subset \Ucal,
\end{equation}
so $\Ucal$ is \textit{non-empty}.

Next, by assumption \textit{(iii)}, there exists at least one subset $A\subseteq\{1, ..., n_s\}$ of changing style variables such that $l\in A$ and $p_A(A)>0$;
pick one such subset and call it $A$.

Then, also by assumption \textit{(iii)}, for any $\sb_A\in\Scal_A$, there is an open subset $\Ocal(\sb_A)\subseteq \Scal_A$ containing $\sb_A$, such that $p_{\sbt_\A|\sb_\A}(\cdot|\sb_A)>0$ within $\Ocal(\sb_A)$. 

Define the following space
\begin{equation}
    \Rcal_A:=\{(\sb_A,\sbt_A):\sb_A\in\Scal_A,\sbt_A\in\Ocal(\sb_A)\}
\end{equation}
and, recalling that $\Ac=\{1, ..., n_s\} \setminus A$ denotes the complement of $A$, define
\begin{equation}
    \Rcal := \Ccal\times \Scal_{\Ac}\times \Rcal_A
\end{equation}
which is a topological subspace of
$\Ccal\times\Scal\times\Scal$.

By assumptions \textit{(ii)} and \textit{(iii)},
$p_\zb$ is smooth and fully supported, and $p_{\sbt_A|\sb_A}(\cdot|\sb_A)$ is smooth and fully supported on $\Ocal(\sb_A)$ for any $\sb_A\in\Scal_A$.
Therefore,
the measure
$\mu_{(\cb,\sb_{\Ac},\sb_{A},\sbt_{A})|A}$ has fully supported, strictly-positive density on $\Rcal$
w.r.t.\ a strictly positive measure on $\Rcal$.
In other words, $p_\zb \times p_{\sbt_A|\sb_A}$ is fully supported (i.e., strictly positive) on $\Rcal$.

Now consider the intersection $\Ucal\cap \Rcal$ of 
the open set $\Ucal$ 
with the topological subspace $\Rcal$.

Since $\Ucal$ is open, by the definition of topological subspaces, the intersection $\Ucal\cap \Rcal\subseteq \Rcal$ is \textit{open} in $\Rcal$, (and thus has the same dimension as $\Rcal$ if non-empty). 

Moreover, since $\Ocal(\sb_A^*)$ is open containing $\sb_A^*$, there exists $\eta'>0$ such that $\{\sb^*_{-l}\}\times(s^*_l-\eta',s^*_l)\subset \Ocal(\sb_A^*)$. Thus, for $\eta''=\min(\eta,\eta')>0$,
\begin{equation}
     \{\cb^*\}\times\{\sb^*_{\Ac}\}
     \times\left(\{\sb^*_{A\setminus\{l\}}\}\times (s^*_l,s^*_l+\eta)\right)
     \times \left(\{\sb^*_{A\setminus\{l\}}\}\times(s^*_l-\eta'',s^*_l)\right)\subset \Rcal.
\end{equation}
In particular, this implies that 
\begin{equation}
\label{eq:R_nonempty} 
    \{\cb^*\}\times\left(\{\sb^*_{-l}\}\times(s^*_l,s^*_l+\eta)\right) \times \left(\{\sb^*_{-l}\}\times(s^*_l-\eta'',s^*_l)\right)\subset \Rcal,
\end{equation}
Now, since $\eta''\leq\eta$, the LHS of~\eqref{eq:R_nonempty} is also in $\Ucal$ according to~\eqref{eq:U_nonempty}, so the intersection $\Ucal\cap\Rcal$ is \textit{non-empty}.

In summary, the intersection $\Ucal\cap\Rcal\subseteq \Rcal$:
\begin{itemize}
    \item is non-empty (since both $\Ucal$ and $\Rcal$ contain the LHS of~\eqref{eq:U_nonempty});
    \item is an open subset of the topological subspace $\Rcal$ of $\Ccal\times\Scal\times\Scal$ (since it is the intersection of an open set, $\Ucal$, with $\Rcal$);
    \item satisfies $\psi>0$ (since this holds for all of $\Ucal$);
    \item is fully supported w.r.t. the generative process (since this holds for all of $\Rcal$).
\end{itemize}

As a consequence,
\begin{equation}
    \PP\left(\psi(\cb,\sb,\sbt)>0
    |A\right)\geq\PP(\Ucal\cap \Rcal)>0,
\end{equation}
where $\PP$ denotes probability w.r.t.\ the true generative process $p$.

Since $p_A(A)>0$, this is a \textbf{contradiction} to the invariance~\eqref{eq:contradicted_expression} from Step 1.

Hence, assumption~\eqref{eq:contradiction_assumption} cannot hold, i.e., $\hb_c(\cb,\sb)$ does not depend on any style variable $s_l$. It is thus only a function of $\cb$, i.e., $\cbh=\hb_c(\cb)$.

Finally, smoothness and invertibility of $\hb_c:\Ccal\rightarrow\Ccal$ follow from smoothness and invertibility of $\hb$, as established in Step 1.

This concludes the proof that $\cbh$ is related to the true content $\cb$ via a smooth invertible mapping.
\end{proof}

\subsection{Proof of\texorpdfstring{~\Cref{thm:CL}}{thmcl}}
\label{app:proof_discriminative}
\discriminative*

\begin{proof}
As in the proof of~\cref{thm:main}, the proof again consists of two main steps.

In the first step, we show that the representation $\zbh=\gb(\xb)$ extracted by any $\gb$ that minimises $\Lcal_\mathrm{Align}$ is related to the true latent $\zb$ through a smooth invertible mapping $\hb$, and that $\zbh$ must satisfy invariance across $(\xb,\xbt)$ in the first $n_c$ (content) components almost surely (a.s.) with respect to (w.r.t.) the true generative process.

In the second step, we use the same argument by contradiction as in Step 2 of the proof of~\cref{thm:main}, to show that $\cbh=\hb(\zb)_{1:n_c}$ can only depend on the true content $\cb$ and not on style $\sb$.

\paragraph{Step 1.}
From the form of the objective~\eqref{eq:CL_MSE_objective}, it is clear that $\Lcal_\mathrm{Align}\geq 0$ with equality if and only if $\gb(\xbt)_{1:n_c}=\gb(\xb)_{1:n_c}$ for all $(\xb, \xbt)$ s.t. $p_{\xb, \xbt}(\xb, \xbt)>0$.

Moreover, it follows from the assumed generative process that the global minimum of zero is attained by the true unmixing $\fbinv$ since
\begin{equation}
\fbinv(\xb)_{1:n_c}=\cb=\cbt=\fbinv(\xbt)_{1:n_c}
\end{equation}
holds a.s.\ (i.e., with probability one) w.r.t.\ the true generative process $p$. 

Hence, there exists at least one smooth invertible function ($\fbinv$) which attains the global minimum.

Let $\gb$ be \textit{any} function attaining the global minimum of $\Lcal_\mathrm{Align}$ of zero.

As argued above, this implies that (a.s.\ w.r.t.\ $p$):
\begin{equation}
    \gb(\xbt)_{1:n_c}=\gb(\xb)_{1:n_c}.
\end{equation}

Writing $\gb=\hb \circ \fbinv$, where $\hb$ is the smooth, invertible function $\hb=\gb \circ \fb$
we obtain (a.s.\ w.r.t.\ $p$):
\begin{equation}
    \cbh=\hb(\zbt)_{1:n_c}=\hb(\zb)_{1:n_c}.
\end{equation}

Note that this is the same invariance condition as~\eqref{eq:contradicted_expression} derived in Step 1 of the proof of~\cref{thm:main}.

\paragraph{Step 2.}
It remains to show that $\hb(\zb)_{1:n_c}$ can only depend on the true content $\cb$ and not on any of the style variables $\sb$.
To show this, we use the same Step 2 as in the proof of~\cref{thm:main}.
\end{proof}

\subsection{Proof of\texorpdfstring{~\Cref{thm:CL_MaxEnt}}{thmclmaxent}}
\label{app:proof_CL_MaxEnt}
\discriminativeMaxEnt*

\begin{proof}
The proof consists of three main steps.

In the first step, we show that the representation $\cbh=\gb(\xb)$ extracted by any smooth function $\gb$ that minimises~\eqref{eq:CL_MSE_MaxEnt_objective} 
is related to the true latent $\zb$ through a smooth mapping $\hb$; that $\cbh$ must satisfy invariance across $(\xb,\xbt)$ almost surely (a.s.) with respect to (w.r.t.) the true generative process $p$; and that $\cbh$ must follow a uniform distribution on $(0,1)^{n_c}$.

In the second step, we use the same argument by contradiction as in Step 2 of the proof of~\cref{thm:main}, to show that $\cbh=\hb(\zb)$ can only depend on the true content $\cb$ and not on style $\sb$.

Finally, in the third step, we show that $\hb$ must be a bijection, i.e., invertible, using a result from~\cite{zimmermann2021contrastive}.

\paragraph{Step 1.}

The global minimum of $\Lcal_{\mathrm{AlignMaxEnt}}$ is reached when the first term (alignment) is minimised (i.e., equal to zero) and the second term (entropy) is maximised.

Without additional moment constraints, the \textit{unique} maximum entropy distribution on $(0,1)^{n_c}$ is the uniform distribution~\cite{jaynes1982rationale,cover2012elements}.

First, we show that there exists a smooth function $\gb^*:\Xcal\rightarrow(0,1)^{n_c}$ which attains the global minimum of $\Lcal_{\mathrm{AlignMaxEnt}}$.

To see this, consider the function $\fbinv_{1:n_c}:\Xcal\rightarrow\Ccal$, i.e., the inverse of the true mixing~$\fb$, restricted to its first $n_c$ dimensions.
This exists and is smooth since $\fb$ is smooth and invertible by assumption \textit{(i)}.
Further, we have $\fbinv(\xb)_{1:n_c}=\cb$ by definition.

We now build a function $\db:\Ccal\rightarrow(0,1)^{n_c}$ which maps $\cb$ to a uniform random variable on $(0,1)^{n_c}$ using a recursive construction known as the \textit{Darmois construction}~\cite{darmois1951construction,hyvarinen1999nonlinear}. 

Specifically, we define
\begin{equation}
    d_i(\cb) := F_i(c_i|\cb_{1:i-1})=\PP(C_i\leq c_i|\cb_{1:i-1}), 
    \quad \quad \quad i=1, ..., n_c,
\end{equation}
where $F_i$ denotes the conditional cumulative distribution function (CDF) of $c_i$ given $\cb_{1:i-1}$.

By construction, $\db(\cb)$ is uniformly distributed on $(0,1)^{n_c}$~\cite{darmois1951construction,hyvarinen1999nonlinear}. 

Further, $\db$ is smooth by the assumption that $p_\zb$ (and thus $p_\cb$) is a smooth density.

Finally, we define
\begin{equation}
\label{eq:global_min_max_ent}
    \gb^*:=\db\circ\fbinv_{1:n_c}:\Xcal\rightarrow(0,1)^{n_c},
\end{equation}
which is a smooth function since it is a composition of two smooth functions. 

\begin{claim}
\label{claim:global_min_max_ent}
$\gb^*$ as defined in~\eqref{eq:global_min_max_ent} attains the global minimum of $\Lcal_{\mathrm{AlignMaxEnt}}$.
\end{claim}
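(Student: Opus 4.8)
The plan is to show that $\gb^*$ drives \emph{both} terms of $\Lcal_{\mathrm{AlignMaxEnt}}$ to their respective minima simultaneously, from which it follows immediately that $\gb^*$ attains the global minimum. Concretely, I will establish (a) that the alignment term vanishes for $\gb^*$, and (b) that the differential entropy $H(\gb^*(\xb))$ is maximal; combined with the elementary lower bound $\Lcal_{\mathrm{AlignMaxEnt}}(\gb)\ge 0$ valid for every admissible $\gb\colon\Xcal\to(0,1)^{n_c}$, this yields the claim.

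For (a): by construction $\gb^*(\xb)=\db(\fbinv(\xb)_{1:n_c})=\db(\cb)$ and likewise $\gb^*(\xbt)=\db(\cbt)$. By \cref{ass:content_invariance} we have $\cbt=\cb$ almost surely w.r.t.\ $p_{\zb,\zbt}$; transporting this statement through the diffeomorphism $\fb$ (which pushes $p_{\zb,\zbt}$ forward to $p_{\xb,\xbt}$) shows $\gb^*(\xb)=\gb^*(\xbt)$ almost surely w.r.t.\ $p_{\xb,\xbt}$. Hence $\EE_{(\xb,\xbt)\sim p_{\xb,\xbt}}[\bignorm{\gb^*(\xb)-\gb^*(\xbt)}_2^2]=0$, which is the minimum of this non-negative term.

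For (b): the Darmois construction guarantees that $\db(\cb)$ is uniformly distributed on $(0,1)^{n_c}$, so $\gb^*(\xb)=\db(\cb)$ has the uniform law on $(0,1)^{n_c}$. Since the uniform distribution is the unique maximum-entropy distribution supported on the bounded set $(0,1)^{n_c}$ (with entropy $0$), we have $H(\gb(\xb))\le 0$ for every admissible $\gb$, so $-H(\gb^*(\xb))=0$ is the minimal value of the second term. Combining (a) and (b) gives $\Lcal_{\mathrm{AlignMaxEnt}}(\gb)\ge 0=\Lcal_{\mathrm{AlignMaxEnt}}(\gb^*)$ for all admissible $\gb$, so $\gb^*$ is a global minimiser.

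The only delicate points here are measure-theoretic rather than conceptual: transferring the almost-sure content-invariance from latent space to observation space along $\fb$, and invoking the maximum-entropy characterisation of the uniform law on a bounded domain (noting in passing that degenerate choices such as a constant $\gb$ yield $H=-\infty$ and are thus automatically excluded from the minimisers). I do not anticipate a genuine obstacle: this claim deliberately isolates the ``easy direction'' --- exhibiting one explicit global minimiser --- so that the subsequent steps can focus on characterising \emph{arbitrary} minimisers of~\eqref{eq:CL_MSE_MaxEnt_objective} via the invariance-and-contradiction argument of \cref{thm:main} and the invertibility argument adapted from \cite{zimmermann2021contrastive}.
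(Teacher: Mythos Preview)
Your proposal is correct and follows essentially the same approach as the paper: you show that $\gb^*$ simultaneously drives the alignment term to zero (via $\cb=\cbt$ a.s.) and the negative-entropy term to zero (via the Darmois construction yielding a uniform law), and conclude from the global lower bound $\Lcal_{\mathrm{AlignMaxEnt}}\ge 0$. The paper's proof is the same computation, perhaps slightly terser; your explicit mention of the pushforward along $\fb$ and of degenerate constant maps is additional polish rather than a different argument.
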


\textbf{Proof of Claim~\ref{claim:global_min_max_ent}.}
Using $\fbinv(\xb)_{1:n_c}=\cb$ and $\fbinv(\xbt)_{1:n_c}=\cbt$, we have
\begin{align}
    \Lcal_{\mathrm{AlignMaxEnt}}(\gb^*)
    &=
    \EE_{(\xb,\xbt)\sim p_{(\xb, \xbt)}}
\left[
\bignorm{
\gb^*(\xb)-\gb^*(\xbt)
}_2^2
\right] - H\left(\gb^*(\xb)\right)
\\
&=
\EE_{(\xb,\xbt)\sim p_{(\xb, \xbt)}}
\left[
\bignorm{
\db(\cb)-\db(\cbt)
}^2_2
\right] - H\left(\db(\cb)\right)
\\
&= 0 
\end{align}
where in the last step we have used the fact that $\cb=\cbt$ almost surely w.r.t.\ to the ground truth generative process $p$ described in~\cref{sec:problem_formulation}, so the first term is zero; and the fact that $\db(\cb)$ is uniformly distributed on $(0,1)^{n_c}$ and the uniform distribution on the unit hypercube has zero entropy, so the second term is also zero.

Next, let $\gb:\Xcal\rightarrow(0,1)^{n_c}$ be \textit{any} smooth function which attains the global minimum of~\eqref{eq:CL_MSE_MaxEnt_objective}, i.e.,
\begin{equation}
\label{eq:g_align_maxent}
\Lcal_{\mathrm{AlignMaxEnt}}(\gb)=\EE_{(\xb,\xbt)\sim p_{(\xb, \xbt)}}
\left[
\bignorm{
\gb(\xb)-\gb(\xbt)
}^2_2
\right] - H\left(\gb(\xb)\right)
=0.
\end{equation}

Define
    $\hb:=\gb\circ\fb:\Zcal\rightarrow (0,1)^{n_c}$
which is smooth because both $\gb$ and $\fb$ are smooth.

Writing $\xb=\fb(\zb)$, ~\eqref{eq:g_align_maxent} then implies in terms of $\hb$:
\begin{align}
\label{eq:MaxEntinvariance}
\EE_{(\xb,\xbt)\sim p_{(\xb, \xbt)}}
\left[
\bignorm{
\hb(\zb)-\hb(\zbt)
}^2_2
\right]
&= 0\, ,
\\
\label{eq:MaxEntuniformity}
H\left(\hb(\zb)\right)
&=
0\, .
\end{align}

Equation \eqref{eq:MaxEntinvariance} implies that the same invariance condition~\eqref{eq:contradicted_expression} used in the proofs of~\cref{thm:main,thm:CL} must hold (a.s. w.r.t. $p$), and~\eqref{eq:MaxEntuniformity} implies that $\cbh=\hb(\zb)$ must be uniformly distributed on $(0,1)^{n_c}$.

\paragraph{Step 2.}
Next, we show that $\hb(\zb)=\hb(\cb,\sb)$ can only depend on the true content $\cb$ and not on any of the style variables $\sb$. For this we use the same Step 2 as in the proofs of~\cref{thm:main,thm:CL}.

\paragraph{Step 3.}
Finally, we show that the mapping $\cbh=\hb(\cb)$ is invertible.

To this end, we make use of the following result from~\cite{zimmermann2021contrastive}.

\begin{proposition}[Proposition 5 of~\cite{zimmermann2021contrastive}]
\label{prop:bijectivity}
Let $\Mcal, \Ncal$ be simply connected and oriented $\Ccal^1$ manifolds without boundaries and $h : \Mcal \rightarrow \Ncal$ be a differentiable map.
Further, let the random variable $\zb \in \Mcal$ be distributed according to $\zb \sim p(\zb)$ for a regular density function $p$, i.e., $0 < p < \infty$.
If the pushforward $p_{\#h}(\zb)$ of $p$ through $h$ is also a regular density, i.e., $0 < p_{\#h} < \infty$,
then $h$ is a bijection.
\end{proposition}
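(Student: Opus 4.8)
The plan is to prove bijectivity by establishing, in order, that $h$ is a local diffeomorphism, that $h$ is surjective, and that $h$ is injective; together these show $h$ is a global $C^1$ diffeomorphism, hence in particular a bijection. Throughout I take $\dim\Mcal=\dim\Ncal=:n$, which holds in the application ($\Mcal=\Ccal$, $\Ncal=(0,1)^{n_c}$) and is in any case forced by the hypotheses: if $\dim\Mcal<\dim\Ncal$ then $h(\Mcal)$ is Lebesgue-null in $\Ncal$, so the pushforward could not be a density there.

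\textbf{Step 1: $h$ is a local diffeomorphism.} The first step is to show $\det Dh$ vanishes nowhere. Suppose for contradiction that $\det Dh(x_0)=0$ at some $x_0$ with $p(x_0)>0$. Since $\det Dh$ is continuous, for every $\epsilon>0$ there is a small ball $B_r(x_0)$ on which $|\det Dh|<\epsilon$, so by the area formula $\mathrm{vol}\big(h(B_r(x_0))\big)\le\int_{B_r(x_0)}|\det Dh|<\epsilon\,\mathrm{vol}(B_r(x_0))$. But $h$ transports the $p$-mass of $B_r(x_0)$ --- which is at least $\tfrac{1}{2}p(x_0)\,\mathrm{vol}(B_r(x_0))$ for $r$ small --- into that set, so the average of $p_{\#h}$ over $h(B_r(x_0))$ exceeds $p(x_0)/(2\epsilon)$; letting $\epsilon\to0$ forces $p_{\#h}$ to blow up near $h(x_0)$, contradicting $p_{\#h}<\infty$. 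Hence $\det Dh$ is nowhere zero, and the inverse function theorem makes $h$ a local $C^1$ diffeomorphism; in particular $h$ is an open map and every $y\in\Ncal$ is a regular value.

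\textbf{Step 2: $h$ is surjective.} Since every value is regular, the pushforward density is given pointwise by the change-of-variables sum $p_{\#h}(y)=\sum_{x\in h^{-1}(y)}p(x)/|\det Dh(x)|$. As $p_{\#h}(y)>0$ for every $y$, the index set $h^{-1}(y)$ must be nonempty, i.e.\ $h$ is onto.

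\textbf{Step 3: $h$ is injective --- the main obstacle.} Here the idea is to use simple-connectedness of $\Ncal$ (together with connectedness of $\Mcal$): the goal is to show $h$ is a covering map, since a covering of a simply connected base is trivial, hence single-sheeted, hence injective. The delicate point --- which I expect to be the hardest part of the argument --- is that a surjective local homeomorphism need not be a covering map without a properness-type condition (for instance $x\mapsto e^{2\pi i x}$ from an open interval onto $S^1$). The argument must therefore exploit the remaining hypotheses: that the pushforward density is \emph{both finite and strictly positive at every point} pins down $\#h^{-1}(y)$ and prevents ``sheets'' from appearing or disappearing as $y$ ranges over the connected, simply connected manifold $\Ncal$, leaving only the single-sheeted possibility; alternatively one can invoke a Hadamard-type global inversion theorem for local diffeomorphisms. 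Combining the three steps, $h$ is a bijective local $C^1$ diffeomorphism, hence a $C^1$ diffeomorphism, and in particular a bijection, as claimed.
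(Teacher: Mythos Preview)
The paper does not actually prove this proposition: it is quoted verbatim from~\cite{zimmermann2021contrastive} and invoked as a black box in Step~3 of the proof of~\cref{thm:CL_MaxEnt}. There is therefore no ``paper's own proof'' to compare your attempt against.

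That said, a brief assessment of your outline on its own merits: Steps~1 and~2 are sound, and the strategy you propose for Step~3 (local diffeomorphism $+$ covering map $+$ simply connected base $\Rightarrow$ single-sheeted) is the standard one. However, as you yourself flag, the crux---promoting a surjective local diffeomorphism to a covering map---is not carried out. Your heuristic that the two-sided bound $0<p_{\#h}<\infty$ ``prevents sheets from appearing or disappearing'' is the right intuition, but making it rigorous requires real work: one must show either that $h$ is proper, or directly that the fiber cardinality $\#h^{-1}(y)$ is finite and locally constant. Neither follows immediately from the change-of-variables sum $p_{\#h}(y)=\sum_{x\in h^{-1}(y)} p(x)/|\det Dh(x)|$, since the summands could in principle decay. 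The Hadamard-type theorems you allude to likewise typically require properness as a hypothesis rather than delivering it as a conclusion. So the proposal correctly identifies where the difficulty lies but stops short of resolving it; for a complete argument you would need to consult the proof in~\cite{zimmermann2021contrastive}.
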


We apply this result to the simply connected and oriented $\Ccal^1$ manifolds without boundaries $\Mcal=\Ccal$ and  $\Ncal=(0,1)^{n_c}$, and the smooth (hence, differentiable) map $\hb:\Ccal\rightarrow (0,1)^{n_c}$ which maps the random variable $\cb$ to a uniform random variable $\cbh$ (as established in Step 1). 

Since both $p_\cb$ (by assumption) and the uniform distribution (the pushforward of $p_\cb$ through $\hb$) are regular densities in the sense of~\cref{prop:bijectivity}, we conclude that $\hb$ is a bijection, i.e., invertible.

We have shown that for any smooth $\gb:\Xcal\rightarrow(0,1)^{n_c}$ which minimises $\Lcal_{\mathrm{AlignMaxEnt}}$, we have that $\cbh=\gb(\xb)=\hb(\cb)$ for a smooth and invertible $\hb:\Ccal\rightarrow(0,1)^{n_c}$, i.e., $\cb$ is block-identified by $\gb$.
\end{proof}

\section{Additional details on the Causal3DIdent data set}
\label{app:data_set_details}
Using the Blender rendering engine~\citep{blender}, 3DIdent~\citep{zimmermann2021contrastive} is a recently proposed benchmark which  contains hallmarks of natural environments (e.g. shadows, different lighting conditions, a 3D object), but allows for identifiability evaluation by exposing the underlying generative factors. 

Each $224\times224\times3$ image in the dataset shows a coloured 3D object which is located and rotated above a coloured ground in a 3D space. Furthermore, each scene contains a coloured spotlight which is focused on the object and located on a half-circle around the scene. The images are rendered based on a $10$-dimensional latent, where: (i) three dimensions describe the XYZ position of the object, (ii) three dimensions describe the rotation of the object in Euler angles, (iii) two dimensions describe the colour (hue) of the object and the ground of the scene, respectively, and (iv) two dimensions describe the position and colour (hue) of the spotlight. For influence of the latent factors on the renderings, see Fig.~2 of~\citep{zimmermann2021contrastive}.

\subsection{Details on introduced object classes}
3DIdent contained a single object class, Teapot~\citep{newell1975utah}. We add \textbf{six} additional object classes: Hare~\citep{turk1994bunny}, Dragon~\citep{StanfordScanRep}, Cow~\citep{KeenanScanRep}, Armadillo~\citep{Krishnamurthy1996armadillo}, Horse~\citep{Praun2000horse}, Head~\citep{SuggestContourGallery}.

\subsection{Details on latent causal graph}

In 3DIdent, the latents are uniformly sampled independently. We instead impose a causal graph over the variables (see~\cref{fig:3DIdent_causal_graph}). While object class and all environment variables (spotlight position, spotlight hue, background hue) are sampled independently, all object variables are dependent. Specifically, for spotlight position, spotlight hue, and background hue, we sample from $U(-1,1)$. We impose the dependence by varying the mean ($\mu$) of a truncated normal distribution with standard deviation $\sigma=0.5$, truncated to the range $[-1,1]$.

Object rotation is dependent solely on object class, see~\Cref{tab:rotation_dependence} for details. Object position is dependent on both object class \& spotlight position, see~\Cref{tab:position_dependence}. Object hue is dependent on object class, background hue, \& object hue, see~\Cref{tab:hue_dependence}. Hares blending into their environment as a form of active camouflage has been observed in Alaskan~\citep{Lepusothus}, Arctic~\citep{arctichare}, \& Snowshoe hares.

\begin{table}[!ht]
    \centering
    \caption{Given a certain object class, the center of the truncated normal distribution from which we sample \textit{rotation} latents varies.}
    \label{tab:rotation_dependence}
    \vspace{0.5em}
    \begin{tabular}{c|ccc}
    \toprule
        object class & $\mu(\phi)$ & $\mu(\theta)$ & $\mu(\psi)$ \\
        \midrule
        Teapot & -0.35 & 0.35 & 0.35\\
        Hare & 0.35 & -0.35 & 0.35\\
        Dragon & 0.35 & 0.35 & -0.35\\
        Cow & 0.35 & -0.35 & -0.35\\
        Armadillo & -0.35 & 0.35 & -0.35\\
        Horse & -0.35 & -0.35 & 0.35\\
        Head & -0.35 & -0.35 & -0.35\\
        \bottomrule
    \end{tabular}
\end{table}

\begin{table}[!ht]
    \centering
    \caption{Given a certain object class \& spotlight position, the center of the truncated normal distribution from which we sample \textit{$xy$-position} latents varies. Note the spotlight position $\text{pos}_\text{spl}$ is rescaled from $[-1,1]$ to $[-\pi/2,\pi/2]$.}
    \label{tab:position_dependence}
    \vspace{0.5em}
    \begin{tabular}{c|ccc}
    \toprule
        object class & $\mu(x)$ & $\mu(y)$ & $\mu(z)$ \\
        \midrule
        Teapot & 0 & 0 & 0\\
        Hare & $-\sin(\text{pos}_\text{spl})$ & $-\cos(\text{pos}_\text{spl})$ & 0\\
        Dragon & $-\sin(\text{pos}_\text{spl})$ & $-\cos(\text{pos}_\text{spl})$ & 0\\
        Cow & $\sin(\text{pos}_\text{spl})$ & $\cos(\text{pos}_\text{spl})$ & 0\\
        Armadillo & $\sin(\text{pos}_\text{spl})$ & $\cos(\text{pos}_\text{spl})$ & 0\\
        Horse & $-\sin(\text{pos}_\text{spl})$ & $-\cos(\text{pos}_\text{spl})$ & 0\\
        Head & $\sin(\text{pos}_\text{spl})$ & $\cos(\text{pos}_\text{spl})$ & 0\\
        \bottomrule
    \end{tabular}
\end{table}

\begin{table}[!ht]
    \centering
    \caption{Given a certain object class, background hue, and spotlight hue, the center of the truncated normal distribution from which we sample the \textit{object hue} latent varies. Note that for the Hare and Dragon classes, in particular, the object either blends in or stands out from the environment.}
    \label{tab:hue_dependence}
    \vspace{0.5em}
    \begin{tabular}{c|c}
    \toprule
        object class & $\mu(\text{hue})$ \\
        \midrule
        Teapot & $0$\\
        Hare & $\frac{\text{hue}_\text{bg} + \text{hue}_\text{spl}}{2}$\\
        Dragon & $-\frac{\text{hue}_\text{bg} + \text{hue}_\text{spl}}{2}$\\\\
        Cow & $-0.35$\\
        Armadillo & $0.7$\\
        Horse & $-0.7$\\
        Head & $0.35$\\
        \bottomrule
    \end{tabular}
\end{table}

\subsection{Dataset Visuals}
We show $40$ random samples from the marginal of each object class in Causal3DIdent in~\Cref{fig:teapot_visuals,fig:hare_visuals,fig:dragon_visuals,fig:cow_visuals,fig:arm_visuals,fig:hor_visuals,fig:head_visuals}.

\begin{figure}[H]
    \centering
    \includegraphics[width=\textwidth]{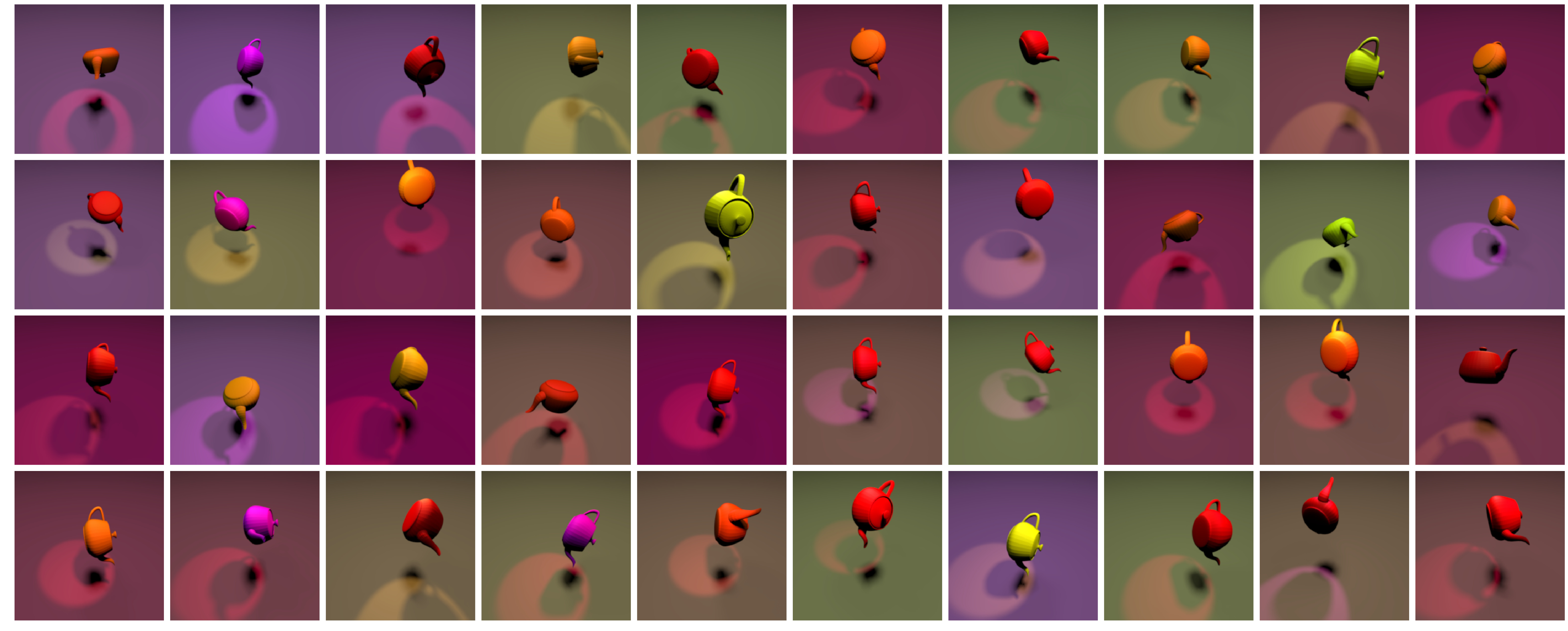}
    \caption{40 random samples from the marginal distribution of the \textit{Teapot} object class.}
    \label{fig:teapot_visuals}
\end{figure}

\begin{figure}[H]
    \centering
    \includegraphics[width=\textwidth]{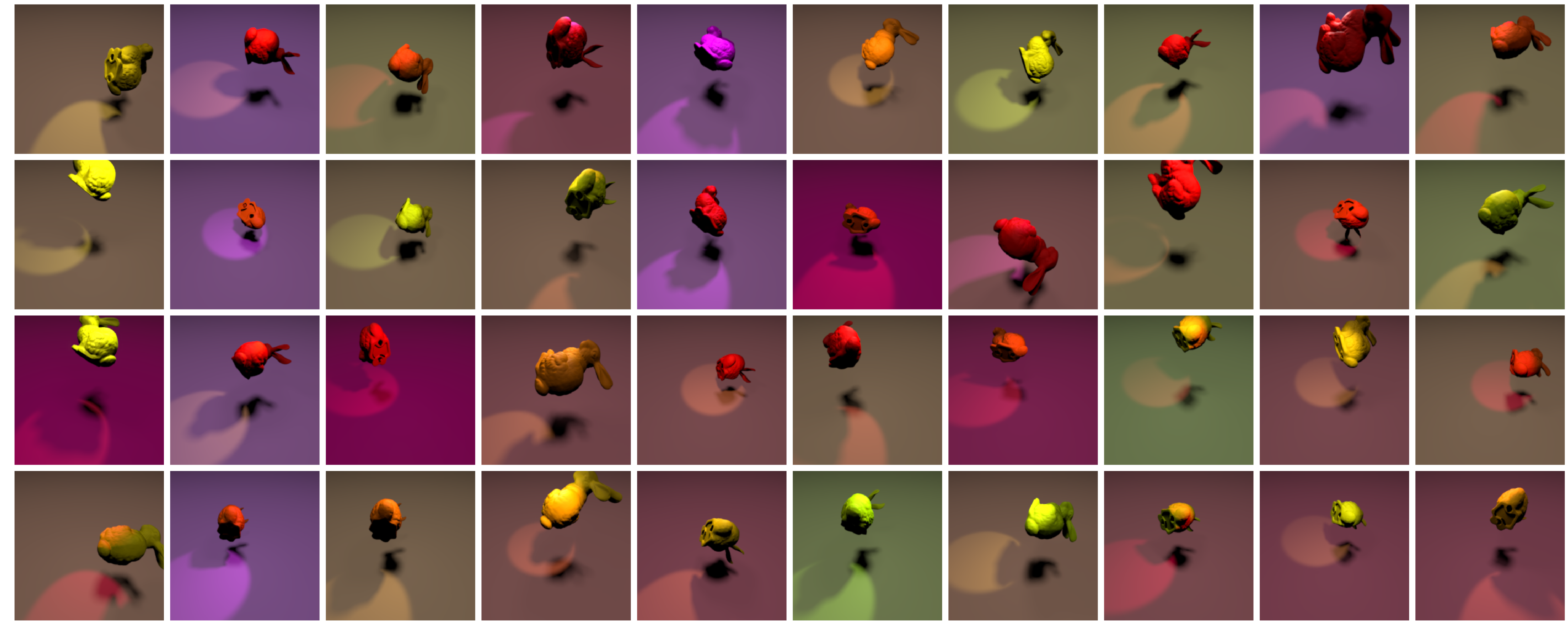}
    \caption{40 random samples from the marginal distribution of the \textit{Hare} object class.}
    \label{fig:hare_visuals}
\end{figure}

\begin{figure}[H]
    \centering
    \includegraphics[width=\textwidth]{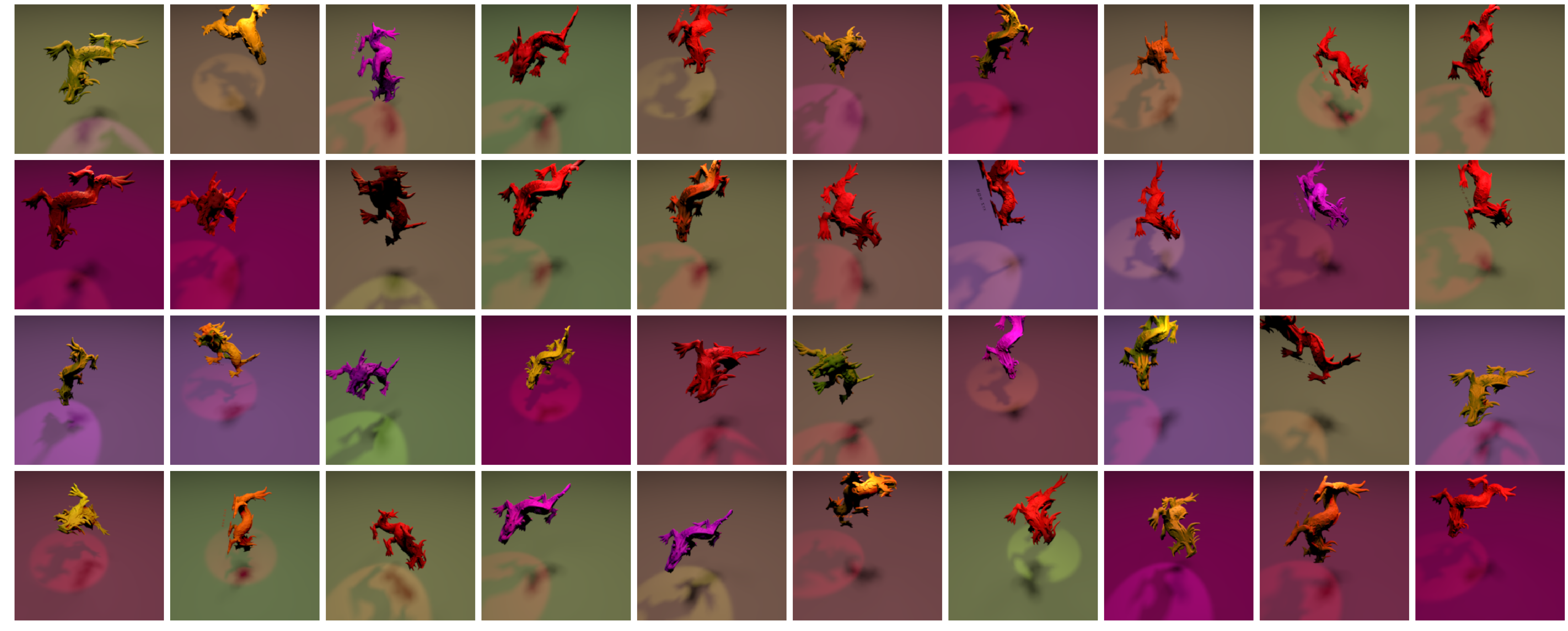}
    \caption{40 random samples from the marginal distribution of the \textit{Dragon} object class.}
    \label{fig:dragon_visuals}
\end{figure}

\begin{figure}[H]
    \centering
    \includegraphics[width=\textwidth]{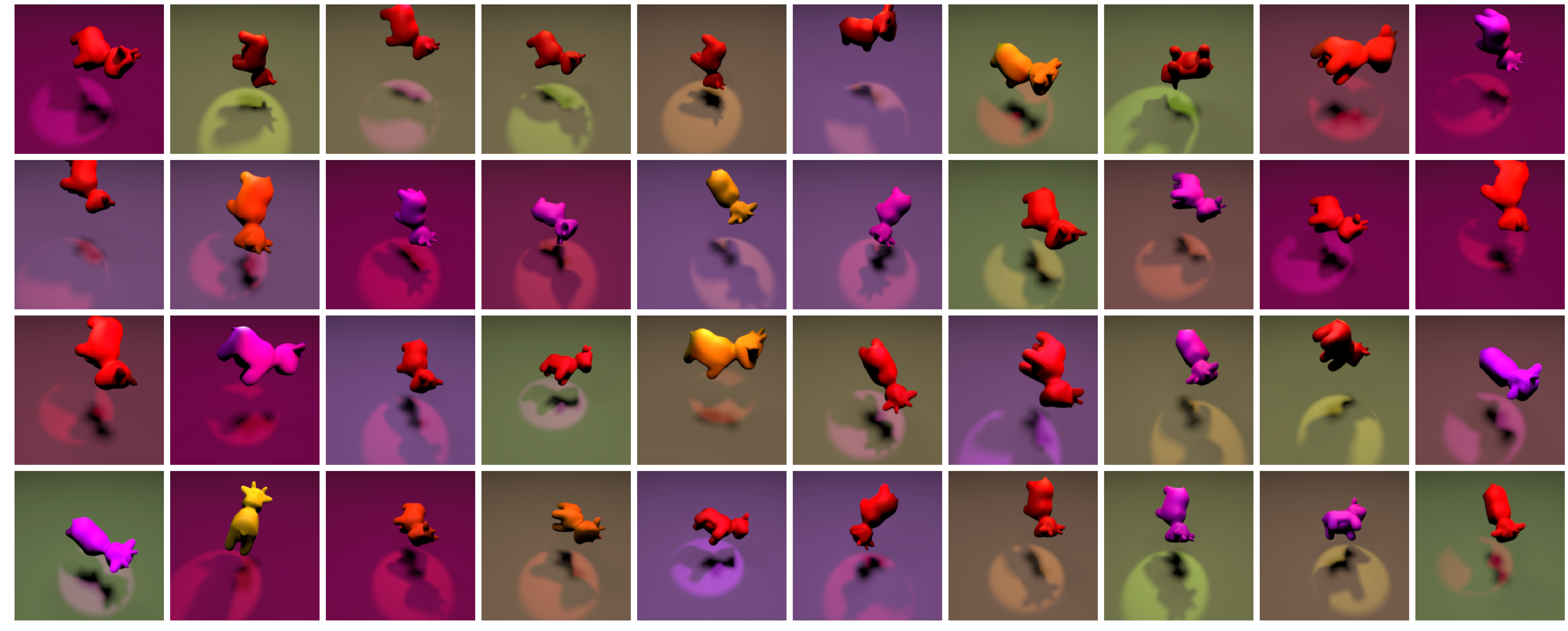}
    \caption{40 random samples from the marginal distribution of the \textit{Cow} object class.}
    \label{fig:cow_visuals}
\end{figure}

\begin{figure}[H]
    \centering
    \includegraphics[width=\textwidth]{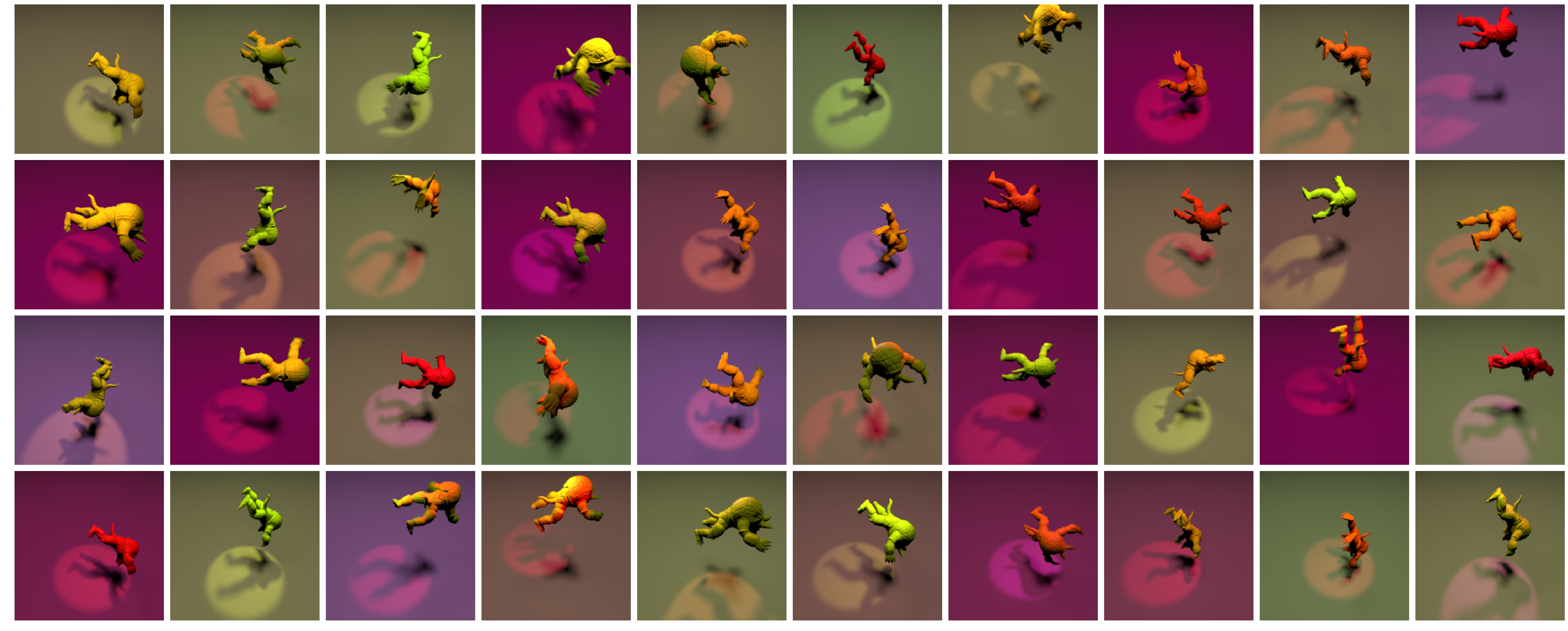}
    \caption{40 random samples from the marginal distribution of the \textit{Armadillo} object class.}
    \label{fig:arm_visuals}
\end{figure}

\begin{figure}[H]
    \centering
    \includegraphics[width=\textwidth]{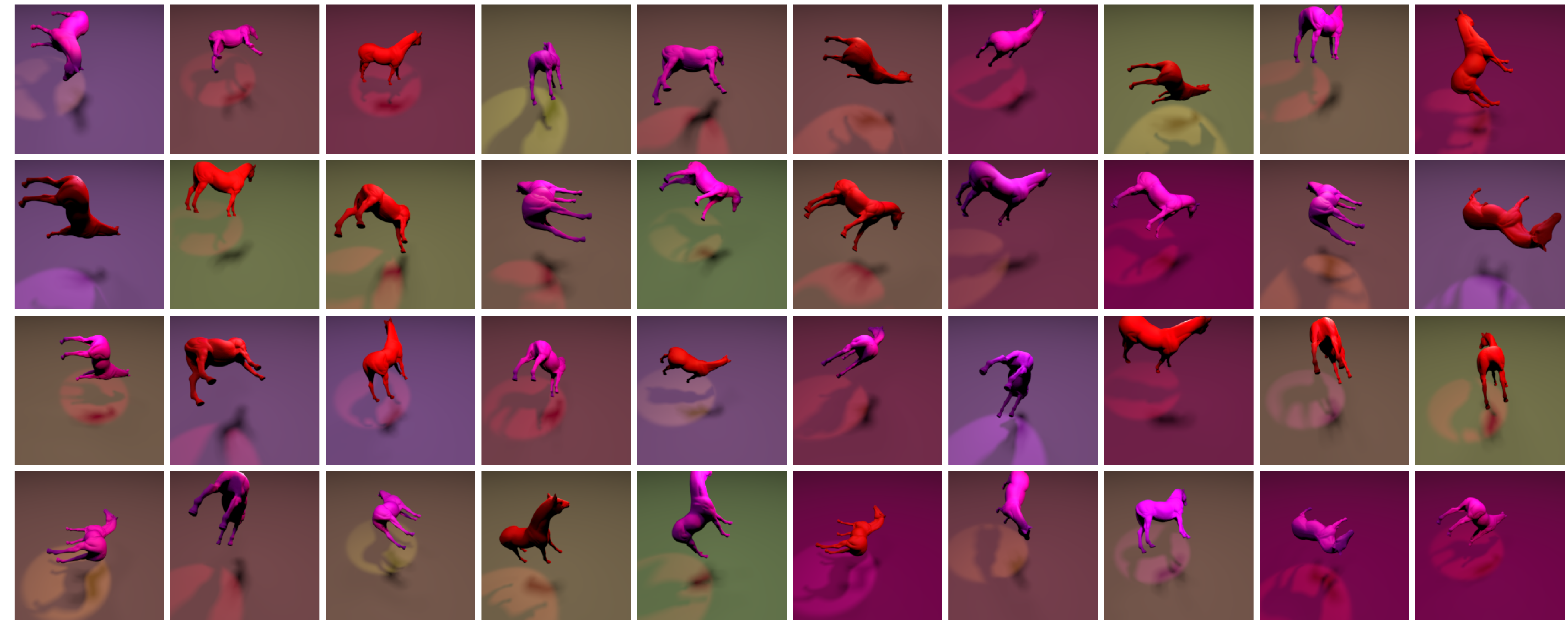}
    \caption{40 random samples from the marginal distribution of the \textit{Horse} object class.}
    \label{fig:hor_visuals}
\end{figure}

\begin{figure}[H]
    \centering
    \includegraphics[width=\textwidth]{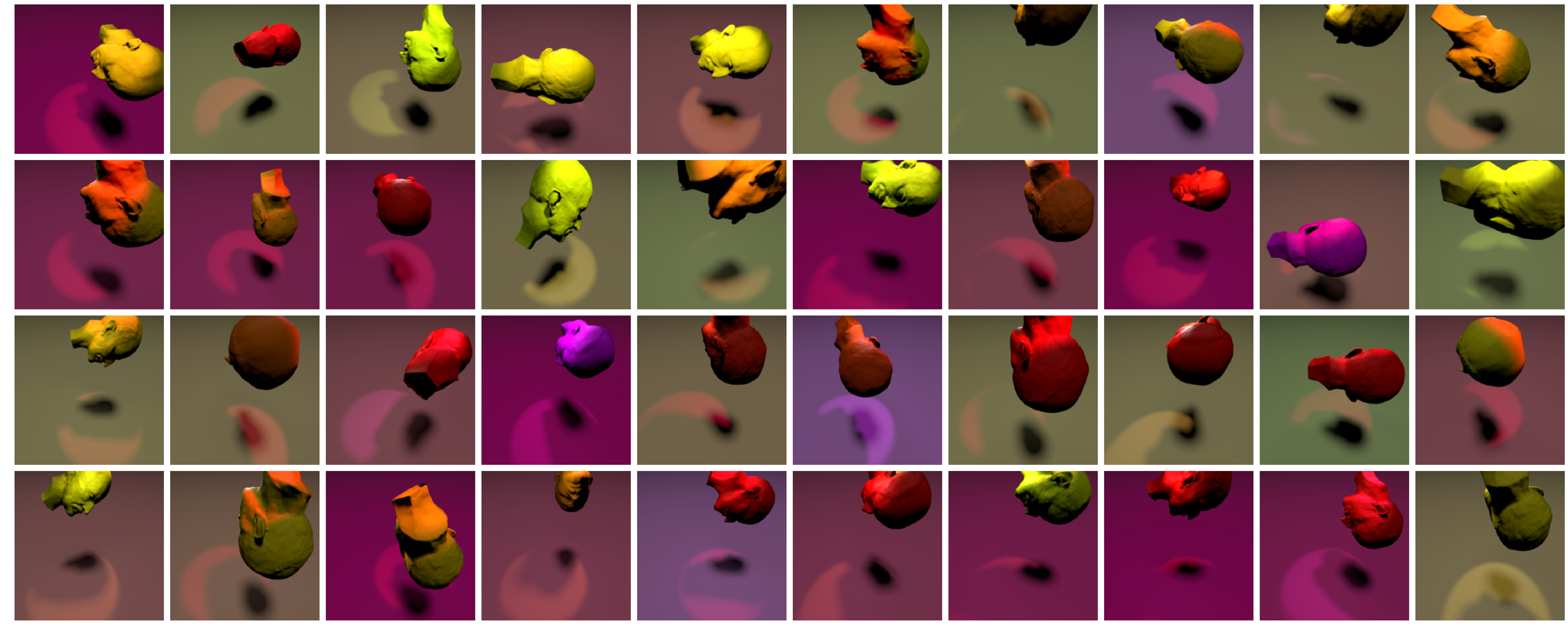}\
    \caption{40 random samples from the marginal distribution of the \textit{Head} object class.}
    \label{fig:head_visuals}
\end{figure}

\clearpage
\section{Additional results}
\label{app:additional_results}
\begin{itemize}
    \item \Cref{app:numerical} contains numerical experiments, namely linear evaluation \& an ablation on $\text{dim}(\cbh)$. 
    \item \Cref{app:causal} contains experiments on \textit{Causal3DIdent}, namely (i) nonlinear \& linear evaluation results of the output \& intermediate feature representation of \texttt{SimCLR} with results for the individual axes of object position \& rotation, and (ii) evaluation of \texttt{BarlowTwins}. 
    \item \Cref{app:mpi} contains experiments on the \textit{MPI3D-real} dataset~\cite{gondal2019transfer}, namely SimCLR \& a supervised sanity check.
\end{itemize}
\subsection{Numerical Data}
\label{app:numerical}
In~\Cref{tab:app_numsim_linear}, we report mean $\pm$ std.\ dev.\ $R^2$ over $3$ random seeds across four generative processes of increasing complexity using \textit{linear} (instead of nonlinear) regression to predict $\cb$ from $\cbh$. The block-identification %
of content can clearly still be seen even if we consider a linear fit. 

In~\Cref{fig:model_latent_ablation}, we perform an ablation 
on $\text{dim}(\cbh)$, visualising how varying the dimensionality of the learnt representation affects identifiability of the ground-truth content \& style partition. Generally, if $\text{dim}(\cbh)<n_c$,
there is insufficient capacity to encode all content, so a lower-dimensional mixture of content is learnt. 
Conversely, if $\text{dim}(\cbh)>n_c$, the excess capacity is used to encode some style information, as that increases entropy. 

\begin{table}[h!]
    \centering
    \caption{Results using linear regression for the experiment on numerical data presented in~\Cref{sec:experiment_1_numerical_simulation}}
    \label{tab:app_numsim_linear}
    \small
    \vspace{0.25em}
    \begin{tabular}{ccccc}
    \toprule
    \multicolumn{3}{c}{\textbf{Generative process}} & \multicolumn{2}{c}{$\bm{R^2}$ \textbf{(linear)}}  \\
    \cmidrule(r){1-3}\cmidrule(r){4-5}
    \textbf{p(chg.)} & \textbf{Stat.} & \textbf{Cau.} & \textbf{Content $\cb$} & \textbf{Style $\sb$} \\
    \midrule
    1.0 & \xmark & \xmark & $\textbf{1.00} \pm 0.00$ & $\textcolor{red}{0.00} \pm 0.00$ \\
    0.75 & \xmark & \xmark & $\textbf{0.99} \pm 0.00$ & $\textcolor{red}{0.00} \pm 0.00$ \\
    0.75 & \cmark & \xmark & $\textbf{0.97} \pm 0.03$ & $0.37 \pm 0.05$ \\
    0.75 & \cmark & \cmark & $\textbf{0.98} \pm 0.01$ & $\textbf{0.78} \pm 0.07$ \\
    \bottomrule
    \end{tabular}
\end{table}

\begin{figure}[!h]
\centering
\includegraphics[width=\textwidth]{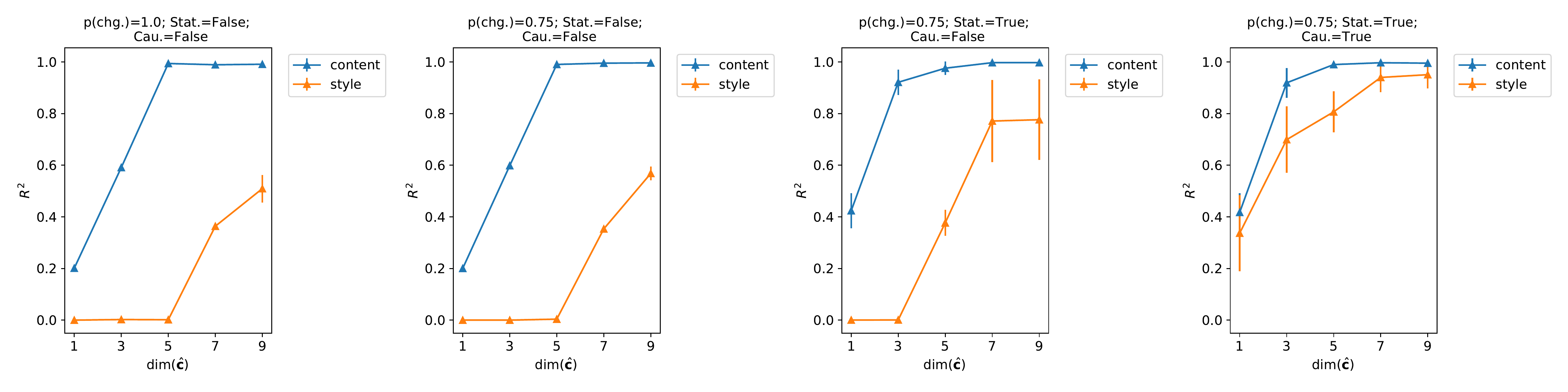}
\caption{Identifiability of the content \& style partition in the numerical experiment as a function of the model latent dimensionality}
\label{fig:model_latent_ablation}
\end{figure}

\paragraph{On Dependence.}
As can be seen from~\cref{tab:app_numsim_linear}, the corresponding inset table in~\cref{sec:experiment_1_numerical_simulation}, and~\cref{fig:model_latent_ablation}, scores for identifying style increase substantially when statistical dependence within blocks and causal dependence between blocks are included. 
This finding can be explained as follows. 

If we compare the performance for small latent dimensionalities ($\text{dim}(\cbh)<n_c$) between the first two (without) and the third plot (with statistical dependence) of~\Cref{fig:model_latent_ablation},
we observe a significantly higher score in identifying content for the latter (e.g., $R^2$ of ca.\ 0.4 vs 0.2 at $\text{dim}(\cbh)=1$). 
This suggest that the introduction of statistical dependence between content variables  (as well as between style variables, and in how style variables change) in the third plot/row, reduces the effective dimensionality of the ground-truth latents and thus leads to higher content identifiability for the same~$\text{dim}(\cbh)<n_c$.
Since the $R^2$ for content is already close to 1 for $\text{dim}(\cbh)=3$ in the third plot of~\cref{fig:model_latent_ablation} (due to the smaller effective dimensionality induced by statistical dependence between~$\cb$), when $\text{dim}(\cbh)=n_c=5$ is used (as reported in~\cref{tab:app_numsim_linear}), excess capacity is used to encode style, leading to a positive $R^2$.

Regarding causal dependence (i.e., the fourth plot in~\cref{fig:model_latent_ablation} and fourth row in~\cref{tab:app_numsim_linear}), we note that the ground truth dependence between $\cb$ and $\sb$ is linear, i.e., $p(\sb|\cb)$ is centred at a linear transformation $\ab+B\cb$ of $\cb$, see the data generating process in~\Cref{app:experimental_details} for details.
Given that our evaluation consists of predicting the ground truth $\cb$ and $\sb$ from the learnt representation $\cbh=\gb(\xb)$, if we were to block-identify $\cb$ according to~\cref{def:block-identifiability}, we should be able to also predict some aspects of $\sb$ from $\cbh$, due to the linear dependence between $\cb$ and $\sb$. This manifests in a relatively large $R^2$ for $\sb$ in the last row of~\cref{tab:app_numsim_linear} and the corresponding table in~\cref{sec:experiment_1_numerical_simulation}.

To summarise, we highlight two main takeaways: (i) when latent dependence is present, this may reduce the effective dimensionality, so that some style is encoded in addition to content unless a smaller representation size is chosen; (ii) even though the learnt representation isolates content in the sense of~\cref{def:block-identifiability}, it may still be predictive of style when content and style are (causally) dependent.

\subsection{\textit{Causal3DIdent}}
\label{app:causal}
\paragraph{Full version of~\Cref{tbl:final_nonlinear_abbrv}:} In~\Cref{tbl:final_nonlinear_full}, we a) provide the results for the individual axes of object position \& rotation and b) present additional rows omitted from~\Cref{tbl:final_nonlinear_abbrv} for space considerations.

Interestingly, we find that the variance across the individual axes is significantly higher for object position than object rotation. If we compare the causal dependence imposed for object position (see~\Cref{tab:position_dependence}) to the causal dependence imposed for object rotation (see~\Cref{tab:rotation_dependence}), we can observe that the dependence imposed over individual axes is also significantly more variable for position than rotation, i.e., for $x$ the sine nonlinearity is used, for $y$ the cosine nonlinearity is used, while for $z$, no dependence is imposed. 

Regarding the additional rows, we can observe that the composition of image-level rotation \& crops yields results quite similar to solely using crops, a relationship which mirrors how transforming the rotation \& position latents yields results quite similar to solely transforming the position latents. This suggests that the rotation variables are difficult to disentangle from the position variables in Causal3DIdent, regardless of whether data augmentation or latent transforms are used.

Finally, we can observe that applying image-level rotation in conjunction with small crops \& colour distortion does lead to a difference in the encoding, background hue is preserved, while the scores for object position \& rotation appear to slightly decrease. When using three augmentations as opposed to two, the effects of the individual augmentations are lessened. While colour distortion discourages the encoding of background hue, both small crops \& image-level rotation encourages it, and thus it is preserved when all three augmentations are used. While colour distortion encourages the encoding of object position \& rotation, both small crops \& image-level rotation discourage it, but as a causal relationship exists between the class variable and said latents, the scores merely decrease, the latents are still for the most part preserved. In reality, where complex interactions between latent variables abound, the effect of data augmentations may be uninterpretable, however with Causal3DIdent, we are able to interpret their effects in the presence of rich visual complexity and causal dependencies, even when applying three distinct augmentations in tandem.

\definecolor{LTcolor}{rgb}{0.95,1,1}
\definecolor{DAcolor}{rgb}{1,1,0.95}
\begin{table}[h!]
\centering
\caption{Full version of~\Cref{tbl:final_nonlinear_abbrv}.
}
\label{tbl:final_nonlinear_full}
\vspace{0.25em}
\resizebox{\textwidth}{!}{
\small
\begin{tabular}{lc|cccc|ccc|ccc}
\toprule
\multirow{2}{*}{\textbf{Views generated by}} & \multirow{2}{*}{\textbf{Class}} & \multicolumn{4}{c}{\textbf{Positions}} & \multicolumn{3}{c}{\textbf{Hues}} & \multicolumn{3}{c}{\textbf{Rotations}} \\
\cmidrule(r){3-6}\cmidrule(r){7-9}\cmidrule(r){10-12}
& & $\text{object}(x)$ & $\text{object}(y)$ & $\text{object}(z)$ & $\text{spotlight}$ & $\text{object}$ & $\text{spotlight}$ & $\text{background}$ & $\text{object}(\phi)$ & $\text{object}(\theta)$ & $\text{object}(\psi)$
\\
\midrule
\rowcolor{DAcolor}
DA: colour distortion  & 
$0.42 \pm 0.01$ & 
$\textbf{0.58} \pm 0.01$ & $\textbf{0.75} \pm 0.00$ & $\textbf{0.52} \pm 0.01$ &
$\textcolor{red}{0.17} \pm 0.00$ & $\textcolor{red}{0.10} \pm 0.01$ & $\textcolor{red}{0.01} \pm 0.00$ & $\textcolor{red}{0.01} \pm 0.00$ & 
$0.36 \pm 0.01$ & $0.33 \pm 0.01$ & $0.32 \pm 0.00$\\
\rowcolor{LTcolor}
LT: change hues & 
$\textbf{1.00} \pm 0.00$ & 
$\textbf{0.81} \pm 0.02$ & $\textbf{0.81} \pm 0.02$ & $\textcolor{red}{0.15} \pm 0.02$ &
$\textbf{0.91} \pm 0.00$ & $0.30 \pm 0.00$ & $\textcolor{red}{0.00} \pm 0.00$ & $\textcolor{red}{0.00} \pm 0.00$ & 
$0.30 \pm 0.02$ & $0.30 \pm 0.01$ & $0.30 \pm 0.01$\\
\midrule
\rowcolor{DAcolor}
DA: crop (large) & 
$0.28 \pm 0.04$ & 
$\textcolor{red}{0.04} \pm 0.02$ & $\textcolor{red}{0.03} \pm 0.01$ & $\textcolor{red}{0.19} \pm 0.02$ &
$\textcolor{red}{0.21} \pm 0.13$ & $\textbf{0.87} \pm 0.00$ & $\textcolor{red}{0.09} \pm 0.02$ & $\textbf{1.00} \pm 0.00$ & %
$\textcolor{red}{0.00} \pm 0.00$ & $\textcolor{red}{0.05} \pm 0.00$ & $\textcolor{red}{0.02} \pm 0.00$\\
\rowcolor{DAcolor}
DA: crop (small) & 
$\textcolor{red}{0.14} \pm 0.00$ &
$\textcolor{red}{0.00} \pm 0.00$ & $\textcolor{red}{0.01} \pm 0.02$ & $\textcolor{red}{0.00} \pm 0.00$ &
$\textcolor{red}{0.00} \pm 0.01$ & $\textcolor{red}{0.00} \pm 0.00$ & $\textcolor{red}{0.00} \pm 0.00$ & $\textbf{1.00} \pm 0.00$ &
 $\textcolor{red}{0.00} \pm 0.00$ & $\textcolor{red}{0.00} \pm 0.00$ & $\textcolor{red}{0.00} \pm 0.00$\\
\rowcolor{LTcolor}
LT: change positions & 
$\textbf{1.00} \pm 0.00$ &
$\textcolor{red}{0.01} \pm 0.00$ & $0.47 \pm 0.01$ & $\textcolor{red}{0.01} \pm 0.00$ &
$\textcolor{red}{0.00} \pm 0.01$ & $0.46 \pm 0.02$ & $\textcolor{red}{0.00} \pm 0.00$ & $\textbf{0.97} \pm 0.00$ & 
$0.30 \pm 0.00$ & $0.29 \pm 0.00$ & $0.28 \pm 0.00$\\
\midrule
\rowcolor{DAcolor}
DA: crop (large) + colour distortion & 
$\textbf{0.97} \pm 0.00$ & 
$\textbf{0.59} \pm 0.03$ & $\textbf{0.52} \pm 0.01$ & $\textbf{0.68} \pm 0.01$ &
$\textbf{0.59} \pm 0.05$ & $0.28 \pm 0.00$ & $\textcolor{red}{0.01} \pm 0.01$ & $\textcolor{red}{0.01} \pm 0.00$ & %
$\textbf{0.74} \pm 0.01$ & $\textbf{0.78} \pm 0.00$ & $\textbf{0.72} \pm 0.00$\\
\rowcolor{DAcolor}
DA: crop (small) + colour distortion &
$\textbf{1.00} \pm 0.00$ & 
$\textbf{0.72} \pm 0.02$ & $\textbf{0.65} \pm 0.02$ & $\textbf{0.70} \pm 0.00$ &
$\textbf{0.93} \pm 0.00$ & $0.30 \pm 0.01$ & $\textcolor{red}{0.00} \pm 0.00$ & $\textcolor{red}{0.02} \pm 0.03$ &
$\textbf{0.53} \pm 0.00$ & $\textbf{0.57} \pm 0.01$ & $\textbf{0.58} \pm 0.01$ \\
\rowcolor{LTcolor}
LT: change positions + hues & 
$\textbf{1.00} \pm 0.00$ &
$\textcolor{red}{0.10} \pm 0.10$ & $0.49 \pm 0.02$ & $\textcolor{red}{0.06} \pm 0.05$ &
$\textcolor{red}{0.07} \pm 0.08$ & $0.32 \pm 0.02$ & $\textcolor{red}{0.00} \pm 0.01$ & $\textcolor{red}{0.02} \pm 0.03$ & %
$0.34 \pm 0.09$ & $0.34 \pm 0.04$ & $0.34 \pm 0.08$\\
\midrule
\rowcolor{DAcolor}
DA: rotation &
$0.33 \pm 0.06$ & 
$0.29 \pm 0.03$ & $\textcolor{red}{0.11} \pm 0.01$ & $\textcolor{red}{0.12} \pm 0.04$ &
$\textcolor{red}{0.23} \pm 0.12$ & $\textbf{0.83} \pm 0.01$ & $0.30 \pm 0.12$ & $\textbf{0.99} \pm 0.00$ &
$\textcolor{red}{0.02} \pm 0.01$ & $\textcolor{red}{0.06} \pm 0.03$ & $\textcolor{red}{0.07} \pm 0.01$\\
\rowcolor{LTcolor}
LT: change rotations & 
$\textbf{1.00} \pm 0.00$ & 
$\textbf{0.78} \pm 0.01$ & $\textbf{0.72} \pm 0.03$ & $\textcolor{red}{0.09} \pm 0.03$ &
$\textbf{0.90} \pm 0.00$ & $0.41 \pm 0.00$ & $\textcolor{red}{0.00} \pm 0.00$ & $\textbf{0.97} \pm 0.00$ & 
$0.28 \pm 0.00$ & $0.28 \pm 0.00$ & $0.28 \pm 0.00$\\
\midrule
\rowcolor{DAcolor}
DA: rotation + colour distortion & 
$\textbf{0.59} \pm 0.01$ & 
$\textbf{0.63} \pm 0.01$ & $\textbf{0.57} \pm 0.08$ & $\textbf{0.54} \pm 0.02$ &
$\textcolor{red}{0.21} \pm 0.01$ & $\textcolor{red}{0.12} \pm 0.02$ & $\textcolor{red}{0.01} \pm 0.00$ & $\textcolor{red}{0.01} \pm 0.00$ & 
$0.36 \pm 0.03$ & $0.34 \pm 0.04$ & $0.30 \pm 0.03$ \\
\rowcolor{LTcolor}
LT: change rotations + hues &
$\textbf{1.00} \pm 0.00$ & 
$\textbf{0.80} \pm 0.02$ & $\textbf{0.77} \pm 0.01$ & $\textcolor{red}{0.13} \pm 0.02$ &
$\textbf{0.91} \pm 0.00$ & $0.30 \pm 0.00$ & $\textcolor{red}{0.00} \pm 0.00$ & $\textcolor{red}{0.00} \pm 0.00$ & 
$0.28 \pm 0.00$ & $0.28 \pm 0.01$ & $0.28 \pm 0.00$\\
\midrule
\rowcolor{DAcolor}
DA: rot.\ + crop (lg) &
$0.26 \pm 0.01$ & $\textcolor{red}{0.03} \pm 0.02$ & $\textcolor{red}{0.03} \pm 0.01$ & $\textcolor{red}{0.15} \pm 0.04$ & $\textcolor{red}{0.04} \pm 0.03$ & $\textbf{0.84} \pm 0.06$ & $\textcolor{red}{0.10} \pm 0.01$ & $\textbf{1.00} \pm 0.00$ & $\textcolor{red}{0.00} \pm 0.00$ & $\textcolor{red}{0.04} \pm 0.02$ & $\textcolor{red}{0.02} \pm 0.00$
\\
\rowcolor{DAcolor}
DA: rot.\ + crop (sm) &
$\textcolor{red}{0.15} \pm 0.00$ & $\textcolor{red}{0.00} \pm 0.00$ & $\textcolor{red}{0.00} \pm 0.00$ & $\textcolor{red}{0.00} \pm 0.00$ & $\textcolor{red}{0.00} \pm 0.00$ & $\textcolor{red}{0.00} \pm 0.00$ & $\textcolor{red}{0.00} \pm 0.00$ & $\textbf{1.00} \pm 0.00$ & $\textcolor{red}{0.00} \pm 0.00$ & $\textcolor{red}{0.00} \pm 0.00$ & $\textcolor{red}{0.00} \pm 0.00$
\\
\rowcolor{LTcolor}
LT: change rot.\ + pos.\ &
$\textbf{1.00} \pm 0.00$ & $\textcolor{red}{0.02} \pm 0.03$ & $0.48 \pm 0.02$ & $\textcolor{red}{0.01} \pm 0.01$ & $\textcolor{red}{0.02} \pm 0.03$ & $0.49 \pm 0.03$ & $\textcolor{red}{0.03} \pm 0.02$ & $\textbf{0.98} \pm 0.00$ & $0.29 \pm 0.01$ & $0.28 \pm 0.01$ & $0.28 \pm 0.01$
\\
\midrule
\rowcolor{DAcolor}
DA: rot.\ + crop (lg) + col.\ dist.\ &
$\textbf{0.99} \pm 0.00$ & $\textbf{0.69} \pm 0.03$ & $\textbf{0.60} \pm 0.01$ & $\textbf{0.70} \pm 0.02$ & $\textbf{0.86} \pm 0.03$ & $0.28 \pm 0.00$ & $\textcolor{red}{0.01} \pm 0.00$ & $\textcolor{red}{0.01} \pm 0.00$ & $\textbf{0.60} \pm 0.01$ & $\textbf{0.64} \pm 0.02$ & $\textbf{0.61} \pm 0.01$
\\
\rowcolor{DAcolor}
DA: rot.\ + crop (sm) + col.\ dist.\ &
$\textbf{1.00} \pm 0.00$ & $\textbf{0.61} \pm 0.02$ & $\textbf{0.59} \pm 0.01$ & $\textbf{0.64} \pm 0.01$ & $\textbf{0.82} \pm 0.01$ & $0.38 \pm 0.00$ & $\textcolor{red}{0.01} \pm 0.01$ & $\textbf{0.78} \pm 0.03$ & $0.44 \pm 0.00$ & $0.48 \pm 0.02$ & $0.45 \pm 0.01$
\\
\rowcolor{LTcolor}
LT: change rot.\ + pos.\ + hues & 
$\textbf{1.00} \pm 0.00$ & $\textcolor{red}{0.20} \pm 0.12$ & $\textbf{0.50} \pm 0.04$ & $\textcolor{red}{0.14} \pm 0.11$ & $\textcolor{red}{0.15} \pm 0.12$ & $0.32 \pm 0.01$ & $\textcolor{red}{0.00} \pm 0.00$ & $\textcolor{red}{0.02} \pm 0.01$ & $0.33 \pm 0.04$ & $0.33 \pm 0.02$ & $0.32 \pm 0.03$
\\
\bottomrule
\end{tabular}
}
\end{table}

\paragraph{Linear identifiability:} In~\Cref{tbl:final_linear_full}, we present results evaluating all continuous variables with linear regression. While, as expected, $R^2$ scores are reduced across the board, we can observe that even with a linear fit, the patterns observed in~\Cref{tbl:final_nonlinear_full} persist.

\definecolor{LTcolor}{rgb}{0.95,1,1}
\definecolor{DAcolor}{rgb}{1,1,0.95}
\begin{table}[h!]
\centering
\caption{Evaluation results using a linear fit for not only class, but all continuous variables. 
}
\label{tbl:final_linear_full}
\vspace{0.25em}
\resizebox{\textwidth}{!}{
\small
\begin{tabular}{lc|cccc|ccc|ccc}
\toprule
\multirow{2}{*}{\textbf{Views generated by}} & \multirow{2}{*}{\textbf{Class}} & \multicolumn{4}{c}{\textbf{Positions}} & \multicolumn{3}{c}{\textbf{Hues}} & \multicolumn{3}{c}{\textbf{Rotations}} \\
\cmidrule(r){3-6}\cmidrule(r){7-9}\cmidrule(r){10-12}
& & $\text{object}(x)$ & $\text{object}(y)$ & $\text{object}(z)$ & $\text{spotlight}$ & $\text{object}$ & $\text{spotlight}$ & $\text{background}$ & $\text{object}(\phi)$ & $\text{object}(\theta)$ & $\text{object}(\psi)$
\\
\midrule
\rowcolor{DAcolor}
DA: colour distortion  & 
$0.42 \pm 0.01$ & $0.37 \pm 0.03$ & $0.20 \pm 0.16$ & $0.23 \pm 0.02$ & $0.01 \pm 0.01$ & $0.03 \pm 0.01$ & $-0.00 \pm 0.00$ & $-0.00 \pm 0.00$ & $0.13 \pm 0.01$ & $0.04 \pm 0.01$ & $0.09 \pm 0.02$ \\
\rowcolor{LTcolor}
LT: change hues & 
$1.00 \pm 0.00$ & $0.72 \pm 0.07$ & $0.56 \pm 0.04$ & $-0.00 \pm 0.00$ & $0.65 \pm 0.07$ & $0.29 \pm 0.01$ & $-0.00 \pm 0.00$ & $-0.00 \pm 0.00$ & $0.27 \pm 0.01$ & $0.26 \pm 0.03$ & $0.26 \pm 0.01$ \\
\midrule
\rowcolor{DAcolor}
DA: crop (large) & 
$0.28 \pm 0.04$ & $0.00 \pm 0.00$ & $0.02 \pm 0.00$ & $0.04 \pm 0.07$ & $0.08 \pm 0.13$ & $0.51 \pm 0.05$ & $0.03 \pm 0.02$ & $0.20 \pm 0.04$ & $0.00 \pm 0.00$ & $0.02 \pm 0.00$ & $0.01 \pm 0.00$ \\
\rowcolor{DAcolor}
DA: crop (small) & 
$0.14 \pm 0.00$ & $-0.00 \pm 0.00$ & $-0.00 \pm 0.00$ & $-0.00 \pm 0.00$ & $-0.00 \pm 0.00$ & $-0.00 \pm 0.00$ & $-0.00 \pm 0.00$ & $0.17 \pm 0.05$ & $-0.00 \pm 0.00$ & $-0.00 \pm 0.00$ & $-0.00 \pm 0.00$ \\
\rowcolor{LTcolor}
LT: change positions & 
$1.00 \pm 0.00$ & $-0.00 \pm 0.00$ & $0.44 \pm 0.02$ & $-0.00 \pm 0.00$ & $-0.00 \pm 0.00$ & $0.29 \pm 0.04$ & $0.00 \pm 0.00$ & $0.73 \pm 0.16$ & $0.26 \pm 0.01$ & $0.25 \pm 0.03$ & $0.25 \pm 0.04$ \\
\midrule
\rowcolor{DAcolor}
DA: crop (large) + colour distortion & 
$0.97 \pm 0.00$ & $0.12 \pm 0.02$ & $0.24 \pm 0.03$ & $0.21 \pm 0.00$ & $0.08 \pm 0.03$ & $0.13 \pm 0.01$ & $-0.00 \pm 0.00$ & $-0.00 \pm 0.00$ & $0.14 \pm 0.04$ & $0.18 \pm 0.05$ & $0.22 \pm 0.02$ \\
\rowcolor{DAcolor}
DA: crop (small) + colour distortion &
$1.00 \pm 0.00$ & $0.35 \pm 0.02$ & $0.50 \pm 0.01$ & $0.19 \pm 0.03$ & $0.80 \pm 0.01$ & $0.28 \pm 0.00$ & $-0.00 \pm 0.00$ & $-0.00 \pm 0.00$ & $0.29 \pm 0.00$ & $0.30 \pm 0.00$ & $0.29 \pm 0.01$ \\
\rowcolor{LTcolor}
LT: change positions + hues & 
$1.00 \pm 0.00$ & $0.00 \pm 0.00$ & $0.42 \pm 0.06$ & $0.00 \pm 0.00$ & $0.00 \pm 0.00$ & $0.27 \pm 0.02$ & $-0.00 \pm 0.00$ & $-0.00 \pm 0.00$ & $0.23 \pm 0.07$ & $0.26 \pm 0.03$ & $0.25 \pm 0.04$ \\
\midrule
\rowcolor{DAcolor}
DA: rotation &
$0.33 \pm 0.06$ & $0.04 \pm 0.04$ & $0.04 \pm 0.00$ & $0.02 \pm 0.03$ & $0.12 \pm 0.08$ & $0.46 \pm 0.06$ & $0.06 \pm 0.04$ & $0.30 \pm 0.13$ & $0.00 \pm 0.00$ & $0.04 \pm 0.02$ & $0.02 \pm 0.00$ \\
\rowcolor{LTcolor}
LT: change rotations & 
$1.00 \pm 0.00$ & $0.34 \pm 0.21$ & $0.48 \pm 0.03$ & $-0.00 \pm 0.00$ & $0.60 \pm 0.15$ & $0.28 \pm 0.00$ & $0.00 \pm 0.00$ & $0.59 \pm 0.26$ & $0.27 \pm 0.01$ & $0.27 \pm 0.00$ & $0.27 \pm 0.01$ \\
\midrule
\rowcolor{DAcolor}
DA: rotation + colour distortion & 
$0.59 \pm 0.01$ & $0.31 \pm 0.02$ & $0.26 \pm 0.06$ & $0.25 \pm 0.07$ & $0.02 \pm 0.00$ & $0.03 \pm 0.02$ & $-0.00 \pm 0.00$ & $-0.00 \pm 0.00$ & $0.07 \pm 0.01$ & $0.06 \pm 0.01$ & $0.10 \pm 0.01$ \\
\rowcolor{LTcolor}
LT: change rotations + hues &
$1.00 \pm 0.00$ & $0.68 \pm 0.02$ & $0.57 \pm 0.01$ & $-0.00 \pm 0.00$ & $0.72 \pm 0.10$ & $0.29 \pm 0.00$ & $-0.00 \pm 0.00$ & $-0.00 \pm 0.00$ & $0.28 \pm 0.00$ & $0.28 \pm 0.00$ & $0.28 \pm 0.00$ \\
\midrule
\rowcolor{DAcolor}
DA: rot.\ + crop (lg) &
$0.26 \pm 0.01$ & $-0.00 \pm 0.00$ & $0.02 \pm 0.00$ & $0.00 \pm 0.00$ & $0.00 \pm 0.00$ & $0.59 \pm 0.05$ & $0.02 \pm 0.01$ & $0.20 \pm 0.04$ & $0.00 \pm 0.00$ & $0.01 \pm 0.00$ & $0.01 \pm 0.00$
\\
\rowcolor{DAcolor}
DA: rot.\ + crop (sm) &
$0.15 \pm 0.00$ & $-0.00 \pm 0.00$ & $-0.00 \pm 0.00$ & $-0.00 \pm 0.00$ & $-0.00 \pm 0.00$ & $-0.00 \pm 0.00$ & $-0.00 \pm 0.00$ & $0.29 \pm 0.21$ & $-0.00 \pm 0.00$ & $-0.00 \pm 0.00$ & $-0.00 \pm 0.00$ 
\\
\rowcolor{LTcolor}
LT: change rot.\ + pos.\ &
$1.00 \pm 0.00$ & $-0.00 \pm 0.00$ & $0.45 \pm 0.01$ & $-0.00 \pm 0.00$ & $-0.00 \pm 0.00$ & $0.32 \pm 0.02$ & $0.00 \pm 0.00$ & $0.80 \pm 0.09$ & $0.27 \pm 0.00$ & $0.27 \pm 0.01$ & $0.27 \pm 0.01$ 
\\
\midrule
\rowcolor{DAcolor}
DA: rot.\ + crop (lg) + col.\ dist.\ &
$0.99 \pm 0.00$ & $0.23 \pm 0.04$ & $0.26 \pm 0.07$ & $0.26 \pm 0.01$ & $0.51 \pm 0.14$ & $0.21 \pm 0.01$ & $-0.00 \pm 0.00$ & $-0.00 \pm 0.00$ & $0.21 \pm 0.04$ & $0.28 \pm 0.02$ & $0.22 \pm 0.02$
\\
\rowcolor{DAcolor}
DA: rot.\ + crop (sm) + col.\ dist.\ &
$1.00 \pm 0.00$ & $0.26 \pm 0.02$ & $0.48 \pm 0.01$ & $0.21 \pm 0.02$ & $0.61 \pm 0.01$ & $0.31 \pm 0.00$ & $-0.00 \pm 0.00$ & $0.34 \pm 0.02$ & $0.30 \pm 0.00$ & $0.30 \pm 0.01$ & $0.29 \pm 0.01$
\\
\rowcolor{LTcolor}
LT: change rot.\ + pos.\ + hues & 
$1.00 \pm 0.00$ & $0.03 \pm 0.05$ & $0.46 \pm 0.01$ & $0.01 \pm 0.01$ & $0.01 \pm 0.02$ & $0.29 \pm 0.01$ & $-0.00 \pm 0.00$ & $-0.00 \pm 0.00$ & $0.27 \pm 0.00$ & $0.28 \pm 0.01$ & $0.28 \pm 0.01$
\\
\bottomrule
\end{tabular}
}
\end{table}

\paragraph{Intermediate feature evaluation:} In~\Cref{tbl:int_nonlinear_full} and~\Cref{tbl:int_linear_full}, we present evaluation based on the representation from an intermediate layer (i.e., prior to applying a projection layer~\citep{chen2020simple}) with nonlinear and linear regression for the continuous variables, respectively. Note the intermediate layer has an output dimensionality of $100$. While it is clear that all $R^2$ scores are increased across the board, we can notice that certain latents which were discarded in the final layer, were not in an intermediate layer. For example, with ``LT: change hues'', in the final layer the $z$-position was discarded ($R^2=0.15$ in~\Cref{tbl:final_nonlinear_full}), inexplicably we may add, as position is content regardless of axis with this latent transformation. But in the intermediate layer, $z$-position was not discarded ($R^2=0.88$ in~\Cref{tbl:int_nonlinear_full}). 

\definecolor{LTcolor}{rgb}{0.95,1,1}
\definecolor{DAcolor}{rgb}{1,1,0.95}
\begin{table}[h!]
\centering
\caption{Evaluation of an intermediate layer. Logistic regression used for class, kernel ridge regression used for all continuous variables. 
}
\label{tbl:int_nonlinear_full}
\vspace{0.25em}
\resizebox{\textwidth}{!}{
\small
\begin{tabular}{lc|cccc|ccc|ccc}
\toprule
\multirow{2}{*}{\textbf{Views generated by}} & \multirow{2}{*}{\textbf{Class}} & \multicolumn{4}{c}{\textbf{Positions}} & \multicolumn{3}{c}{\textbf{Hues}} & \multicolumn{3}{c}{\textbf{Rotations}} \\
\cmidrule(r){3-6}\cmidrule(r){7-9}\cmidrule(r){10-12}
& & $\text{object}(x)$ & $\text{object}(y)$ & $\text{object}(z)$ & $\text{spotlight}$ & $\text{object}$ & $\text{spotlight}$ & $\text{background}$ & $\text{object}(\phi)$ & $\text{object}(\theta)$ & $\text{object}(\psi)$
\\
\midrule
\rowcolor{DAcolor}
DA: colour distortion  & 
$0.71 \pm 0.02$ & $0.68 \pm 0.02$ & $0.80 \pm 0.01$ & $0.63 \pm 0.01$ & $0.25 \pm 0.01$ & $0.13 \pm 0.00$ & $0.02 \pm 0.01$ & $0.01 \pm 0.01$ & $0.44 \pm 0.01$ & $0.48 \pm 0.01$ & $0.39 \pm 0.00$ \\
\rowcolor{LTcolor}
LT: change hues & 
$1.00 \pm 0.00$ & $0.98 \pm 0.00$ & $0.97 \pm 0.00$ & $0.88 \pm 0.01$ & $0.98 \pm 0.00$ & $0.34 \pm 0.01$ & $-0.00 \pm 0.00$ & $0.20 \pm 0.10$ & $0.71 \pm 0.02$ & $0.68 \pm 0.03$ & $0.68 \pm 0.02$ \\
\midrule
\rowcolor{DAcolor}
DA: crop (large) & 
$0.43 \pm 0.03$ & $0.41 \pm 0.05$ & $0.35 \pm 0.05$ & $0.32 \pm 0.04$ & $0.41 \pm 0.13$ & $0.88 \pm 0.00$ & $0.14 \pm 0.03$ & $1.00 \pm 0.00$ & $0.03 \pm 0.02$ & $0.06 \pm 0.01$ & $0.08 \pm 0.00$ \\
\rowcolor{DAcolor}
DA: crop (small) & 
$0.20 \pm 0.01$ & $0.04 \pm 0.05$ & $0.20 \pm 0.02$ & $0.01 \pm 0.02$ & $0.20 \pm 0.03$ & $-0.00 \pm 0.00$ & $-0.00 \pm 0.00$ & $1.00 \pm 0.00$ & $-0.00 \pm 0.00$ & $-0.00 \pm 0.00$ & $-0.00 \pm 0.00$ \\
\rowcolor{LTcolor}
LT: change positions & 
$1.00 \pm 0.00$ & $0.78 \pm 0.02$ & $0.90 \pm 0.01$ & $0.75 \pm 0.01$ & $0.59 \pm 0.02$ & $0.82 \pm 0.01$ & $0.18 \pm 0.02$ & $0.99 \pm 0.00$ & $0.64 \pm 0.02$ & $0.55 \pm 0.02$ & $0.56 \pm 0.02$ \\
\midrule
\rowcolor{DAcolor}
DA: crop (large) + colour distortion & 
$1.00 \pm 0.00$ & $0.92 \pm 0.00$ & $0.83 \pm 0.00$ & $0.92 \pm 0.00$ & $0.90 \pm 0.01$ & $0.29 \pm 0.00$ & $0.01 \pm 0.01$ & $0.01 \pm 0.01$ & $0.87 \pm 0.00$ & $0.90 \pm 0.00$ & $0.85 \pm 0.00$ \\
\rowcolor{DAcolor}
DA: crop (small) + colour distortion &
$1.00 \pm 0.00$ & $0.92 \pm 0.00$ & $0.87 \pm 0.01$ & $0.90 \pm 0.00$ & $0.97 \pm 0.00$ & $0.46 \pm 0.04$ & $0.02 \pm 0.02$ & $0.58 \pm 0.12$ & $0.79 \pm 0.01$ & $0.83 \pm 0.00$ & $0.79 \pm 0.00$ \\
\rowcolor{LTcolor}
LT: change positions + hues & 
$1.00 \pm 0.00$ & $0.83 \pm 0.04$ & $0.90 \pm 0.01$ & $0.81 \pm 0.04$ & $0.75 \pm 0.08$ & $0.42 \pm 0.09$ & $0.04 \pm 0.02$ & $0.52 \pm 0.20$ & $0.72 \pm 0.05$ & $0.69 \pm 0.07$ & $0.67 \pm 0.06$ \\
\midrule
\rowcolor{DAcolor}
DA: rotation &
$0.46 \pm 0.04$ & $0.35 \pm 0.04$ & $0.19 \pm 0.02$ & $0.28 \pm 0.04$ & $0.34 \pm 0.08$ & $0.85 \pm 0.01$ & $0.35 \pm 0.12$ & $1.00 \pm 0.00$ & $0.03 \pm 0.01$ & $0.08 \pm 0.02$ & $0.10 \pm 0.01$ \\
\rowcolor{LTcolor}
LT: change rotations & 
$1.00 \pm 0.00$ & $0.97 \pm 0.00$ & $0.96 \pm 0.01$ & $0.84 \pm 0.01$ & $0.98 \pm 0.00$ & $0.82 \pm 0.01$ & $0.17 \pm 0.02$ & $0.99 \pm 0.00$ & $0.64 \pm 0.02$ & $0.59 \pm 0.01$ & $0.60 \pm 0.03$ \\
\midrule
\rowcolor{DAcolor}
DA: rotation + colour distortion & 
$0.87 \pm 0.02$ & $0.76 \pm 0.01$ & $0.81 \pm 0.01$ & $0.71 \pm 0.01$ & $0.39 \pm 0.08$ & $0.19 \pm 0.02$ & $-0.00 \pm 0.00$ & $0.02 \pm 0.02$ & $0.55 \pm 0.03$ & $0.55 \pm 0.03$ & $0.48 \pm 0.02$ \\
\rowcolor{LTcolor}
LT: change rotations + hues &

$1.00 \pm 0.00$ & $0.98 \pm 0.00$ & $0.97 \pm 0.00$ & $0.87 \pm 0.00$ & $0.99 \pm 0.00$ & $0.39 \pm 0.05$ & $0.04 \pm 0.02$ & $0.37 \pm 0.21$ & $0.69 \pm 0.01$ & $0.68 \pm 0.01$ & $0.68 \pm 0.00$ \\
\midrule
\rowcolor{DAcolor}
DA: rot.\ + crop (lg) &
$0.43 \pm 0.03$ & $0.38 \pm 0.04$ & $0.34 \pm 0.02$ & $0.28 \pm 0.03$ & $0.30 \pm 0.05$ & $0.86 \pm 0.04$ & $0.17 \pm 0.02$ & $1.00 \pm 0.00$ & $0.02 \pm 0.00$ & $0.05 \pm 0.01$ & $0.10 \pm 0.01$ 
\\
\rowcolor{DAcolor}
DA: rot.\ + crop (sm) &
$0.20 \pm 0.01$ & $0.07 \pm 0.03$ & $0.09 \pm 0.10$ & $0.01 \pm 0.01$ & $0.20 \pm 0.01$ & $-0.00 \pm 0.00$ & $-0.00 \pm 0.00$ & $1.00 \pm 0.00$ & $-0.00 \pm 0.00$ & $-0.00 \pm 0.00$ & $-0.00 \pm 0.00$
\\
\rowcolor{LTcolor}
LT: change rot.\ + pos.\ &
$1.00 \pm 0.00$ & $0.81 \pm 0.01$ & $0.90 \pm 0.01$ & $0.76 \pm 0.01$ & $0.67 \pm 0.04$ & $0.84 \pm 0.01$ & $0.28 \pm 0.04$ & $0.99 \pm 0.00$ & $0.62 \pm 0.02$ & $0.57 \pm 0.01$ & $0.55 \pm 0.01$
\\
\midrule
\rowcolor{DAcolor}
DA: rot.\ + crop (lg) + col.\ dist.\ &
$1.00 \pm 0.00$ & $0.92 \pm 0.01$ & $0.89 \pm 0.00$ & $0.92 \pm 0.00$ & $0.95 \pm 0.01$ & $0.30 \pm 0.00$ & $0.02 \pm 0.02$ & $0.18 \pm 0.16$ & $0.81 \pm 0.00$ & $0.84 \pm 0.00$ & $0.79 \pm 0.00$
\\
\rowcolor{DAcolor}
DA: rot.\ + crop (sm) + col.\ dist.\ &
$1.00 \pm 0.00$ & $0.87 \pm 0.00$ & $0.85 \pm 0.00$ & $0.87 \pm 0.00$ & $0.93 \pm 0.00$ & $0.71 \pm 0.02$ & $0.33 \pm 0.05$ & $0.96 \pm 0.00$ & $0.72 \pm 0.00$ & $0.75 \pm 0.00$ & $0.71 \pm 0.00$
\\
\rowcolor{LTcolor}
LT: change rot.\ + pos.\ + hues & 
$1.00 \pm 0.00$ & $0.84 \pm 0.02$ & $0.91 \pm 0.01$ & $0.82 \pm 0.02$ & $0.78 \pm 0.06$ & $0.40 \pm 0.01$ & $0.06 \pm 0.01$ & $0.50 \pm 0.05$ & $0.72 \pm 0.04$ & $0.70 \pm 0.05$ & $0.67 \pm 0.04$ 
\\
\bottomrule
\end{tabular}
}
\end{table}

\definecolor{LTcolor}{rgb}{0.95,1,1}
\definecolor{DAcolor}{rgb}{1,1,0.95}
\begin{table}[h!]
\centering
\caption{Evaluation of an intermediate layer. Logistic regression used for class, linear regression used for all continuous variables. 
}
\label{tbl:int_linear_full}
\vspace{0.25em}
\resizebox{\textwidth}{!}{
\small
\begin{tabular}{lc|cccc|ccc|ccc}
\toprule
\multirow{2}{*}{\textbf{Views generated by}} & \multirow{2}{*}{\textbf{Class}} & \multicolumn{4}{c}{\textbf{Positions}} & \multicolumn{3}{c}{\textbf{Hues}} & \multicolumn{3}{c}{\textbf{Rotations}} \\
\cmidrule(r){3-6}\cmidrule(r){7-9}\cmidrule(r){10-12}
& & $\text{object}(x)$ & $\text{object}(y)$ & $\text{object}(z)$ & $\text{spotlight}$ & $\text{object}$ & $\text{spotlight}$ & $\text{background}$ & $\text{object}(\phi)$ & $\text{object}(\theta)$ & $\text{object}(\psi)$
\\
\midrule
\rowcolor{DAcolor}
DA: colour distortion  & 
$0.71 \pm 0.02$ & $0.53 \pm 0.01$ & $0.70 \pm 0.01$ & $0.46 \pm 0.01$ & $0.13 \pm 0.01$ & $0.11 \pm 0.01$ & $-0.01 \pm 0.00$ & $0.00 \pm 0.00$ & $0.28 \pm 0.01$ & $0.19 \pm 0.01$ & $0.25 \pm 0.01$ \\
\rowcolor{LTcolor}
LT: change hues & 
$1.00 \pm 0.00$ & $0.93 \pm 0.00$ & $0.93 \pm 0.00$ & $0.60 \pm 0.04$ & $0.95 \pm 0.00$ & $0.31 \pm 0.00$ & $0.01 \pm 0.01$ & $0.06 \pm 0.04$ & $0.44 \pm 0.02$ & $0.41 \pm 0.02$ & $0.42 \pm 0.00$ \\
\midrule
\rowcolor{DAcolor}
DA: crop (large) & 
$0.43 \pm 0.03$ & $0.18 \pm 0.06$ & $0.06 \pm 0.01$ & $0.17 \pm 0.02$ & $0.19 \pm 0.14$ & $0.82 \pm 0.02$ & $0.08 \pm 0.04$ & $0.98 \pm 0.00$ & $0.01 \pm 0.00$ & $0.05 \pm 0.01$ & $0.05 \pm 0.01$ \\
\rowcolor{DAcolor}
DA: crop (small) & 
$0.20 \pm 0.01$ & $0.01 \pm 0.01$ & $0.03 \pm 0.02$ & $0.00 \pm 0.01$ & $0.02 \pm 0.01$ & $-0.00 \pm 0.00$ & $-0.01 \pm 0.00$ & $0.99 \pm 0.00$ & $-0.01 \pm 0.00$ & $-0.01 \pm 0.00$ & $-0.00 \pm 0.01$ \\
\rowcolor{LTcolor}
LT: change positions & 
$1.00 \pm 0.00$ & $0.49 \pm 0.04$ & $0.72 \pm 0.03$ & $0.43 \pm 0.03$ & $0.19 \pm 0.03$ & $0.71 \pm 0.02$ & $0.09 \pm 0.02$ & $0.98 \pm 0.00$ & $0.39 \pm 0.01$ & $0.36 \pm 0.01$ & $0.35 \pm 0.00$ \\
\midrule
\rowcolor{DAcolor}
DA: crop (large) + colour distortion & 
$1.00 \pm 0.00$ & $0.67 \pm 0.03$ & $0.56 \pm 0.01$ & $0.66 \pm 0.02$ & $0.67 \pm 0.03$ & $0.28 \pm 0.00$ & $-0.01 \pm 0.00$ & $0.01 \pm 0.01$ & $0.58 \pm 0.02$ & $0.61 \pm 0.02$ & $0.56 \pm 0.01$ \\
\rowcolor{DAcolor}
DA: crop (small) + colour distortion &
$1.00 \pm 0.00$ & $0.76 \pm 0.01$ & $0.70 \pm 0.02$ & $0.68 \pm 0.01$ & $0.90 \pm 0.00$ & $0.38 \pm 0.03$ & $0.00 \pm 0.01$ & $0.39 \pm 0.13$ & $0.50 \pm 0.02$ & $0.50 \pm 0.01$ & $0.49 \pm 0.01$ \\
\rowcolor{LTcolor}
LT: change positions + hues & 
$1.00 \pm 0.00$ & $0.61 \pm 0.09$ & $0.74 \pm 0.02$ & $0.51 \pm 0.08$ & $0.40 \pm 0.15$ & $0.34 \pm 0.04$ & $0.02 \pm 0.01$ & $0.25 \pm 0.22$ & $0.47 \pm 0.04$ & $0.40 \pm 0.02$ & $0.41 \pm 0.03$ \\
\midrule
\rowcolor{DAcolor}
DA: rotation &
$0.46 \pm 0.04$ & $0.21 \pm 0.02$ & $0.10 \pm 0.01$ & $0.10 \pm 0.02$ & $0.21 \pm 0.09$ & $0.77 \pm 0.01$ & $0.25 \pm 0.11$ & $0.97 \pm 0.01$ & $0.02 \pm 0.01$ & $0.06 \pm 0.02$ & $0.08 \pm 0.01$ \\
\rowcolor{LTcolor}
LT: change rotations & 
$1.00 \pm 0.00$ & $0.92 \pm 0.00$ & $0.88 \pm 0.01$ & $0.51 \pm 0.02$ & $0.95 \pm 0.00$ & $0.70 \pm 0.06$ & $0.07 \pm 0.02$ & $0.98 \pm 0.00$ & $0.36 \pm 0.01$ & $0.34 \pm 0.00$ & $0.34 \pm 0.01$ \\
\midrule
\rowcolor{DAcolor}
DA: rotation + colour distortion & 
$0.87 \pm 0.02$ & $0.60 \pm 0.01$ & $0.62 \pm 0.03$ & $0.52 \pm 0.02$ & $0.23 \pm 0.02$ & $0.18 \pm 0.02$ & $-0.01 \pm 0.00$ & $0.02 \pm 0.01$ & $0.33 \pm 0.04$ & $0.29 \pm 0.01$ & $0.28 \pm 0.01$ \\
\rowcolor{LTcolor}
LT: change rotations + hues &

$1.00 \pm 0.00$ & $0.94 \pm 0.00$ & $0.92 \pm 0.01$ & $0.58 \pm 0.01$ & $0.96 \pm 0.00$ & $0.33 \pm 0.02$ & $0.02 \pm 0.01$ & $0.15 \pm 0.10$ & $0.40 \pm 0.02$ & $0.38 \pm 0.01$ & $0.41 \pm 0.03$ \\
\midrule
\rowcolor{DAcolor}
DA: rot.\ + crop (lg) &
$0.43 \pm 0.03$ & $0.24 \pm 0.04$ & $0.08 \pm 0.02$ & $0.16 \pm 0.03$ & $0.07 \pm 0.01$ & $0.80 \pm 0.04$ & $0.10 \pm 0.01$ & $0.98 \pm 0.00$ & $0.01 \pm 0.00$ & $0.05 \pm 0.01$ & $0.06 \pm 0.01$
\\
\rowcolor{DAcolor}
DA: rot.\ + crop (sm) &
$0.20 \pm 0.01$ & $0.01 \pm 0.01$ & $0.03 \pm 0.01$ & $-0.00 \pm 0.01$ & $0.04 \pm 0.01$ & $-0.01 \pm 0.00$ & $-0.01 \pm 0.00$ & $0.99 \pm 0.00$ & $-0.01 \pm 0.00$ & $-0.01 \pm 0.00$ & $-0.00 \pm 0.01$
\\
\rowcolor{LTcolor}
LT: change rot.\ + pos.\ &
$1.00 \pm 0.00$ & $0.55 \pm 0.05$ & $0.72 \pm 0.02$ & $0.44 \pm 0.04$ & $0.31 \pm 0.08$ & $0.76 \pm 0.01$ & $0.14 \pm 0.01$ & $0.99 \pm 0.00$ & $0.38 \pm 0.01$ & $0.35 \pm 0.01$ & $0.36 \pm 0.02$
\\
\midrule
\rowcolor{DAcolor}
DA: rot.\ + crop (lg) + col.\ dist.\ &
$1.00 \pm 0.00$ & $0.71 \pm 0.01$ & $0.69 \pm 0.01$ & $0.69 \pm 0.00$ & $0.84 \pm 0.03$ & $0.28 \pm 0.00$ & $-0.00 \pm 0.00$ & $0.07 \pm 0.07$ & $0.51 \pm 0.01$ & $0.50 \pm 0.02$ & $0.51 \pm 0.01$
\\
\rowcolor{DAcolor}
DA: rot.\ + crop (sm) + col.\ dist.\ &
$1.00 \pm 0.00$ & $0.66 \pm 0.00$ & $0.69 \pm 0.01$ & $0.65 \pm 0.02$ & $0.83 \pm 0.00$ & $0.57 \pm 0.03$ & $0.18 \pm 0.02$ & $0.89 \pm 0.01$ & $0.46 \pm 0.01$ & $0.45 \pm 0.02$ & $0.44 \pm 0.01$
\\
\rowcolor{LTcolor}
LT: change rot.\ + pos.\ + hues & 
$1.00 \pm 0.00$ & $0.65 \pm 0.04$ & $0.75 \pm 0.05$ & $0.57 \pm 0.03$ & $0.49 \pm 0.12$ & $0.35 \pm 0.01$ & $0.02 \pm 0.01$ & $0.23 \pm 0.04$ & $0.48 \pm 0.04$ & $0.43 \pm 0.01$ & $0.43 \pm 0.01$
\\
\bottomrule
\end{tabular}
}
\end{table}

In~\citep{chen2020simple}, the value in evaluating an intermediate layer as opposed to a final layer is discussed, where the authors demonstrated that predicting the data augmentations applied during training is significantly more accurate from an intermediate layer as opposed to the final layer, implying that the intermediate layer contains much more information about the transformation applied. Our results suggest a distinct hypothesis, the value in using an intermediate layer as a representation for downstream tasks is not due to preservation of style information, as can be seen, $R^2$ scores on style variables are not significantly higher in~\Cref{tbl:int_nonlinear_full} relative to~\Cref{tbl:final_nonlinear_full}. The value is in preservation of all content variables, as we can observe certain content variables are discarded in the final layer, but are preserved in an intermediate layer. With that being said, our theoretical result applies to the final layer, which is why said results were highlighted in the main paper. The discarding of certain content variables is an empirical phenomenon, likely a consequence of a limited number of negative samples in practice, leading to certain content variables being redundant, or unnecessary, for solving the contrastive objective. 

The fact that we can recover certain content variables which appeared discarded in the output from the intermediate layer may suggest that we should be able to decode class. While scores are certainly increased, we do not see such drastic differences in $R^2$ scores, as was seen above. The drastic difference highlighted above was with regards to latent transformation, for which we always observed class encoded as a content variable. So, unfortunately, using an intermediate layer does not rectify the discrepancy between data augmentations and latent transformations. While latent transformations allow us to better interpret the effect of certain empirical techniques~\citep{chen2020simple}, as discussed in the main paper, we cannot make a one-to-one correspondence between data augmentations used in practice and latent transformations.

\paragraph{BarlowTwins:} We repeat our analysis from~\cref{sec:experiment_2_causal3dident} using \texttt{BarlowTwins}~\cite{zbontar2021barlow} (instead of \texttt{SimCLR}) which, as discussed at the end of~\cref{sec:theory_discriminative}, is also loosely related to~\cref{thm:CL_MaxEnt}. The \texttt{BarlowTwins} objective consists of an invariance term, which equates the diagonal elements of the cross-correlation
matrix to $1$, thereby making the embedding invariant to the distortions applied and a redundancy reduction term, which equates the off-diagonal elements of the cross-correlation matrix to $0$, thereby decorrelating the different vector components
of the embedding, reducing  the redundancy between output units. 

In~\Cref{tbl:bt_0.005} we train \texttt{BarlowTwins} with $\lambda=0.0051$, the default value for the hyperparameter which weights the redundancy reduction term relative to the invariance term. To confirm the insights are robust to the value of $\lambda$,in~\Cref{tbl:bt_0.05}, we report results with $\lambda$ increased by an order of magnitude, $\lambda=0.051$. We find that the results mirror~\cref{tbl:final_nonlinear_abbrv}, e.g. colour distortion yields a discarding of hue, crops isolate background hue where the larger the crop, the higher the identifiability of object hue, and crops \& colour distortion yield high accuracy in inferring the object class variable. 

\definecolor{LTcolor}{rgb}{0.95,1,1}
\definecolor{DAcolor}{rgb}{1,1,0.95}
\begin{table}[t]
\centering
\caption{\small \textit{BarlowTwins $\lambda=0.0051$} results: $R^2$ mean $\pm$ std.\ dev.\  over $3$ random seeds. DA: data augmentation, LT: latent transformation, bold: $R^2\geq 0.5$, red: $R^2<0.25$.
Results for individual axes of object position \& rotation are aggregated. %
}
\label{tbl:bt_0.005}
\resizebox{\textwidth}{!}{
\small
\begin{tabular}{lc|cc|ccc|c}
\toprule
\multirow{2}{*}{\textbf{Views generated by}} & \multirow{2}{*}{\textbf{Class}} & \multicolumn{2}{c}{\textbf{Positions}} & \multicolumn{3}{c}{\textbf{Hues}} & \multirow{2}{*}{\textbf{Rotations}} \\
\cmidrule(r){3-4}\cmidrule(r){5-7}
& & $\text{object}$ & $\text{spotlight}$ & $\text{object}$ & $\text{spotlight}$ & $\text{background}$ & 
\\
\midrule
\rowcolor{DAcolor}
DA: colour distortion  & 
$0.48 \pm 0.02$ & $\textbf{0.51} \pm 0.14$ & $\textcolor{red}{0.07} \pm 0.01$ & $\textcolor{red}{0.08} \pm 0.00$ & $\textcolor{red}{0.00} \pm 0.00$ & $\textcolor{red}{0.00} \pm 0.00$ & $\textcolor{red}{0.21} \pm 0.04$ \\
\rowcolor{LTcolor}
LT: change hues & 
$\textbf{1.00} \pm 0.00$ & $\textbf{0.56} \pm 0.20$ & $\textbf{0.76} \pm 0.07$ & $0.30 \pm 0.01$ & $\textcolor{red}{0.00} \pm 0.00$ & $\textcolor{red}{0.01} \pm 0.00$ & $0.35 \pm 0.01$ \\
\midrule
\rowcolor{DAcolor}
DA: crop (large) & 
$\textcolor{red}{0.17} \pm 0.02$ & $\textcolor{red}{0.10} \pm 0.03$ & $\textcolor{red}{0.06} \pm 0.02$ & $0.29 \pm 0.13$ & $\textcolor{red}{0.11} \pm 0.05$ & $\textbf{0.99} \pm 0.00$ & $\textcolor{red}{0.02} \pm 0.01$ \\
\rowcolor{DAcolor}
DA: crop (small) & 
$\textcolor{red}{0.15} \pm 0.00$ & $\textcolor{red}{0.04} \pm 0.02$ & $\textcolor{red}{0.05} \pm 0.02$ & $\textcolor{red}{0.02} \pm 0.01$ & $\textcolor{red}{0.00} \pm 0.01$ & $\textbf{1.00} \pm 0.00$ & $\textcolor{red}{0.00} \pm 0.01$ \\
\rowcolor{LTcolor}
LT: change positions & 
$\textbf{0.88} \pm 0.00$ & $\textcolor{red}{0.19} \pm 0.20$ & $\textcolor{red}{0.05} \pm 0.00$ & $\textbf{0.50} \pm 0.02$ & $\textcolor{red}{0.04} \pm 0.01$ & $\textbf{0.98} \pm 0.00$ & $0.27 \pm 0.03$ \\
\midrule
\rowcolor{DAcolor}
DA: crop (large) + colour distortion & %
$\textbf{0.87} \pm 0.02$ & $0.49 \pm 0.06$ & $0.32 \pm 0.03$ & $0.25 \pm 0.01$ & $\textcolor{red}{0.00} \pm 0.00$ & $\textcolor{red}{0.00} \pm 0.00$ & $\textbf{0.50} \pm 0.02$ \\
\rowcolor{DAcolor}
DA: crop (small) + colour distortion & %
$\textbf{0.81} \pm 0.01$ & $0.39 \pm 0.07$ & $0.42 \pm 0.06$ & $0.47 \pm 0.04$ & $\textcolor{red}{0.03} \pm 0.01$ & $\textbf{0.85} \pm 0.02$ & $0.30 \pm 0.02$ \\
\rowcolor{LTcolor}
LT: change positions + hues & 
$\textbf{1.00} \pm 0.00$ & $0.28 \pm 0.20$ & $\textcolor{red}{0.12} \pm 0.05$ & $0.31 \pm 0.00$ & $\textcolor{red}{0.00} \pm 0.00$ & $\textcolor{red}{0.01} \pm 0.01$ & $0.37 \pm 0.06$ \\
\bottomrule
\end{tabular}
}
\end{table}

\definecolor{LTcolor}{rgb}{0.95,1,1}
\definecolor{DAcolor}{rgb}{1,1,0.95}
\begin{table}[t]
\centering
\caption{\small \textit{BarlowTwins $\lambda=0.051$} results: $R^2$ mean $\pm$ std.\ dev.\  over $3$ random seeds. DA: data augmentation, LT: latent transformation, bold: $R^2\geq 0.5$, red: $R^2<0.25$.
Results for individual axes of object position \& rotation are aggregated. %
}
\label{tbl:bt_0.05}
\resizebox{\textwidth}{!}{
\small
\begin{tabular}{lc|cc|ccc|c}
\toprule
\multirow{2}{*}{\textbf{Views generated by}} & \multirow{2}{*}{\textbf{Class}} & \multicolumn{2}{c}{\textbf{Positions}} & \multicolumn{3}{c}{\textbf{Hues}} & \multirow{2}{*}{\textbf{Rotations}} \\
\cmidrule(r){3-4}\cmidrule(r){5-7}
& & $\text{object}$ & $\text{spotlight}$ & $\text{object}$ & $\text{spotlight}$ & $\text{background}$ & 
\\
\midrule
\rowcolor{DAcolor}
DA: colour distortion  & 
$\textbf{0.52} \pm 0.07$ & $0.43 \pm 0.18$ & $\textcolor{red}{0.07} \pm 0.02$ & $\textcolor{red}{0.10} \pm 0.03$ & $\textcolor{red}{0.00} \pm 0.00$ & $\textcolor{red}{0.00} \pm 0.00$ & $\textcolor{red}{0.21} \pm 0.05$ \\
\rowcolor{LTcolor}
LT: change hues & 
$\textbf{1.00} \pm 0.00$ & $\textbf{0.55} \pm 0.24$ & $\textbf{0.74} \pm 0.02$ & $0.30 \pm 0.00$ & $\textcolor{red}{0.00} \pm 0.00$ & $\textcolor{red}{0.01} \pm 0.01$ & $0.33 \pm 0.02$ \\
\midrule
\rowcolor{DAcolor}
DA: crop (large) & 
$\textcolor{red}{0.19} \pm 0.05$ & $\textcolor{red}{0.08} \pm 0.02$ & $\textcolor{red}{0.05} \pm 0.01$ & $0.39 \pm 0.36$ & $\textcolor{red}{0.08} \pm 0.05$ & $\textbf{0.96} \pm 0.05$ & $\textcolor{red}{0.01} \pm 0.02$ \\

\rowcolor{DAcolor}
DA: crop (small) & 
$\textcolor{red}{0.15} \pm 0.00$ & $\textcolor{red}{0.05} \pm 0.02$ & $\textcolor{red}{0.07} \pm 0.02$ & $\textcolor{red}{0.00} \pm 0.01$ & $\textcolor{red}{0.01} \pm 0.01$ & $\textbf{1.00} \pm 0.00$ & $\textcolor{red}{0.00} \pm 0.00$ \\
\rowcolor{LTcolor}
LT: change positions & 
$\textbf{0.89} \pm 0.01$ & $\textcolor{red}{0.19} \pm 0.20$ & $\textcolor{red}{0.05} \pm 0.01$ & $0.48 \pm 0.04$ & $\textcolor{red}{0.05} \pm 0.02$ & $\textbf{0.98} \pm 0.00$ & $0.25 \pm 0.03$ \\
\midrule
\rowcolor{DAcolor}
DA: crop (large) + colour distortion & %
$\textbf{0.86} \pm 0.03$ & $0.40 \pm 0.07$ & $\textcolor{red}{0.23} \pm 0.02$ & $\textcolor{red}{0.24} \pm 0.01$ & $\textcolor{red}{0.00} \pm 0.00$ & $\textcolor{red}{0.00} \pm 0.00$ & $0.47 \pm 0.04$ \\
\rowcolor{DAcolor}
DA: crop (small) + colour distortion & %
$\textbf{0.99} \pm 0.01$ & $\textbf{0.63} \pm 0.03$ & $\textbf{0.88} \pm 0.01$ & $0.32 \pm 0.02$ & $\textcolor{red}{0.00} \pm 0.00$ & $\textcolor{red}{0.16} \pm 0.13$ & $\textbf{0.52} \pm 0.03$ \\
\rowcolor{LTcolor}
LT: change positions + hues & 
$\textbf{1.00} \pm 0.00$ & $\textcolor{red}{0.21} \pm 0.22$ & $\textcolor{red}{0.07} \pm 0.01$ & $0.30 \pm 0.00$ & $\textcolor{red}{0.00} \pm 0.00$ & $\textcolor{red}{0.02} \pm 0.01$ & $0.46 \pm 0.06$ \\
\bottomrule
\end{tabular}
}
\end{table}

\subsection{\textit{MPI3D-real}}
\label{app:mpi}
We ran the same experimental setup as in~\cref{sec:experiment_2_causal3dident} also on the \textit{MPI3D-real} dataset~\cite{gondal2019transfer} containing $>1$ million \textit{real} images with ground-truth annotations of 3D objects being moved by a robotic arm. 

As \textit{MPI3D-real} contains much lower resolution images ($64\times64$) compared to ImageNet \& Causal3DIdent ($224\times224$), we used the standard convolutional encoder from the disentanglement literature~\citep{locatello2019challenging}, and ran a sanity check experiment to verify that by training the same backbone as in our unsupervised experiment with supervised learning, we can recover the ground-truth factors from the augmented views. In~\Cref{tbl:sup_mpi3d}, we observe that only five out of seven factors can be consistently inferred, object shape and size are somewhat ambiguous even when observing the original image. Note that while in the self-supervised case, we evaluate by training a nonlinear regression for each ground truth factor separately, in the supervised case, we train a network for all ground truth factors simultaneously from scratch for as many gradient steps as used for learning the self-supervised model.

In~\Cref{tbl:mpi3d}, we report the evaluation results in the self-supervised scenario. Subject to the aforementioned caveats, the results show a similar trend as those on \textit{Causal3DIdent}, i.e. with colour distortion, color factors of variation are decoded significantly worse than positional/rotational information. 

\definecolor{LTcolor}{rgb}{0.95,1,1}
\definecolor{DAcolor}{rgb}{1,1,0.95}
\begin{table}[t]
\centering
\caption{\small \textit{MPI3D-real} results: $R^2$ mean $\pm$ std.\ dev.\  over $3$ random seeds for dim($\hat{\cb}$)$=5$. DA: data augmentation, bold: $R^2\geq 0.5$, red: $R^2<0.25$.
}
\label{tbl:mpi3d}
\resizebox{\textwidth}{!}{
\small
\begin{tabular}{l|ccccccc}
\toprule
\textbf{Views generated by} 
& $\text{object color}$ & $\text{object shape}$ & $\text{object size}$ & $\text{camera height}$ & $\text{background color}$ & $\text{horizontal axis}$ & $\text{vertical axis}$ 
\\
\midrule
\rowcolor{DAcolor}
DA: colour distortion   & $0.39 \pm 0.01$ & $\textcolor{red}{0.00} \pm 0.00$ & $\textcolor{red}{0.16} \pm 0.01$ & $\textbf{1.00} \pm 0.00$ & $\textcolor{red}{0.09} \pm 0.15$ & $\textbf{0.60} \pm 0.06$ & $0.42 \pm 0.08$ \\
\midrule
\rowcolor{DAcolor}
DA: crop (large)  & $\textbf{0.65} \pm 0.17$ & $\textcolor{red}{0.01} \pm 0.02$ & $0.31 \pm 0.03$ & $\textbf{1.00} \pm 0.00$ & $\textbf{1.00} \pm 0.00$ & $0.37 \pm 0.06$ & $\textcolor{red}{0.08} \pm 0.03$ \\
\midrule
\rowcolor{DAcolor}
DA: crop (small)  & $\textcolor{red}{0.09} \pm 0.02$ & $\textcolor{red}{0.03} \pm 0.00$ & $\textcolor{red}{0.19} \pm 0.01$ & $\textbf{1.00} \pm 0.00$ & $\textbf{1.00} \pm 0.00$ & $\textcolor{red}{0.21} \pm 0.02$ & $\textcolor{red}{0.07} \pm 0.00$ \\
\midrule
\rowcolor{DAcolor}
DA: crop (large) + colour distortion & $0.34 \pm 0.00$ & $\textcolor{red}{0.00} \pm 0.00$ & $\textcolor{red}{0.22} \pm 0.03$ & $\textbf{1.00} \pm 0.00$ & $0.39 \pm 0.02$ & $\textbf{0.54} \pm 0.01$ & $0.29 \pm 0.01$\\
\midrule
\rowcolor{DAcolor}
DA: crop (small) + colour distortion  & $0.25 \pm 0.02$ & $\textcolor{red}{0.00} \pm 0.00$ & $\textcolor{red}{0.10} \pm 0.01$ & $\textbf{1.00} \pm 0.00$ & $\textbf{0.75} \pm 0.16$ & $\textbf{0.54} \pm 0.01$ & $0.29 \pm 0.03$ \\
\bottomrule
\end{tabular}
}
\end{table}

\definecolor{LTcolor}{rgb}{0.95,1,1}
\definecolor{DAcolor}{rgb}{1,1,0.95}
\begin{table}[t]
\centering
\caption{\small \textbf{Supervised} \textit{MPI3D-real} results: $R^2$ mean $\pm$ std.\ dev.\  over $3$ random seeds. DA: data augmentation. bold: $R^2\geq 0.5$, red: $R^2<0.25$.
}
\label{tbl:sup_mpi3d}
\resizebox{\textwidth}{!}{
\small
\begin{tabular}{l|ccccccc}
\toprule
\textbf{Views generated by} 
& $\text{object color}$ & $\text{object shape}$ & $\text{object size}$ & $\text{camera height}$ & $\text{background color}$ & $\text{horizontal axis}$ & $\text{vertical axis}$ 
\\
\midrule
Original   & $\textbf{0.90} \pm 0.01$ & $0.25 \pm 0.02$ & $\textbf{0.61} \pm 0.02$ & $\textbf{0.99} \pm 0.00$ & $\textbf{0.97} \pm 0.01$ & $\textbf{1.00} \pm 0.00$ & $\textbf{1.00} \pm 0.00$ \\
\midrule
\rowcolor{DAcolor}
DA: colour distortion   & $\textbf{0.61} \pm 0.01$ & $\textcolor{red}{0.11} \pm 0.00$ & $0.47 \pm 0.01$ & $\textbf{0.98} \pm 0.00$ & $\textbf{0.93} \pm 0.00$ & $\textbf{0.99} \pm 0.00$ & $\textbf{1.00} \pm 0.00$ \\
\midrule
\rowcolor{DAcolor}
DA: crop (large)  & $\textbf{0.82} \pm 0.01$ & $\textcolor{red}{0.05} \pm 0.01$ & $0.42 \pm 0.02$ & $\textbf{0.97} \pm 0.01$ & $\textbf{0.91} \pm 0.00$ & $\textbf{0.96} \pm 0.00$ & $\textbf{0.97} \pm 0.01$ \\
\midrule
\rowcolor{DAcolor}
DA: crop (small)  & $\textbf{0.71} \pm 0.04$ & $\textcolor{red}{0.01} \pm 0.00$ & $0.32 \pm 0.02$ & $\textbf{0.95} \pm 0.00$ & $\textbf{0.85} \pm 0.01$ & $\textbf{0.79} \pm 0.02$ & $\textbf{0.90} \pm 0.01$ \\
\midrule
\rowcolor{DAcolor}
DA: crop (large) + colour distortion & $0.45 \pm 0.02$ & $\textcolor{red}{0.02} \pm 0.00$ & $\textcolor{red}{0.22} \pm 0.00$ & $\textbf{0.95} \pm 0.01$ & $\textbf{0.67} \pm 0.01$ & $\textbf{0.91} \pm 0.00$ & $\textbf{0.94} \pm 0.00$\\
\midrule
\rowcolor{DAcolor}
DA: crop (small) + colour distortion  & $0.45 \pm 0.02$ & $\textcolor{red}{0.00} \pm 0.00$ & $\textcolor{red}{0.17} \pm 0.02$ & $\textbf{0.91} \pm 0.02$ & $\textbf{0.55} \pm 0.03$ & $\textbf{0.69} \pm 0.01$ & $\textbf{0.79} \pm 0.08$ \\
\bottomrule
\end{tabular}
}
\end{table}

\section{Experimental details}
\label{app:experimental_details}
\paragraph{Ground-truth generative model.}
The generative process used in our numerical simulations~(\cref{sec:experiment_1_numerical_simulation}) is summarised by the following:
\begin{align*}
    \cb &\sim p(\cb)=\Ncal(0,\Sigma_\cb), 
    \quad \text{with} \quad \Sigma_\cb\sim \mathrm{Wishart}_{n_c}(\Ib, n_c),
    \\
    \sb| \cb &\sim p(\sb|\cb) = \Ncal(\ab+B\cb,\Sigma_\sb), 
    \quad \text{with} \quad \Sigma_\sb\sim \mathrm{Wishart}_{n_s}(\Ib, n_s),
    \quad a_i, b_{ij}\overset{\mathrm{i.i.d.}}{\sim}\Ncal(0,1),
    \\
    \sbt_A| \sb_A, A &\sim p(\sbt_A|\sb_A) = N(\sb_A,\Sigma(A))
    \quad \text{with} \quad \Sigma\sim \mathrm{Wishart}_{n_s}(\Ib, n_s),
    \\
    (\xbt, \xb)&=(\fb_{\MLP}(\zbt), \fb_{\MLP}(\zb)),
\end{align*}
where the set of changing style vectors $A$ is obtained by flipping a (biased) coin with $\text{p(chg.)}=0.75$ for each style dimension independently, and where $\Sigma(A)$ denotes the submatrix of $\Sigma$ defined by selecting the rows and columns corresponding to subset $A$. 

When we do not allow for \emph{statistical dependence} (Stat.) within blocks of content and style variables, we set the covariance matrices $\Sigma_\cb$, $\Sigma_\sb$, and  $\Sigma$ to the identity. 
When we do not allow for \emph{causal dependence} (Cau.) of style on content, we set $a_i,b_{ij}=0, \forall i,j$.

For $\fb_{\MLP}$, we use a $3$-layer MLP with LeakyReLU ($\alpha=0.2$) activation functions, specified using the same process as used in previous work~\citep{zimmermann2021contrastive,hyvarinen2016unsupervised,hyvarinen2017nonlinear}. For the square weight matrices, we draw $(n_c+n_s)\times(n_c+n_s)$ samples from $U(-1,1)$, and perform $l_2$ column normalisation. In addition, to control for invertibility, we re-sample the weight matrices until their condition number is less than or equal to a threshold value. The threshold is pre-computed by sampling $24,975$ weight matrices, and recording the minimum condition number. 

\paragraph{Training encoder.}

Recall that the result of~\Cref{thm:CL_MaxEnt} corresponds to minimizing the following functional~\eqref{eq:CL_MSE_MaxEnt_objective}:

\begin{equation*}
\textstyle
\Lcal_\mathrm{AlignMaxEnt}(\gb)
:= 
\EE_{(\xb,\xbt)\sim p_{\xb, \xbt}}
\big[
\big(
\gb(\xb)-\gb(\xbt)
\big)^2
\big] - H\left(\gb(\xb)\right).
\end{equation*}

Note that InfoNCE~\cite{oord2018representation,chen2020simple}~\eqref{eq:InfoNCE_objective} can be rewritten as:

\begin{equation}
\resizebox{.9 \textwidth}{!}{$
\textstyle
    \Lcal_{\text{InfoNCE}}
    (\gb;\tau,K)
    =
    \EE_{\{\xb_i,\xbt_i\}_{i=1}^K \sim p_{\xb, \xbt}}
    \big[
    -
    \sum_{i=1}^K\text{sim}(\gb(\xb)_i,\gb(\xbt)_i)/\tau
    +
    \log{
    \sum_{j=1}^K
    \exp\{\text{sim}(\gb(\xb)_i,\gb(\xbt)_j)/\tau\}
    }
    \big]
$}.
\end{equation}

Thus, if we consider $\tau=1$, and $\text{sim}(u,v)=-(u-v)^2$, 

\begin{equation}
\label{eq:InfoNCE_objective_rewrite}
\resizebox{.9 \textwidth}{!}{$
\textstyle
    \Lcal_{\text{InfoNCE}}
    (\gb;K)
    =
    \EE_{\{\xb_i,\xbt_i\}_{i=1}^K \sim p_{\xb, \xbt}}
    \big[
    \sum_{i=1}^K\big(\gb(\xb)_i-\gb(\xbt)_i\big)^2
    +
    \log{
    \sum_{j=1}^K
    \exp\{-(\gb(\xb)_i-\gb(\xbt)_j)^2\}
    }
    \big]
$}
\end{equation}

we can approximately match the form of~\eqref{eq:CL_MSE_MaxEnt_objective}. 
In practice, we use $K=6,144$. 

For $\gb$, as in~\citep{zimmermann2021contrastive}, we use a $7$-layer MLP with (default) LeakyReLU ($\alpha=0.01$) activation functions. As the input dimensionality is $(n_c+n_s)$, we consider the following multipliers $[10,50,50,50,50,10]$ for the number of hidden units per layer. In correspondence with~\Cref{thm:CL_MaxEnt}, we set the output dimensionality to $n_c$.

We train our feature encoder for $300,000$ iterations, using Adam \citep{kingma2014adam} with a learning rate of $10^{-4}$.

\paragraph{Causal3DIdent.} We here elaborate on details specific to the experiments in~\Cref{sec:experiment_2_causal3dident}. We train the feature encoder for $200,000$ iterations using Adam with a learning rate of $10^{-4}$. For the encoder we use a ResNet18 \citep{he2015deep} architecture followed by a single hidden layer with dimensionality $100$ and LeakyReLU activation function using the default ($0.01$) negative slope. The scores are evaluated on a test set consisting of $25,000$ samples not included in the training set.

\paragraph{Data augmentations.}
We here specify the parameters for the data augmentations we considered:
\begin{itemize}
    \item colour distortion: see the paragraph labelled ``Color distortion'' in Appendix A of~\citep{chen2020simple} for details. We use $s=1.0$, the default value.
    \item crop: see the paragraph labelled ``Random crop and resize to $224\times224$'' in Appendix A of~\citep{chen2020simple} for details. For small crops, a crop of random size (uniform from $0.08$ to $1.0$ in area) of the original size is made, which corresponds to what was used in the experiments reported in~\citep{chen2020simple}. For large crops, a crop of random size (uniform from $0.8$ to $1.0$ in area) of the original size is made.
    \item rotation: as specified in the captions for Figure $4$ \& Table $3$ in~\citep{chen2020simple}, we sample one of $\{0\degree,90\degree,180\degree,270\degree\}$ uniformly. Note that for the pair, we sample two values without replacement. 
\end{itemize}
A visual overview of the effect of these image-level data augmentations is shown in~\cref{fig:data_augs}.

\begin{figure}
    \centering
    \includegraphics[width=\textwidth]{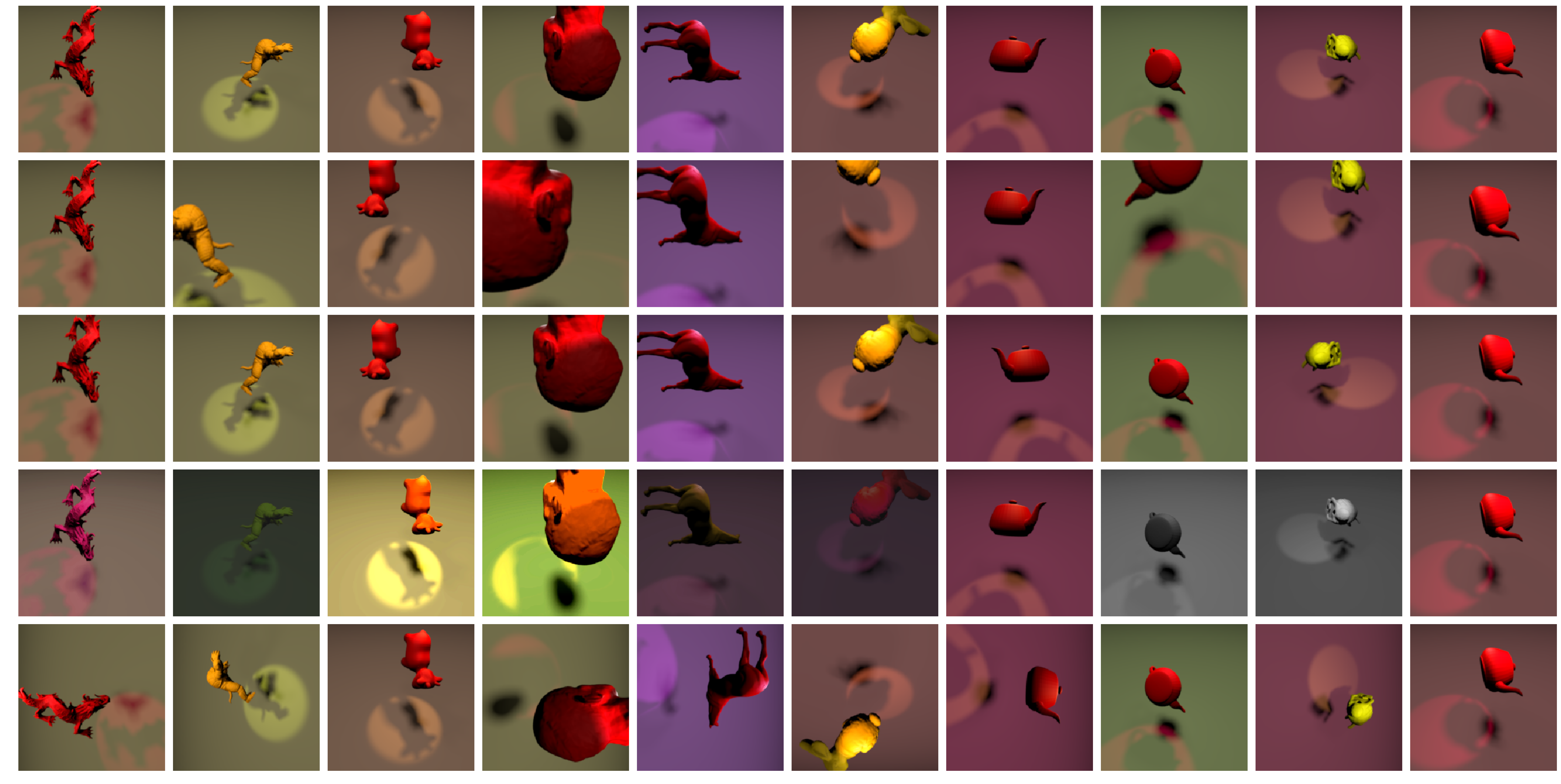}
    \caption{Visual overview of the effect of different  data augmentations (DA), applied to 10 representative samples. Rows correspond to \textit{(top to bottom)}: original images, small random crop (+ random flip), large random crop (+ random flip),  colour distortion  (jitter \& drop), and  random rotation.}
    \label{fig:data_augs}
\end{figure}

\paragraph{Latent transformations.}
To generate views via latent transformations (LT) in our experiments on Causal3DIdent~(\cref{sec:experiment_2_causal3dident}), we proceed as follows.

Let $\zb$ refer to the latent corresponding to the original image. For all latents specified to change, we sample $\zbh'$ from a truncated normal distribution constrained to $[-1,1]$, centered at $\zb$, with $\sigma=1.$. Then, we use nearest-neighbor matching to find the latent $\zbh$ closest to $\zbh'$ (in $L^2$ distance) for which there exists an image rendering.\footnote{see~\citep{zimmermann2021contrastive} for further details} 

\paragraph{Evaluation.}
Recall that \Cref{thm:CL_MaxEnt} states that $\gb$ block-identifies the true content variables in the sense of~\cref{def:block-identifiability}, i.e., there exists an \textit{invertible} function $\hb:\RR^{n_c}\rightarrow \RR^{n_c}$ s.t.\ $\cbh=\hb(\cb)$. 

Since this is different from typical evaluation in disentanglement or ICA in that we do not assume independence and do not aim to find a one-to-one correspondence between inferred and ground truth latents,  existing metrics, such as MCC~\citep{hyvarinen2016unsupervised,hyvarinen2017nonlinear} or MIG~\citep{chen2018isolating}, do not apply. 

We therefore treat identifying $\hb$ as a regression task, which we solve using kernel ridge regression with a Gaussian kernel~\citep{murphybook}.
Since the Gaussian kernel is universal, this constitutes a nonparametric regression technique with universal approximation capabilities, i.e., any nonlinear function can be approximated arbitrarily well given sufficient data.

We sample $4096\times10$ datapoints from the marginal for evaluation. For kernel ridge regression, we standardize the inputs and targets, and fit the regression model on $4096\times5$ (distinct) datapoints. We tune the regularization strength $\alpha$ and kernel variance $\gamma$ by $3$-fold cross-validated grid search over the following parameter grids: $\alpha\in[1,0.1,0.001,0.0001]$, $\gamma\in[0.01,0.22,4.64,100]$.

\paragraph{Compute.} The experiments in~\Cref{sec:experiment_1_numerical_simulation} took on the order of 5-10 hours on a single GeForce RTX 2080 Ti GPU. The experiments in~\Cref{sec:experiment_2_causal3dident} on 3DIdent took 28 hours on four GeForce RTX 2080 Ti GPUs. The creation of the Causal3DIdent dataset additionally required approximately 150 hours of compute time on a GeForce RTX 2080 Ti.

\end{document}